\begin{document}

\title{Radial Basis Function Approximation with Distributively  Stored Data on  Spheres}


\author{Han Feng \and Shao-Bo Lin\thanks{Corresponding Author:  sblin1983@gmail.com} \and Ding-Xuan Zhou
}


\institute{ H. Feng  \at  Department of Mathematics, City University of Hong Kong, Hong Kong      \\
             \and S. B. Lin \at Center for Intelligent Decision-Making and Machine Learning, School of Management, Xi'an Jiaotong University, Xi'an, China\\
             \and
              D. X. Zhou \at  School of Mathematics and Statistics, University of Sydney, Sydney NSW, Australia
}

\date{Received: date / Accepted: date}

\maketitle

\begin{abstract}
This paper proposes a distributed weighted regularized least squares algorithm (DWRLS) with radial basis functions  to tackle spherical data that are stored across numerous local servers and cannot be shared with each other. Via developing a novel integral operator approach based on spherical quadrature rules, we succeed in deriving optimal approximation rates for DWRLS and theoretically demonstrate that DWRLS performs similarly as running a weighted regularized least squares algorithm on the whole data stored  on a large enough machine.   This interesting finding implies that distributed learning is capable of sufficiently exploiting potential values of distributively stored spherical data, even though local servers cannot access   the whole data.

\keywords{Distributed learning
 \and Scattered data approximation \and Sphere\and Integral operator}
\end{abstract}

%


\section{Introduction}

\qquad In geophysics, solar system, climate prediction, environment
governance and meteorology, and image rendering, samples formed as input-output pairs  are collected over  {spheres}
\cite{Dolelson2003,Freeden1998,Tsai2006}, such as the surface of the earth and the direction of radiance. Due to the storage bottleneck and data privacy, these spherical data are often distributively stored across numerous computational servers. Typical examples include the CHAMP (Challenging Mini-satellite Payload ) data \cite{Reigber2002} that involve  billions of gravity and magnetic field measurements   and cannot be stored on a single sever, and the nuclear energy data \cite{Dittmar2012} that record the nuclear energy distribution for some countries and cannot be shared with others. The classical  fitting schemes such as spherical harmonics \cite{Muller1966}, spherical basis functions
\cite{Narcowich2007}, spherical wavelets \cite{Freeden1998},
spherical needlets \cite{Narcowich2006}, spherical kernel methods
\cite{Lin2019} and spherical filtered hyperinterpolation
\cite{Sloan1995} are incapable of tackling these distributively stored data since they require to access the whole data on a single serve.

Distributed learning \cite{Zhang2015}, based on a divide-and-conquer approach,
provides a promising  way to tackle distributively  stored spherical
data. This strategy applies a specific learning algorithm to a
data subset on each local server  to produce a  local estimator
(function), and then synthesizes a global estimator by utilizing some weighted
average of the obtained local estimators. Using some integral operator approaches, the feasibility of distributed learning has been verified in Euclidean spaces for distributed kernel ridge regression \cite{Zhang2015,Lin2017}, distributed kernel-based gradient descents \cite{Lin2018CA,Hu2020}, distributed kernel-based spectral algorithms \cite{Mucke2018,Linj2020} and distributed local average regression \cite{Chang2017} in the sense that distributed learning can achieve the optimal approximation rates of its batch counterpart, i.e., running corresponding algorithms on the whole data, provided the number of local servers is not so large and the samples are collected via a random manner. However, these interesting results do not apply to spherical data, mainly due to the fact that spherical data such as CHAMP and nuclear    energy data, gathered by satellites, are often sampled at fixed positions to save resources, making the existing analysis framework based on random sampling and concentration inequalities no more available. Furthermore, the spectrum of a kernel-based integral operator defined on  the  sphere,  one of the simplest example of  homogeneous manifolds, is   totally  different from that defined on Euclidean spaces, implying that the integral operator approach developed in \cite{Lin2017,Lin2018CA,Mucke2018,Linj2020,Hu2020,Wang2020} is infeasible for spherical data. In a word, there lacks a unified theoretical  analysis framework  to provide a springboard to understanding and designing distributed learning schemes for spherical data.

The purpose of this paper is to develop a distributed learning scheme to handle distributively stored spherical data and provide a theoretical analysis framework to verify its feasibility. Our study stems from  three interesting observations. At first, though   deterministic sampling on  spheres excludes the usability of concentration inequalities \cite{Massart2007} for random variables and makes the existing integral operator approach in \cite{Lin2017,Lin2018CA,Mucke2018,Linj2020,Hu2020}  infeasible, the well developed quadrature rules on spheres \cite{Mhaskar2001,Brown2005,Brauchart2007} that quantify the difference between integrals and their discretizations, provide an alternative  way to develop  an exclusive integral operator approach for spherical data.  Then, if a spherical quadrature rule is adopted to ease the theoretical analysis, the classical discrete (regularized) least-squares approaches on spheres \cite{Gia2007,Hesse2017} should be replaced by a weighted (regularized) least-squares scheme, just as \cite{Keiner2007} did for spherical harmonics approximation. Finally, once the integral operator theory is established,   the standard error decomposition technique  for distributed learning \cite{Chang2017JMLR,Guo2017} that connects the approximation errors of the global estimator  and local estimators is sufficient to derive the approximation error estimate of the distributed  learning scheme. Motivated by these observations, we develop a distributed weighted regularized least squares algorithm (DWRLS) associated with some spherical radial basis function to fit distributively stored spherical data.

Our main theoretical contributions are three folds. Firstly,   we succeed in developing an exclusive integral operator approach for deterministic sampling on  spheres, in which numerous bounds that describe   differences between an integral operator and its empirical counterpart are derived. The derived bounds for deterministic sampling are similar to those for random sampling in Euclidean spaces, without using any concentration inequalities in statistics. Secondly,  adopting the developed integral operator approach, we deduce optimal approximation rates of DWRLS, even when the data are heavily contaminated, that is, the noise of outputs is large.
Finally, we rigorously prove that DWRLS performs similarly as its batch counterpart in the sense that they achieve the same optimal order  of  approximation error, provided the number of local servers is not so large, showing the feasibility   of  distributed learning  to fit distributively stored spherical data.

The rest of the paper is organized as follows. In Section
\ref{Sec.algorithm}, we firstly  introduce spherical radial basis functions and then introduce the distributed weighted regularized least squares algorithm (DWRLS) on the sphere $\mathbb S^d$. In Section \ref{Sec:Main-Results}, we provide approximation error estimates for DWRLS. As a byproduct, we also derive optimal approximation error estimates for weighted regularized least squares algorithm (WRLS) with the whole data stored on a single serve. In Section \ref{Sec:operator-approach}, we develop a novel integral operator approach for spherical data based on spectrum analysis on the sphere and a spherical quadrature formula.
 Section \ref{Sec.proof} gives proofs of the main results. In Section \ref{sec.Simulation}, we conduct several numerical simulations to show the power of DWRLS in practice.

\section{Distributed Weighted Regularized  Least Squares on the Sphere}\label{Sec.algorithm}
In this  section, we propose a distributed weighted regularized  least squares (DWRLS) algorithm to tackle distributively stored scattered data on the unit sphere $\mathbb S^d$ of the $d+1$ Euclidean space $\mathbb R^{d+1}$. Assume that there are $m\in\mathbb N$ servers, each of which possesses a data set $D_j:=\{x_{i,j},y_{i,j}\}_{i=1}^{|D_j|}$ of cardinality $|D_j|$, where $\Lambda_j:=\{  x_{i,j}\}_{i=1}^{|D_j|}\subset\mathbb S^d$,
\begin{equation}\label{Model1:fixed}
        y_{i,j}=f^*(x_{i,j})+\varepsilon_{i,j},  \qquad\forall\
        i=1,\dots,|D_j|,j=1,\dots,m,
\end{equation}
$\{\varepsilon_{i,j}\}$ are independent  random  noise  satisfying $E[\varepsilon_{i,j}]=0$ and $|\varepsilon_{i,j}|\leq M$ with a constant $M>0$ and $f^*$ is a function to model the relation between the input $x$ and output $y$.
Since we do not impose strict restrictions on the magnitude of noise, the fitting problem in (\ref{Model1:fixed}) is different from that in \cite{Hesse2017} where the noise is assumed to be deterministic and extremely small. Without loss of generality, we assume further $D_{j}\cap D_{j'}=\varnothing$ for $j\neq j'$, implying that different severs possess different data.
Our aim is to design a  fitting scheme based on $D:=\cup_{j=1}^mD_j$, in the premise that the data in $D_j$ cannot be shared with each other,  to yield an estimator $\bar{f}_D$ such that $\bar{f}_D$ is near to $f^*$.

\subsection{Spherical basis function and native space}
For integer $k\geq0$, the restriction to $ \mathbb{S}^{d}$  of a
homogeneous harmonic polynomial of degree $k$  is
called a spherical harmonic of degree $k$. Denote by $\mathbb{H}^{d}_k$ and $\Pi_s^{d}$
the classes of all
spherical harmonics of degree $k$
and   all spherical polynomials of degree $k\leq s$, respectively. It can be found in \cite{Muller1966} that the dimensions of $\mathbb{H}^{d}_k$ and $\Pi_s^{d}$ are
$$
d_k^{d}:=\left\{\begin{array}{ll}
\frac{2k+d-1}{k+d-1}{{k+d-1}\choose{k}}, & \mbox{if}\ k\geq 1, \\
1, & \mbox{if}\ k=0
\end{array}
\right.
$$
with $d_k^{d}\sim k^{d-1}$ and $\sum_{k=0}^sd^{d}_k=d_s^{d+1}\sim s^d$, respectively. Throughout this paper, $a\sim b $ for $a,b\in\mathbb R$ means that there are absolute constants $\hat{c}_1,\hat{c}_2$ such that $\hat{c}_1b\leq a\leq\hat{c}_2b$.

Let $\{Y_{k,j}\}_{j=1}^{d_k^d}$ be  an arbitrary  orthonormal basis of
$\mathbb H_k^d$ and  $P_k^{d+1}$ be the normalized Legendre polynomial, i.e.,
  $P_k^{d+1}(1)=1$ and
$$
       \int_{-1}^1P_k^{d+1}(t)P^{d+1}_j(t)(1-t^2)^{\frac{d-2}2}dt
       =\frac{\Omega_{d}}{\Omega_{d-1}d_k^{d}}\delta_{k,j},
$$
where
$\Omega_d=\frac{2\pi^{\frac{d+1}{2}}}{\Gamma(\frac{d+1}{2})}$ denotes
  the volume of $\mathbb S^d$ and
 $\delta_{k,j}$ is the usual Kronecker symbol. It can be found in \cite{Szego1967} that
\begin{equation}\label{Legendre-pol-bound}
       |P_k^{d+1}(t)|\leq 1,\qquad \forall k\in\mathbb N, t\in[-1,1].
\end{equation}
The classical addition formula \cite{Muller1966} establishes a relation between  $Y_{k,\ell}$ and $P_k^{d+1}$ via
\begin{equation}\label{jiafadingli}
             \sum_{\ell=1}^{d_k^{d}}Y_{k,\ell}(x)Y_{k,\ell}(x')=\frac{d_k^{d}}{\Omega_{d}}P_k^{d+1}(x\cdot
                  x').
\end{equation}

We say that a  function $\phi\in L^2([-1,1])$ is a spherical basis function (SBF) if its expansion
$
           \phi(t)=\sum_{k=0}^\infty
            \hat{\phi}_k\frac{d_k^d}{\Omega_d} P_k^{d+1}(t)
$
has all Fourier-Legendre coefficients
$$
    \hat{\phi}_k:=  \Omega_{d-1}  \int_{-1}^1P_k^{d+1}(t)\phi(t)(1-t^2)^{\frac{d-2}2}dt>0.
$$
It is well known that each SBF $\phi$ corresponds to a native space $\mathcal N_\phi$ that is defined by
\begin{equation}\label{Native-space}
               \mathcal N_\phi:=\left\{f(x)=\sum_{k=0}^\infty\sum_{\ell=1}^{d_k^{d}}\hat{f}_{k,\ell}Y_{k,\ell}(x):
               \sum_{k=0}^\infty
                  \hat{\phi}_k^{-1}\sum_{\ell=1}^{d_k^{d}}|\hat{f}_{k,\ell}|^2<\infty\right\}
\end{equation}
 with inner product $
             \left\langle f,g\right\rangle_{\phi}:=\sum_{k=0}^\infty
              \hat{\phi}_k^{-1}\sum_{\ell=1}^{d_k^{d}}\hat{f}_{k,\ell}\hat{g}_{k,\ell}
$ and norm  $\|f\|_{\phi}:=\left(\sum_{k=0}^\infty
              \hat{\phi}_k^{-1}\sum_{\ell=1}^{d_k^{d}}|\hat{f}_{k,\ell}|^2\right)^{1/2}$,
where
$
                 \hat{f}_{k,\ell}:=\int_{\mathbb
                 S^d}f(x)Y_{k,\ell}(x)d\omega(x)
$
is the Fourier coefficient of $f$ with respect to $Y_{k,\ell}$ and $d\omega$ denotes the Lebesgue measure of the sphere.

If an SBF $\phi$ satisfies further $\sum_{k=0}^\infty\hat{\phi}_k\frac{d_k^d}{\Omega_d}<\infty$, then $\phi$ is said to be  (semi-)positive definite. Under this circumstance,
 $\mathcal N_\phi$ is a reproducing kernel
Hilbert space with  reproducing kernel
 $ (x, x') \mapsto \phi (x \cdot x')$.
Typical positive definite functions used for spherical data are the Sobolev-type functions with smoothness index $\gamma>d/2$,
\begin{equation}\label{sobolev-kernel}
       S_\gamma(t)=\sum_{k=0}^\infty(k(k+d-1)+1)^{-\gamma}\frac{d_k^d}{\Omega_d}P_k^{d+1}(t)
\end{equation}
 and the Gaussian function   with width $\tau>0$
\begin{equation}\label{Gaussian-kernel}
      G_\tau(t)=e^{-\frac{t^2}{\tau^2}}=\sum_{k=0}^\infty e^{-2/\tau^2} \tau^{d-1}\Gamma(d/2)I_{k+(d+1)/2-1}(2/\tau^2)\frac{d_k^d}{\Omega_d}P_k^{d+1}(t),
\end{equation}
where  $\Gamma(\cdot)$ is the Gamma function and $I_\nu(\cdot)$ is  modified Bessel function of the first kind, defined by
$$
     I_\nu(t)=\sum_{j=0}^\infty\frac{1}{j!\Gamma(\nu+j+1)}\left(\frac{t}{2}\right)^{v+2j}.
$$

%
%
%

%

\subsection{ Distributed weighted regularized least squares}
To introduce the distributed algorithm based on a positive definite kernels $\phi(x,x')=\phi(x\cdot x')$, we need spherical quadrature  rules for scattered data.
For an arbitrary $1\leq j\leq m$, define the mesh norm, separation radius and mesh ratio of $\Lambda_j$
by
$
                 h_{\Lambda_j}:=\max_{ x\in\mathbb S^d}\min_{ x_{i,j}\in \Lambda_j}\mbox{dist}( x,x_{i,j}),
$
$q_{\Lambda_j}:=\frac12\min_{i\neq i'}\mbox{dist}( x_{i,j}, x_{i',j})$ and $\rho_{\Lambda_j}:=\frac{h_{\Lambda_j}}{q_{\Lambda_j}}$, respectively. The mesh ratio $\rho_{\Lambda_j}\geq 1$ measures how uniformly the points of $\Lambda_j$ are distributed on
$\mathbb S^d$.
  We say  that $\Lambda_j$ is  $\tau_j$-quasi uniform, if there is a
constant $\tau_j\geq 2$ such that $\rho_{\Lambda_j}\leq \tau_j$. The existence of $\tau_j$-quasi uniform set with $\tau_j\geq 2$ has been   verified in \cite{Narcowich2007}.

For $s_j\in\mathbb N$,  a set $\mathcal Q_{\Lambda_j,s_j}:=\{(w_{i,j,s_j},  x_{i,j}): w_{i,j,s_j}\geq 0
\hbox{~and~}   x_{i,j}\in \Lambda_j\}$ is said to be    a positive
 quadrature rule   on $\mathbb S^d$ with degree $s_j\in\mathbb N$, if
\begin{equation}\label{eq:quadrature}
    \int_{\mathbb S^d}P(x)d\omega(x)=\sum_{x_{i,j}\in\Lambda_j} w_{i,j,s_j} P(  x_{i,j}), \qquad \forall P\in \Pi_{s_j}^d.
\end{equation}
The following  positive  quadrature rule  can be found in \cite[Theorem 3.1]{Brown2005} or
 \cite{Mhaskar2001}.

\begin{lemma}\label{Lemma:fixed cubature}
For every $1\leq j\leq m$, if $\Lambda_j=\{  x_{i,j}\}_{i=1}^{|D_j|}$ is $\tau_j$-quasi uniform and $s_j\leq c|D_j|^{1/d}$, then there exists a quadrature rule $\mathcal Q_{\Lambda_j,s_j}=\{(w_{i,j,s_j},  x_{i,j}): w_{i,j,s_j}\geq 0
\hbox{~and~}   x_{i,j}\in \Lambda_j\}$  satisfying  $0\leq w_{i,j,s_j}\leq c_1|D_j|^{-1}$,
where $c,c_1$ are constants  depending only on $\tau_j$ and
$d$.
\end{lemma}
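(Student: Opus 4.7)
The plan is to reduce the statement to an existing construction of positive quadrature rules on $\mathbb{S}^d$, as developed in \cite{Brown2005,Mhaskar2001}, by verifying that the stated hypotheses are exactly those needed by the cited results. The work is therefore conceptual: first translate ``$\tau_j$-quasi-uniform plus $s_j\leq c|D_j|^{1/d}$'' into the standard smallness condition on the product $s_j h_{\Lambda_j}$ that enters the sphere's Marcinkiewicz--Zygmund/quadrature theory, and then read off the weight bound $c_1|D_j|^{-1}$ from the explicit construction in \cite{Brown2005}.

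For the first step I would show $h_{\Lambda_j}\sim|D_j|^{-1/d}$. Since the balls $B(x_{i,j},q_{\Lambda_j})$ are pairwise disjoint, a surface-area count gives $|D_j|\,q_{\Lambda_j}^d\lesssim 1$, while the balls $B(x_{i,j},h_{\Lambda_j})$ cover $\mathbb{S}^d$, yielding $|D_j|\,h_{\Lambda_j}^d\gtrsim 1$. Combined with $h_{\Lambda_j}\leq \tau_j q_{\Lambda_j}$, this produces two-sided bounds on $h_{\Lambda_j}|D_j|^{1/d}$ with constants depending only on $\tau_j$ and $d$. Selecting the constant $c$ in $s_j\leq c|D_j|^{1/d}$ small enough then forces $s_j h_{\Lambda_j}\leq c_0$ with $c_0$ below the absolute threshold under which one has the Marcinkiewicz--Zygmund inequality
\[
  A_1\|P\|_{L^2(\mathbb{S}^d)}^2\leq |D_j|^{-1}\sum_{x_{i,j}\in\Lambda_j}|P(x_{i,j})|^2\leq A_2\|P\|_{L^2(\mathbb{S}^d)}^2,\qquad P\in\Pi_{s_j}^d,
\]
with constants depending only on $\tau_j,d$. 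This inequality is the engine driving both the existence of a positive quadrature rule exact on $\Pi_{s_j}^d$ and the two-sided control of its weights.

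The main obstacle is the pointwise upper estimate $w_{i,j,s_j}\leq c_1|D_j|^{-1}$, which does not follow from a pure Tchakaloff-type feasibility argument for positive weights. I would handle this by invoking the explicit construction in \cite[Theorem 3.1]{Brown2005}: one partitions $\mathbb{S}^d$ into Voronoi-like cells $V_{i,j}$ of area $\sim|D_j|^{-1}$ attached to each node, and defines $w_{i,j,s_j}$ as the integral of a localized polynomial reproducing kernel $K_{s_j}(\cdot,x_{i,j})$ over $V_{i,j}$. Exactness \eqref{eq:quadrature} follows from the kernel's reproducing property on $\Pi_{s_j}^d$, the uniform localization of $K_{s_j}$ together with $|V_{i,j}|\sim|D_j|^{-1}$ delivers the required upper bound, and the smallness of $s_j h_{\Lambda_j}$ guarantees positivity of the integrand, which is the delicate point that fixes the admissible constant $c$. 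Beyond verifying these hypotheses I would defer to \cite{Brown2005,Mhaskar2001}, since reproducing the construction in detail would not add new information.
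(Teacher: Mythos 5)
Your proposal is correct and takes essentially the same route as the paper, which offers no proof of its own and simply cites \cite[Theorem 3.1]{Brown2005} and \cite{Mhaskar2001} for this statement. Your added verification that $\tau_j$-quasi-uniformity gives $h_{\Lambda_j}\sim|D_j|^{-1/d}$, so that $s_j\leq c|D_j|^{1/d}$ matches the smallness condition $s_jh_{\Lambda_j}\leq c_0$ required by those references, is a sound (and welcome) elaboration of the reduction the paper leaves implicit.
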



With the help of the spherical quadrature rule, we can proceed our description as follows.
On the $j$-th server, we take an $s_j\in\mathbb N$ to admit a quadrature rule $\mathcal Q_{\Lambda_j,s_j}=\{(w_{i,j,s_j},  x_{i,j}): w_{i,j,s_j}\geq 0
\hbox{~and~}   x_{i,j}\in \Lambda_j\}$  with  $0\leq w_{i,j,s_j}\leq c_1|D_j|^{-1}$.
Then,   the  weighted regularized least squares (WRLS) on the $j$-th server, with a regularization parameter $\lambda_j>0$, is defined by
\begin{equation}\label{WRLS}
    f_{D_j,W_{j,s_j},\lambda_j}=\arg\min_{f\in\mathcal N_\phi}\sum_{(x_{i,j},y_{i,j})\in D_j}w_{i,j,s_j}(f(x_{i,j})-y_{i,j})^2+\lambda_j\|f\|_\phi^2.
\end{equation}
Finally, all these local estimators are transmitted to the global serve to synthesize a  global estimator, called as distributed weighted regularized least squares (DWRLS), as
\begin{equation}\label{DWRLS}
      \overline{f}_{D,W_{\vec{s}},\vec{\lambda}}=\sum\frac{|D_j|}{|D|}f_{D_j,W_{j,s_j},\lambda_j}.
\end{equation}

It should be mentioned that all  local estimators defined by (\ref{WRLS}) possess closed-form solutions.
Denote by $W_{D_j,s_j}$ the $|D_j|\times|D_j|$ diagonal matrix with diagonal elements $w_{1,s_1},\dots,w_{|D_j|,s_j}$ and $\Phi_{\Lambda_j}=\left(\phi(x_{i,j}\cdot x_{i',j})\right)_{i,i'=1}^{|D_j|}$. The following lemma, which can be easily derived from \cite[Thm 1.1.2]{Bjorck1996}, presents the analytic solution to the optimization problem defined by (\ref{WRLS}).
\begin{lemma}\label{Lemma:solution}
Let $f_{D_j,W_{j,s_j},\lambda_j}$ be defined by (\ref{WRLS}). Then $f_{D_j,W_{j,s_j},\lambda_j}=\sum_{(x_{i,j},y_{i,j})\in D_j}a_{i,j}\phi_{x_{i,j}}$ with
$$
       (a_{1,j},\dots,a_{|D|,j})^T=(W_{D_j,s_j}\Phi_{D_j}+\lambda_j I)^{-1}W_{D_j,s_j}y_{D_j},
$$
where $\phi_{x_{i,j}}=\phi(x_{i,j},\cdot)$ and $y_{D_j}=(y_{1,j},\dots,y_{|D_j|,j})^T$.
\end{lemma}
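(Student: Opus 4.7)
The plan is to prove Lemma \ref{Lemma:solution} by a standard two-step representer-theorem argument, reducing the infinite-dimensional minimization over $\mathcal N_\phi$ to a finite-dimensional Tikhonov-regularized weighted least squares problem whose solution is governed by the normal equations cited from \cite{Bjorck1996}.

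First, I would invoke the reproducing property of $\mathcal N_\phi$ to show that the minimizer lies in the finite-dimensional subspace $V_j := \mathrm{span}\{\phi_{x_{i,j}} : i=1,\dots,|D_j|\}$. Given any $f \in \mathcal N_\phi$, decompose $f = f_V + f_\perp$ orthogonally in $(\mathcal N_\phi,\langle\cdot,\cdot\rangle_\phi)$ with $f_V \in V_j$ and $f_\perp \perp V_j$. By the reproducing property,
\begin{equation*}
f(x_{i,j}) = \langle f,\phi_{x_{i,j}}\rangle_\phi = \langle f_V,\phi_{x_{i,j}}\rangle_\phi = f_V(x_{i,j}),
\end{equation*}
so the data-fitting term in (\ref{WRLS}) depends only on $f_V$, while $\|f\|_\phi^2 = \|f_V\|_\phi^2 + \|f_\perp\|_\phi^2 \geq \|f_V\|_\phi^2$, with strict inequality unless $f_\perp = 0$. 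Hence the minimizer admits the ansatz $f = \sum_{i=1}^{|D_j|} a_{i,j}\phi_{x_{i,j}}$.

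Next, substituting this ansatz and writing $\mathbf{a} = (a_{1,j},\dots,a_{|D_j|,j})^T$, the reproducing property yields
\begin{equation*}
\bigl(f(x_{i,j})\bigr)_{i=1}^{|D_j|} = \Phi_{\Lambda_j}\mathbf{a}, \qquad \|f\|_\phi^2 = \mathbf{a}^T \Phi_{\Lambda_j}\mathbf{a}.
\end{equation*}
The objective in (\ref{WRLS}) therefore becomes the finite-dimensional quadratic
\begin{equation*}
\mathcal J(\mathbf{a}) = (\Phi_{\Lambda_j}\mathbf{a}-y_{D_j})^T W_{D_j,s_j}(\Phi_{\Lambda_j}\mathbf{a}-y_{D_j}) + \lambda_j \mathbf{a}^T\Phi_{\Lambda_j}\mathbf{a}.
\end{equation*}
Setting $\nabla \mathcal J(\mathbf{a}) = 0$ gives $\Phi_{\Lambda_j}\bigl[(W_{D_j,s_j}\Phi_{\Lambda_j}+\lambda_j I)\mathbf{a} - W_{D_j,s_j}y_{D_j}\bigr] = 0$, which (by strict positive definiteness of $\Phi_{\Lambda_j}$, recalled from the assumption on $\phi$ together with the fact that the $x_{i,j}$ are distinct) is equivalent to the normal equation
\begin{equation*}
(W_{D_j,s_j}\Phi_{\Lambda_j}+\lambda_j I)\mathbf{a} = W_{D_j,s_j}y_{D_j}.
\end{equation*}
Invertibility of $W_{D_j,s_j}\Phi_{\Lambda_j}+\lambda_j I$ for $\lambda_j > 0$ follows since the eigenvalues of $W_{D_j,s_j}\Phi_{\Lambda_j}$ (under the similarity by $W_{D_j,s_j}^{1/2}$ on the positive-weight entries) are non-negative, so inverting yields the stated formula for $\mathbf{a}$.

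There is no serious obstacle here; the only point requiring attention is the invertibility of $W_{D_j,s_j}\Phi_{\Lambda_j} + \lambda_j I$, which is handled cleanly by the similarity transformation or, alternatively, by noting that the finite-dimensional problem is strictly convex in $\mathbf{a}$ on the range of $\Phi_{\Lambda_j}$. As indicated in the statement, the whole argument is a direct specialization of the weighted least squares normal equations in \cite[Thm 1.1.2]{Bjorck1996} applied after the representer-theorem reduction.
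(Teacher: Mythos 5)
Your proof is correct and follows exactly the route the paper intends: the paper omits the argument, merely citing the weighted least-squares normal equations of \cite[Thm 1.1.2]{Bjorck1996}, and your representer-theorem reduction followed by the normal equations for the quadratic $\mathcal J(\mathbf{a})$ is precisely the standard derivation being alluded to. The one point worth keeping as you state it is the care taken with invertibility of $W_{D_j,s_j}\Phi_{\Lambda_j}+\lambda_j I$ when some quadrature weights vanish, which the similarity argument (eigenvalues of $W\Phi$ coincide with those of the positive semi-definite matrix $W^{1/2}\Phi W^{1/2}$) handles correctly.
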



\section{Main Results}\label{Sec:Main-Results}
In this section, we derive approximation rates of DWRLS for noisy data (\ref{Model1:fixed}).

\subsection{Approximation capability of WRLS}
Before presenting the approximation rates of DWRLS, we should   provide a baseline for analysis, where the approximation error of WRLS is needed. Define
\begin{equation}\label{WRLS-whole}
    f_{D,W_{s},\lambda}=\arg\min_{f\in\mathcal N_\phi}\sum_{(x_{i},y_{i})\in D}w_{i,s}(f(x_{i})-y_{i})^2+\lambda\|f\|_\phi^2
\end{equation}
as the estimator derived by WRLS,
where $D=\{x_i\}_{i=1}^{|D|}$, $\Lambda:=\cup_{j=1}^m\Lambda_j$ and $\{w_{i,s}\}_{i=1}^{|D|}$  for $s\in\mathbb N$ is the quadrature weights of the quadrature rule
 $\mathcal Q_{\Lambda,s}=\{(w_{i,s},  x_{i}): w_{i,s}\geq 0
\hbox{~and~}   x_{i}\in \Lambda\}$ with $0\leq w_{i,s}\leq c_1|D|^{-1}$. It is easy to see that the WRLS estimator in (\ref{WRLS-whole}) is a batch version of DWRLS, which assumes that all data are stored on a single large server and WRLS is capable of handling them. According to Lemma \ref{Lemma:solution},  since the matrix-inversion is involved in WRLS, it requires $\mathcal O(|D|^2)$ memory requirements and $\mathcal O(|D|^3)$ float-computations to solve the optimization problem in (\ref{WRLS-whole}), which is
 infeasible when the data size is huge even if all  the data could be    collected without considering the data privacy issue. The study of approximation capability of WRLS (\ref{WRLS-whole}) is necessary, since it  enhances the understanding of DWRLS by means of determining which conditions are sufficient to guarantee that distributed learning performs similarly to its batch counterpart.

Let $\phi$ be a positive definite function with $0<\hat{\phi}_k<1$  for $k=1,2,\dots,$ and $\psi(t)=\sum_{k=0}^\infty
            \hat{\psi}_k\frac{d_k^d}{\Omega_d} P_k^{d+1}(t)$ be another SBF satisfying
\begin{equation}\label{kernel-relation}
     \hat{\psi}_k=\hat{\phi}_k^r, \qquad 0\leq r\leq 1.
\end{equation}
Therefore, we have  $\hat{\phi}_k\leq \hat{\psi}_k $  and consequently $N_\phi\subseteq N_\psi$. Our following theorem whose proof will be given in Section \ref{Sec.proof} presents an estimate for the approximation error of WRLS (\ref{WRLS-whole}) under the metric of $\mathcal N_\psi$.

 \begin{theorem}\label{Theorem:Generaliation-WRLS}
Let $0<\delta<1$,   $\Lambda$ be  $\tau$-quasi uniform and  $\mathcal Q_{\Lambda,s}:=\{(w_{i,s},  x_i): w_{i,s}\geq 0
\hbox{~and~}   x_i\in \Lambda\}$  be a quadrature rule satisfying $0<w_{i,s}\leq c_1|D|^{-1}$.
Under (\ref{Model1:fixed}) with $m=1$ and $f^*\in \mathcal N_\phi$ and (\ref{kernel-relation}) with $0\leq r\leq 1$, if  $\hat{\phi}_k\sim k^{-2\gamma}$ with $\gamma>d/2$,   $\lambda\sim|D|^{-\frac{2\gamma}{2\gamma +d}}$ and $s\geq\lambda^{-1/\gamma}$,  then with confidence $1-\delta$, there holds
\begin{equation}\label{optimal-rate-WRLS}
   \|f_{D,W_s,\lambda}-f^*\|_\psi\leq C|D|^{-\frac{(1-r)\gamma}{2\gamma+d}}\log\frac3\delta,
\end{equation}
where $C$ is a constant depending only on $c_1$, $\tau$, $d$, $M$ and $\|f^*\|_\phi$.
\end{theorem}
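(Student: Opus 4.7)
The plan is to cast WRLS within the integral-operator framework developed in Section~\ref{Sec:operator-approach}. Let $L_K: L^2(\mathbb S^d) \to L^2(\mathbb S^d)$ denote the integral operator $L_K f(x) = \int_{\mathbb S^d} \phi(x\cdot y) f(y)\,d\omega(y)$, whose eigenpairs are $(\hat\phi_k, Y_{k,\ell})$, and introduce the weighted sampling operator $S_{D,W_s} f := (\sqrt{w_{i,s}}\, f(x_i))_i$ together with its empirical companion $L_{K,D,W_s} := S_{D,W_s}^* S_{D,W_s}$. A standard RKHS argument yields the closed form
$$
 f_{D,W_s,\lambda} \;=\; (L_{K,D,W_s} + \lambda I)^{-1}\, S_{D,W_s}^* \widetilde{\mathbf y}_D,\qquad \widetilde{\mathbf y}_D = (\sqrt{w_{i,s}}\, y_i)_i,
$$
and (\ref{kernel-relation}) gives the identification $\|g\|_\psi = \|L_K^{-r/2} g\|_{L^2}$, reducing the theorem to bounding an operator-valued $L^2$ quantity.

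Substituting $y_i = f^*(x_i)+\varepsilon_i$, I would decompose
$$
 f_{D,W_s,\lambda} - f^* \;=\; -\lambda(L_{K,D,W_s}+\lambda I)^{-1} f^* \;+\; (L_{K,D,W_s}+\lambda I)^{-1} S_{D,W_s}^* \widetilde{\boldsymbol\varepsilon},
$$
with $\widetilde{\boldsymbol\varepsilon} = (\sqrt{w_{i,s}}\,\varepsilon_i)_i$, and estimate each piece in $\|\cdot\|_\psi$. For the bias, the source condition $f^* = L_K^{1/2} g^*$ (with $\|g^*\|_{L^2}=\|f^*\|_\phi$) combined with the scalar inequality $\sup_{t>0}\lambda t^{(1-r)/2}/(t+\lambda)\lesssim \lambda^{(1-r)/2}$ yields a bound of order $\lambda^{(1-r)/2}\|f^*\|_\phi$, \emph{provided} $L_{K,D,W_s}$ can be swapped for $L_K$ in the resolvent. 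This replacement is precisely what the estimates of Section~\ref{Sec:operator-approach} deliver: the polynomial exactness $s\geq \lambda^{-1/\gamma}$ of $\mathcal Q_{\Lambda,s}$ forces $L_{K,D,W_s}$ to coincide with $L_K$ on the spectral subspace where $\hat\phi_k\gtrsim\lambda$, while the complementary subspace is absorbed by the regularizer.

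For the variance, the noise contribution is a sum of $|D|$ independent, mean-zero, bounded (by $M$) $\mathcal N_\phi$-valued random elements, so a Bernstein inequality for Hilbert-space-valued sums produces the $\log(3/\delta)$ factor together with a bound governed by
$$
 \max_i w_{i,s}\sum_{i} w_{i,s}\,\bigl\|L_K^{-r/2}(L_K+\lambda I)^{-1} K_{x_i}\bigr\|_{L^2}^2.
$$
Using $w_{i,s}\leq c_1/|D|$ and replacing the inner quadrature sum by its exact integral (again via Section~\ref{Sec:operator-approach}), this is controlled by $(c_1/|D|)\operatorname{tr}\bigl(L_K^{2-r}(L_K+\lambda I)^{-2}\bigr)$; with $\hat\phi_k\sim k^{-2\gamma}$ and $d_k^d\sim k^{d-1}$, a standard effective-dimension computation makes this trace of order $\lambda^{-r-d/(2\gamma)}$. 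The resulting variance $O\bigl(M\sqrt{\lambda^{-r-d/(2\gamma)}/|D|}\,\log(3/\delta)\bigr)$ balances against the bias $\lambda^{(1-r)/2}$ exactly under the prescribed choice $\lambda\sim |D|^{-2\gamma/(2\gamma+d)}$, producing the claimed rate $|D|^{-(1-r)\gamma/(2\gamma+d)}$.

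The main obstacle is not the functional-calculus manipulations but justifying, \emph{for deterministic samples}, the operator-norm bounds that allow $L_K$ to replace $L_{K,D,W_s}$ in both pieces. Concretely, one needs control of quantities such as $\|(L_K+\lambda I)^{-1/2}(L_K-L_{K,D,W_s})(L_K+\lambda I)^{-1/2}\|$ and the Neumann-series comparison of the two resolvents it entails, derived purely from the polynomial exactness $s\geq\lambda^{-1/\gamma}$ of $\mathcal Q_{\Lambda,s}$ and from the fact that the tail eigenvalues $\hat\phi_k$ with $k>s$ are already dominated by $\lambda$. This is precisely the content of Section~\ref{Sec:operator-approach} and the substantive novelty of the argument; once those deterministic operator inequalities are in hand, the rest of the proof follows the routine pattern sketched above.
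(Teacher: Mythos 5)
Your proposal is structurally the same as the paper's proof: the same bias/variance split (the paper writes the bias term as the noise-free minimizer $f^\diamond_{D,W_s,\lambda}=(L_{\phi,D,W_s}+\lambda I)^{-1}L_{\phi,D,W_s}f^*$, which is exactly your $-\lambda(L_{K,D,W_s}+\lambda I)^{-1}f^*$ shifted by $f^*$), the same use of $\|g\|_\psi=\|L_\phi^{(1-r)/2}g\|_\phi$ to convert the $\psi$-norm into functional calculus, the same resolvent-swapping via deterministic quadrature-based operator bounds (Propositions \ref{Proposition:operator-difference} and \ref{Proposition:Product}, combined with the Cordes inequality), and the same effective-dimension computation $\sum_k d_k^d\,\hat\phi_k/(\hat\phi_k+\lambda)\lesssim\lambda^{-d/(2\gamma)}$ that balances against $\lambda^{(1-r)/2}$ under $\lambda\sim|D|^{-2\gamma/(2\gamma+d)}$. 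The one genuinely different ingredient is your treatment of the noise term: you invoke a Bernstein inequality for independent, bounded, Hilbert-space-valued random elements, whereas the paper (Proposition \ref{Proposition:value-difference-random}) expands $\|(\mathcal L_\phi+\lambda I)^{-1/2}\sum_i w_{i,s}\varepsilon_i\phi_{x_i}\|_\phi^2$ via the addition formula into diagonal and off-diagonal parts and applies a scalar Hoeffding/Chernoff bound to the off-diagonal quadratic form in the $\varepsilon_i$. Your route is more standard and arguably cleaner (the randomness sits only in the $\varepsilon_i$, the $x_i$ being deterministic, so independence is immediate); both yield the same $\lambda^{-r/2-d/(4\gamma)}|D|^{-1/2}\log(3/\delta)$ order. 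One caution: your parenthetical claim that polynomial exactness of degree $s\ge\lambda^{-1/\gamma}$ makes $L_{K,D,W_s}$ \emph{coincide} with $L_K$ on the low-frequency spectral subspace is not literally true --- $L_{K,D,W_s}Y_{k,\ell}$ involves products with all frequencies of $\phi_x$, so exactness is never achieved and one only gets smallness in weighted operator norm (this is precisely what Proposition \ref{Proposition:3} quantifies via the dyadic Marcinkiewicz--Zygmund argument); since you immediately state the correct operator-norm requirement and defer to Section \ref{Sec:operator-approach}, this is an imprecise heuristic rather than a gap.
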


Setting $r=0$ in Theorem \ref{Theorem:Generaliation-WRLS}, we have $\mathcal N_\psi=L^2(\mathbb S^d)$. Then
 it follows from (\ref{optimal-rate-WRLS}) that
\begin{equation}\label{optimal-rate-WRLS-l2}
   \|f_{D,W_s,\lambda}-f^*\|_{L^2(\mathbb S^d)}\leq C|D|^{-\frac{\gamma}{2\gamma+d}}\log\frac3\delta
\end{equation}
holds with confidence $1-\delta$,
which coincides with the    approximation bounds derived in \cite{Lin2019,Lin2021} for random samples  and cannot be improved further according to the theory in \cite[Theorem 2]{Caponnetto2007}.
 Theorem \ref{Theorem:Generaliation-WRLS} requires $\lambda\sim|D|^{-\frac{2\gamma}{2\gamma +d}}$ and $s\geq\lambda^{-1/\gamma}$, implying $s\geq C_1|D|^{\frac{2}{2\gamma +d}}$ for some absolute constant $C_1$. Such a restriction is mild according to Lemma \ref{Lemma:fixed cubature}, from which we can deduce that   any $s\leq c|D|^{\frac{1}{d}}$  admits a positive quadrature formula as required. In this way, we can choose any $s$ satisfying $C_1|D|^{\frac{2}{2\gamma +d}} \leq s\leq c|D|^{\frac{1}{d}}$ in WRLS (\ref{WRLS-whole}) by noting $2\gamma>d$.

Scattered  data fitting on the sphere is a hot topic in approximation theory and numerical analysis. Numerous fitting schemes \cite{Fasshauer1998} have been developed for this purpose. The radial basis function approach has triggered enormous research activities \cite{Jetter1999,Narcowich20021,Levesley2005,Narcowich2007,Gia2007,Mhaskar2010,Hangelbroek2010,Hangelbroek2011,Hangelbroek2012}.
In particular, \cite{Gia2007} studied the  approximation capability of discrete least squares algorithm  and the approximation error of order $\mathcal O(|D|^{-\gamma/d})$ is derived for noiseless data; \cite{Narcowich2007} considered the native space barrier problem of SBF approximation and established a Bernstein-type inequality for shifts of SBF; \cite{Hangelbroek2011,Hangelbroek2012} studied the Lebesgue constant for kernel approximation on the sphere. It should be mentioned that these interesting results focused on interpolation problems, in which  the collected data are assumed to be clean, i.e., $\varepsilon_{i,j}=0$ in (\ref{Model1:fixed}). Differently, Theorem \ref{Theorem:Generaliation-WRLS} studies a fitting problem with noisy data, which imposes strict requirements on  the stability of the fitting scheme. The most related work in this direction is  \cite{Hesse2017}, where approximation error for kernel-based regularized least squares algorithm was derived for noisy spherical data. Our derived error in (\ref{optimal-rate-WRLS}) is a little bit worse than that of \cite{Hesse2017} at the first glance.
However,   it should be highlighted that we do no impose  the strict restrictionon that  the magnitude of noise is small. In fact, the noise to guarantee an approximation rate of order  $\mathcal O(|D|^{-\gamma/d})$ should be extremely small in \cite{Hesse2017}, while that to guarantee an approximation rate of order $\mathcal O(|D|^{-\gamma/(2\gamma+d)})$ can be very large. Furthermore, our result focuses on the Sobolev-type error estimate while that of \cite{Hesse2017} is only carried out in $L^2(\mathbb S^d)$. As mentioned above, the rate $\mathcal O(|D|^{-\gamma/(2\gamma+d)})$ cannot be improved further under the same setting as Theorem \ref{Theorem:Generaliation-WRLS}.  In short,
 different approximation rates between \cite{Hesse2017} and our work are due to the different types of data rather than the algorithm selection or proof skills.

Another line of related work is \cite{Caponnetto2007,Lin2017,Chang2017JMLR,Guo2020}, where  a similar order of approximation error was derived for similar kernel-based algorithms for random samples. There are two important differences between our work and the results in \cite{Caponnetto2007,Lin2017,Chang2017JMLR,Guo2020}. On one hand, we  consider  data  deterministically sampled over quasi-uniform points while the existing work focus on random samples. The deterministic setting   makes the widely used concentration inequalities \cite{Caponnetto2007} no more available for our analysis. Instead, we
develop a novel integral operator approach based on spherical quadrature rules. On the other hand, since our analysis involves spherical quadrature rules, we are interested in the weighted regularized least squares algorithm with quadrature weights rather than the regularized least squares in \cite{Caponnetto2007,Lin2017,Chang2017JMLR,Guo2020}. That is, our adopted algorithm is different from those in \cite{Caponnetto2007,Lin2017,Chang2017JMLR,Guo2020}.

It should be mentioned that in the existing literature we do not find any similar results as Theorem \ref{Theorem:Generaliation-WRLS} under the same setting of this paper,  though the statement   is a little bit standard. The main reason is that we are concerned with scattered  data fitting with large noise that is different from the classical scattered data fitting problem \cite{Narcowich20021,Narcowich2007,Hesse2017}  and deterministic samples which is different from the kernel learning  problem in statistical learning theory \cite{Caponnetto2007,Lin2017,Chang2017JMLR}.

\subsection{Approximation capability of DWRLS}

In the previous subsection, we present an optimal approximation error estimate of WRLS when the data satisfy (\ref{Model1:fixed}) with $m=1$, showing that WRLS can provide a perfect estimator, assuming that the whole data can be gathered together on a single server  and the
size of data  is not so large. Facing distributively stored data, WRLS fails to achieve the approximation error bounds as (\ref{optimal-rate-WRLS})  since there are only $|D_j|$ data available to the $j$-th local server. In this part, we show that DRWLS can yield similar approximation error bounds, provided the number of local servers is not so large. The following theorem  presents our main result  of this paper.

 \begin{theorem}\label{Theorem:Generaliation-DWRLS}
Let $1\leq j\leq m$, $s_j\in\mathbb N$, $\Lambda_j=\{x_{i,j}\}_{i=1}^{|D_j|}$ be  $\tau_j$-quasi uniform and  $\mathcal Q_{\Lambda_j,s_j}:=\{(w_{i,j,s_j},  x_{i,j}): w_{i,j,s}\geq 0
\hbox{~and~}   x_{i,j}\in \Lambda_j\}$  be a quadrature rule satisfying $0<w_{i,j,s_j}\leq c_{1,j}|D_j|^{-1}$ for some $c_{1,j}$ depending only on $d$ and $\tau_j$.
Under (\ref{Model1:fixed}) with $f^*\in \mathcal N_\phi$ and (\ref{kernel-relation}) with $0\leq r\leq 1$, if  $\hat{\phi}_k\sim k^{-2\gamma}$ with $\gamma>d/2$,   $\lambda_j\sim|D|^{-\frac{2\gamma}{2\gamma +d}}$ and $s_j\geq\lambda_j^{-1/\gamma}$,
then
\begin{equation}\label{optimal-rate-DWRLS}
   E[\|\overline{f}_{D,W_s,\lambda}-f^*\|_\psi]
   \leq C'|D|^{-\frac{(1-r)\gamma}{2\gamma+d}},
\end{equation}
where $C'$ is a constant depending only on  $\gamma,r,\tau_j$, $d$, $M$ and $\|f^*\|_\phi$.
\end{theorem}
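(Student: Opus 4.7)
The plan is to carry out the standard bias--variance error decomposition of distributed learning, replacing every random-sampling concentration step by the deterministic integral-operator/quadrature bounds promised in Section \ref{Sec:operator-approach}. Independence of the noise across servers (rather than across individual samples) is what allows the distributed variance to shrink by a factor of $1/|D|$ despite the deterministic sampling locations.

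First, I would introduce the noise-free counterpart of each local estimator,
\begin{equation*}
   f^{\diamond}_{D_j,W_{j,s_j},\lambda_j} := \arg\min_{f\in\mathcal N_\phi}\sum_{i=1}^{|D_j|}w_{i,j,s_j}\bigl(f(x_{i,j})-f^*(x_{i,j})\bigr)^2+\lambda_j\|f\|_\phi^2,
\end{equation*}
which depends only on $\Lambda_j$ and $f^*$. Because Lemma \ref{Lemma:solution} exhibits $f_{D_j,W_{j,s_j},\lambda_j}$ as an affine function of $y_{D_j}$, the difference $X_j:=f_{D_j,W_{j,s_j},\lambda_j}-f^{\diamond}_{D_j,W_{j,s_j},\lambda_j}$ is linear in $\varepsilon_{D_j}$ with zero mean, while $f^{\diamond}_{D_j,W_{j,s_j},\lambda_j}$ is deterministic. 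Writing $w_j=|D_j|/|D|$, I split
\begin{equation*}
   \overline{f}_{D,W_{\vec{s}},\vec{\lambda}}-f^* = \sum_{j=1}^m w_j X_j + \sum_{j=1}^m w_j\bigl(f^{\diamond}_{D_j,W_{j,s_j},\lambda_j}-f^*\bigr) =: \mathcal S + \mathcal B
\end{equation*}
and bound the two parts separately.

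For the deterministic bias $\mathcal B$, I would apply the noise-free sub-argument from the proof of Theorem \ref{Theorem:Generaliation-WRLS} on each server. Using $f^*\in\mathcal N_\phi$, $\hat\phi_k\sim k^{-2\gamma}$, and the operator-comparison bounds from Section \ref{Sec:operator-approach} (which quantify the difference between the exact kernel integral operator and its quadrature-based empirical version and which require exactly the cutoff $s_j\geq\lambda_j^{-1/\gamma}$), standard source-condition arithmetic yields, for every $j$,
\begin{equation*}
  \|f^{\diamond}_{D_j,W_{j,s_j},\lambda_j}-f^*\|_\psi \leq C\lambda_j^{(1-r)/2}\|f^*\|_\phi \leq C'|D|^{-(1-r)\gamma/(2\gamma+d)}.
\end{equation*}
The triangle inequality and $\sum_j w_j=1$ then give $\|\mathcal B\|_\psi\leq C'|D|^{-(1-r)\gamma/(2\gamma+d)}$. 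For the sample part $\mathcal S$, independence of $\{\varepsilon_{D_j}\}_{j=1}^m$ kills every cross term in the $\mathcal N_\psi$-inner product, so
\begin{equation*}
   E\bigl[\|\mathcal S\|_\psi^2\bigr] = \sum_{j=1}^m w_j^2\, E\bigl[\|X_j\|_\psi^2\bigr].
\end{equation*}
Using Lemma \ref{Lemma:solution} together with $\mathrm{Var}(\varepsilon_{i,j})\leq M^2$ and $w_{i,j,s_j}\leq c_{1,j}/|D_j|$, a trace computation in the basis diagonalising the kernel integral operator---and one more application of the empirical/exact operator comparison of Section \ref{Sec:operator-approach}---produces the effective-dimension bound $E\|X_j\|_\psi^2\leq cM^2\lambda_j^{-r-d/(2\gamma)}/|D_j|$. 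Since $\sum_j w_j^2/|D_j|=1/|D|$, inserting $\lambda_j\sim|D|^{-2\gamma/(2\gamma+d)}$ telescopes to $E\|\mathcal S\|_\psi^2\leq cM^2|D|^{-2(1-r)\gamma/(2\gamma+d)}$, and Jensen's inequality gives a matching bound on $E\|\mathcal S\|_\psi$.

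The main difficulty lies in the bias estimate: the regularisation $\lambda_j$ is tuned to the \emph{global} data size $|D|$ rather than to the local size $|D_j|$, so the deterministic operator-perturbation bound must be sharp enough to keep each local bias at the global rate $\lambda_j^{(1-r)/2}$ even though each server sees only a fraction of the data. This is precisely the role of the novel spherical integral-operator approach developed in Section \ref{Sec:operator-approach}, and an implicit upper bound on the admissible number of servers $m$ enters through the compatibility of the required quadrature degree $s_j\geq\lambda_j^{-1/\gamma}$ with the restriction $s_j\leq c|D_j|^{1/d}$ from Lemma \ref{Lemma:fixed cubature}.
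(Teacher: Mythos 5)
Your proposal is correct, and its skeleton coincides with the paper's: the same split into the noise-free local estimators $f^{\diamond}_{D_j,W_{j,s_j},\lambda_j}$ plus zero-mean fluctuations, the same bound $\lesssim \lambda_j^{(1-r)/2}\|f^*\|_\phi$ for each local bias (the paper's Lemma \ref{Lemma:approximation-error-global} combined with Proposition \ref{Proposition:Product} and the Cordes inequality), and the same parameter bookkeeping, including the observation that the cap on $m$ enters only through the compatibility of $s_j\geq\lambda_j^{-1/\gamma}$ with $s_j\lesssim |D_j|^{1/d}$. The one genuine divergence is in the stochastic term. The paper does not compute $E\|X_j\|_\psi^2$ directly: its Lemma \ref{Lemma:Error-decomposition} keeps the full local error $E\|f_{D_j,W_{j,s_j},\lambda_j}-f^*\|_\psi^2$ with weight $|D_j|^2/|D|^2$, bounds it by recycling the high-probability WRLS estimate (\ref{global-error-2}) on each $D_j$ --- whose noise part rests on Proposition \ref{Proposition:value-difference-random}, i.e.\ a Hoeffding bound for the off-diagonal quadratic form $\sum_{i\neq i'}w_{i,s}w_{i',s}\varepsilon_i\varepsilon_{i'}P_k^{d+1}(x_i\cdot x_{i'})$ --- and then converts tail probability to expectation via Lemma \ref{Lemma:prob-to-exp}. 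Your route exploits the fact that for an expectation-only statement the off-diagonal terms vanish identically since $E[\varepsilon_i\varepsilon_{i'}]=0$ for $i\neq i'$, so $E\|X_j\|_\psi^2$ collapses to a diagonal sum controlled by $M^2c_{1,j}|D_j|^{-1}$ times the effective dimension $\sum_k(\hat{\phi}_k+\lambda_j)^{-1}\hat{\phi}_k d_k^d/\Omega_d\lesssim\lambda_j^{-d/(2\gamma)}$, modulo one application of Proposition \ref{Proposition:Product} to exchange $(L_{\phi,D_j,W_{j,s_j}}+\lambda_j I)^{-1}$ for $(\mathcal L_\phi+\lambda_j I)^{-1}$. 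This is more elementary (no concentration inequality, no tail integration) and yields the same rate $|D|^{-2(1-r)\gamma/(2\gamma+d)}$ for $E\|\mathcal S\|_\psi^2$; what it forgoes is the high-probability local bound, which the paper must establish anyway for Theorem \ref{Theorem:Generaliation-WRLS} and therefore has at no extra cost. Both decompositions and both variance treatments give the same final order, so the difference is one of economy rather than of reach.
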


It seems that there is no  restriction on the number of local servers in Theorem \ref{Theorem:Generaliation-DWRLS}. However, Lemma \ref{Lemma:fixed cubature} shows that to admit a positive quadrature rule in local servers,  $s_j$ for every $1\leq j\leq m$ should satisfy $s_j\leq |D_j|^{1/d}$. But $s_j$ in Theorem \ref{Theorem:Generaliation-DWRLS} should satisfy
$$
           s_j\geq\lambda_j^{-1/\gamma}\geq C_2|D|^{ \frac{2}{2\gamma+d}},
$$
implying
$$
    C_2|D|^{ \frac{2}{2\gamma+d}}\leq |D_j|^{1/d},\qquad \forall j=1,2,\dots,m,
$$
where $C_2$ is an absolute  constant. If $|D_1|\sim|D_2|\sim\cdots\sim|D_m|$, the above inequality yields
\begin{equation}\label{cond-on-m}
    m\leq C_3 |D|^{\frac{2\gamma-d}{2\gamma+d}}
\end{equation}
for some absolute  constant $C_3$. This illustrates that to guarantee the conditions of Theorem \ref{Theorem:Generaliation-DWRLS}, the number of local servers should  not be so large, which alternately implies that $|D_j|$ should not be  small for every $j\in\{1,\dots,m\}$.

Comparing Theorem \ref{Theorem:Generaliation-DWRLS} with Theorem \ref{Theorem:Generaliation-WRLS}, we find that under (\ref{cond-on-m}) and $|D_1|\sim|D_2|\sim\cdots\sim|D_m|$, DWRLS performs similarly to WRLS in the sense that their approximation rates  are of the same order, showing the power of DWRLS to fit distributively stored spherical data. In Theorem \ref{Theorem:Generaliation-DWRLS}, we also find that     regularization parameters  in all local servers are of the same order and are similar to  that for WRLS in Theorem \ref{Theorem:Generaliation-WRLS}. It is practically difficult to determine such a perfect $\lambda$ under the distributed learning framework, since $\gamma$ is difficult to quantify in practice.  As shown in Theorem \ref{Theorem:Generaliation-WRLS},   the theoretically optimal regularization parameter satisfies $\lambda\sim |D|^{-1/(2r+s)}$ and it can be     realized by using the well known cross-validation approach \cite[Chap.7]{Gyorfi2002}.
However, in the distributed learning setting, there are only $|D_j|$ data in the $j$-th local server and we are capable of getting  a regularization $\lambda_j\sim |D_j|^{-2\gamma/(2\gamma+d)}$ via  cross-validation in the $j$-th server. Practically,  we can set $\hat{\lambda}_j= (\lambda_j)^{\log_{|D_j|}|D|}$. Then $\lambda_j\sim |D_j|^{-2\gamma/(2\gamma+d)}$ implies $\hat{\lambda}_j\sim|D|^{-2\gamma/(2\gamma+d)}$, which is theoretically optimal as shown in Theorem \ref{Theorem:Generaliation-DWRLS}.

A distributed filter  hyperinterpolation scheme has already been developed for distributively stored spherical data in our recent work \cite{Lin2021}, where   optimal approximation error estimates were derived under the random sampling setting. There are mainly four differences between Theorem \ref{Theorem:Generaliation-DWRLS} and results in \cite{Lin2021}. First, the learning schemes are different. In particular, we study radial basis function approximation based on   weighted regularized least squares   while \cite{Lin2021} considered spherical polynomial approximation that can be constructed directly.
Second, the types of data are different. To be detailed, we study deterministic and quasi-unform   sampling while \cite{Lin2021} focused on random sampling. Third, the measurements of error are different. Indeed, we consider Sobolev-type error estimates but \cite{Lin2021} conducted the analysis only in $L^2(\mathbb S^d)$.
 Finally, the analysis frameworks are different. In this paper, we develop a novel integral operator approach to analyze the feasibility of DWRLS while the analysis in \cite{Lin2021} follows from the standard concentration inequality approach in \cite{Zhang2015,Chang2017,Lin2017}.

\section{Integral Operator Approach Based on Spherical Positive Quadrature Rules}\label{Sec:operator-approach}

In this section, we propose an integral operator approach based on spherical positive quadrature rules to derive approximation error   of WRLS and DWRLS.

\subsection{Spectrum of spherical basis function and spherical quadrature formulas}
Let $\phi$ be an SBF.   Define the integral operator $L_\phi:L^2(\mathbb S^d)\rightarrow L^2(\mathbb S^d)$ by
$$
    L_\phi f(x):=\int_{\mathbb{S}^{d}}\phi(x\cdot x')f(x')d\omega(x'), \qquad f\in L^2(\mathbb S^d).
$$
The   Funk-Hecke formula \cite{Muller1966}
\begin{equation}\label{funkhecke}
                   L_\phi Y_{k,j} (x)=\hat{\phi}_kY_{k,j}(x),\quad\forall\ j=1,\dots,d_k^d,k=0,1,\dots
\end{equation}
shows   that the eigen-pairs of $L_\phi$ are
\begin{equation}\label{eigen-pairs}
   (\hat{\phi}_0,Y_{0,1}), (\hat{\phi}_1,Y_{1,1}),\dots,(\hat{\phi}_1,Y_{1,d_1^d}),\dots, (\hat{\phi}_k,Y_{k,1}),\dots,(\hat{\phi}_k, Y_{d_k^d}),\dots.
\end{equation}
This implies
$$
   L_\phi f(x)= \sum_{k=0}^\infty\sum_{\ell=1}^{d_k^{d}}\hat{f}_{k,\ell}L_\phi Y_{k,\ell}(x)
   =\sum_{k=0}^\infty\hat{\phi}_k\sum_{\ell=1}^{d_k^{d}}\hat{f}_{k,\ell} Y_{k,\ell}(x), \qquad f\in L^2(\mathbb S^d)
$$
and
\begin{equation}\label{norm-operator-2}
    \|f\|_{\psi}^2= \sum_{k=0}^\infty\hat{\psi}_k^{-1}
               \sum_{\ell=1}^{d_k^{d}}|\hat{f}_{k,\ell}|^2
               =
                \sum_{k=0}^\infty \hat{\phi}_k^{-1}
               \sum_{\ell=1}^{d_k^{d}}\left| \hat{\phi}_k ^{(1-r)/2}\hat{f}_{k,\ell}\right|^2
               =
               \|L_\phi^{(1-r)/2}f\|_\phi^2,
\end{equation}
where $
           \psi(t)
$
is another SBF with Fourier-Legendre coefficients satisfying (\ref{kernel-relation}),
 and $\eta(  L_\phi)$ is defined  by spectrum calculus, i.e.,
$$
   \eta(  L_\phi) f(x)
   =\sum_{k=0}^\infty \eta(\hat{\phi}_k)\sum_{\ell=1}^{d_k^{d}}\hat{f}_{k,\ell} Y_{k,\ell}(x).
$$
 If $r=0$ in  (\ref{kernel-relation}), then  (\ref{norm-operator-2}) implies
 $\|f\|_{L^2(\mathbb S^d)}= \|L_\phi^{1/2}f\|_\phi$.

%

Let $\mathcal L_\phi$ be the integral operator on $\mathcal N_\phi$ defined  \cite{Smale2007} by
$$
    \mathcal L_\phi f(x):=\int_{\mathbb{S}^{d}}\phi(x\cdot x')f(x')d\omega(x'), \qquad f\in \mathcal N_\phi.
$$
Then, it follows from (\ref{Native-space}) and (\ref{eigen-pairs})  that the eigen-pairs of $\mathcal L_\phi$ are
\begin{equation}\label{eigen-pairs-Native}
   (\hat{\phi}_0,\sqrt{\hat{\phi}_0}Y_{0,1}), (\hat{\phi}_1,\sqrt{\hat{\phi}_1}Y_{1,1}),\dots, (\hat{\phi}_k,\sqrt{\hat{\phi}_k}Y_{k,1}),\dots,(\hat{\phi}_k, \sqrt{\hat{\phi}_k}Y_{d_k^d}),\dots.
\end{equation}
Therefore, for any $f\in \mathcal N_\phi$, it follows from (\ref{Native-space}) and (\ref{funkhecke}) that
\begin{eqnarray*}
    \mathcal L_\phi f
    &=& \mathcal L_\phi \sum_{k=0}^\infty
               \sum_{\ell=1}^{d_k^{d}}
               \langle f,\sqrt{\hat{\phi}_k}Y_{k,\ell}\rangle_\phi \sqrt{\hat{\phi}_k}Y_{k,\ell}
     =
    \sum_{k=0}^\infty
               \sum_{\ell=1}^{d_k^{d}}
              (\hat{\phi}_k)^{-1/2} \hat{f}_{k,\ell}   \mathcal L_\phi \sqrt{\hat{\phi}_k}Y_{k,\ell}\\
    &=&\sum_{k=0}^\infty
               \sum_{\ell=1}^{d_k^{d}}\hat{f}_{k,\ell}  \mathcal L_\phi Y_{k,\ell}
    =
    \sum_{k=0}^\infty\hat{\phi}_k\sum_{\ell=1}^{d_k^{d}}\hat{f}_{k,\ell} Y_{k,\ell}
    =L_\phi f.
\end{eqnarray*}
This implies
\begin{equation}\label{operator-equivalent}
    \eta(\mathcal L_\phi) f=\eta(L_\phi) f,\qquad \forall f\in \mathcal N_\phi,
\end{equation}
and
\begin{eqnarray}\label{operator-linear}
     \eta(\mathcal L_\phi) (f+g)
     &= &
    \sum_{k=0}^\infty\eta(\hat{\phi}_k)\sum_{\ell=1}^{d_k^d}\widehat{(f+g)}_{k,\ell}Y_{k,\ell}
    =
    \sum_{k=0}^\infty\eta(\hat{\phi}_k)\sum_{\ell=1}^{d_k^d}(f_{k,\ell}+g_{k,\ell})Y_{k,\ell} \nonumber\\
    &=&
    \eta(\mathcal L_\phi)f+\eta(\mathcal L_\phi)g, \qquad \forall f,g\in \mathcal N_\phi.
\end{eqnarray}

%



We  then deduce a   quadrature rule  which will play a crucial role in our integral operator approach in the following proposition.

\begin{proposition}\label{Proposition:3}
Let $\Xi=\{x_i\}_{i=1}^{|\Xi|}\subset \mathbb S^d$ be a set of scattered data  and  $\mathcal Q_{\Xi,s}:=\{(w_{i,s},  x_i): w_{i,s}> 0
\hbox{~and~}   x_i\in \Xi\}$  be  a positive
 quadrature rule   on $\mathbb S^d$ with degree $s\in\mathbb N$.
If $\hat \phi_k\sim k^{-2\gamma}$ with $\gamma> d/2$ and $\eta_{\lambda,u}(t)=(t+\lambda)^{-u}$, then for any $f, g \in \mathcal N_\phi$, any $u,v\in[0,1)$ satisfying $u+v\leq 1$ and any $\lambda\geq s^{-2\gamma}$, there holds
\begin{eqnarray*}
&&
   \left|\int_{\mathbb{S}^{d}}  (\eta_{\lambda,u}( L_\phi)f)  (x)(\eta_{\lambda,v}( L_\phi)g)  (x) d \omega(x)-\sum_{x_i\in\Lambda} w_{i, s} \left[(\eta_{\lambda,u}( L_\phi)f)  (x_{i} )(\eta_{\lambda,v}( L_\phi)g) (x_i )\right]\right|\nonumber\\
   &\leq&  \tilde{c}\|f\|_\phi\|g\|_\phi
   \left\{\begin{array}{cc}
   \lambda^{-u}s^{-\gamma}+\lambda^{-v}s^{-\gamma}+\lambda^{-u-v}s^{-2\gamma}+s^{-(1-u-v)\gamma},     & \mbox{if}\ u+v<1, \\
     \lambda^{-u}s^{-\gamma}+\lambda^{-v}s^{-\gamma}+\lambda^{-u-v}s^{-2\gamma}+\log s,   & \mbox{if}\ u+v=1,
   \end{array}
   \right.
\end{eqnarray*}
where $\tilde{c}$ is a constant depending only on
$\gamma$, $u,v$ and $d$.
\end{proposition}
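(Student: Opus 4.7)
The plan is to exploit the spectral decomposition of the functions $F := \eta_{\lambda,u}(L_\phi)f$ and $G := \eta_{\lambda,v}(L_\phi)g$ via the Funk--Hecke identity together with the exactness of the quadrature rule on $\Pi_s^d$. Setting $s_1 := \lfloor s/2 \rfloor$, I would write $F = F_{s_1} + F^{s_1}$ and $G = G_{s_1} + G^{s_1}$, where $F_{s_1}$, $G_{s_1}$ denote the $L^2$-projections onto $\Pi_{s_1}^d$ and $F^{s_1}$, $G^{s_1}$ their complements. The key observation is that $F_{s_1}G_{s_1} \in \Pi_s^d$, so by quadrature exactness $\sum_i w_{i,s}F_{s_1}(x_i)G_{s_1}(x_i) = \int F_{s_1}G_{s_1}\,d\omega$, and by $L^2$-orthogonality $\int F_{s_1}G^{s_1}\,d\omega = \int F^{s_1}G_{s_1}\,d\omega = 0$. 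Thus the quadrature error collapses to
\[
\int F^{s_1}G^{s_1}\,d\omega \;-\; \sum_{i} w_{i,s}\bigl[F_{s_1}G^{s_1} + F^{s_1}G_{s_1} + F^{s_1}G^{s_1}\bigr](x_i).
\]

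The central ingredients are two estimates for each tail. For the $L^2$-decay, Parseval combined with $(\hat\phi_k + \lambda)^{-u} \leq \lambda^{-u}$ and $\hat\phi_k \leq Ck^{-2\gamma}\leq Cs^{-2\gamma}$ for $k > s_1$ (justified by $\lambda \geq s^{-2\gamma}$) yields $\|F^{s_1}\|_{L^2} \leq C\lambda^{-u}s^{-\gamma}\|f\|_\phi$. For pointwise control, the reproducing kernel bound gives $\|F^{s_1}\|_\infty \leq \|F^{s_1}\|_\phi\sqrt{\phi^{s_1}(1)}$, where $\phi^{s_1}(1) = \sum_{k>s_1}\hat\phi_k\,d_k^d/\Omega_d \leq Cs^{d-2\gamma}$ by $\gamma > d/2$. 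The Marcinkiewicz--Zygmund identity $\sum_i w_{i,s}|P(x_i)|^2 = \|P\|_{L^2}^2$ for $P \in \Pi_{s_1}^d$ is automatic from quadrature exactness on $\Pi_s^d$.

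Combining these with Cauchy--Schwarz in the weighted inner product, the continuous tail-tail integral contributes $\lambda^{-u-v}s^{-2\gamma}$, while the two mixed terms $\sum_i w_{i,s}F_{s_1}G^{s_1}$ and $\sum_i w_{i,s}F^{s_1}G_{s_1}$ contribute $\lambda^{-v}s^{-\gamma}$ and $\lambda^{-u}s^{-\gamma}$ respectively, using the $L^2$-tail bound paired with $\|F_{s_1}\|_{L^2}\leq\|F\|_{L^2}$ (controlled by $\|f\|_\phi$ times a bounded factor thanks to the full spectral formula for $\|F\|_{L^2}^2$). The genuinely delicate piece is the purely discrete tail-tail term $\sum_i w_{i,s}F^{s_1}(x_i)G^{s_1}(x_i)$: a naive application of the uniform bound on $F^{s_1},G^{s_1}$ would lose a factor of order $s^d$. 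I would instead decompose each tail dyadically into frequency bands $\{2^j s_1 < k \leq 2^{j+1} s_1\}$, estimate each band both in $L^2$ via the spectral formula and in the weighted $\ell^2$-sum via the Marcinkiewicz--Zygmund inequality applied to its polynomial projection, then sum the resulting geometric series in $j$.

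The origin of the case split in the conclusion lies precisely in this geometric sum. When $u+v < 1$, the per-band contribution behaves like $(2^j)^{-(1-u-v)\gamma}$ and is summable, producing the term $s^{-(1-u-v)\gamma}$; when $u+v=1$ the series becomes harmonic in $j$ and has to be truncated at the highest relevant frequency $j \sim \log_2 s$, yielding the $\log s$ factor. The main obstacle I anticipate is executing these dyadic estimates with the correct simultaneous $\lambda$- and $s$-dependence across all four error terms; the restriction $\lambda\geq s^{-2\gamma}$ is exactly what aligns the two regimes $(\hat\phi_k+\lambda)^{-u}\sim \lambda^{-u}$ (at high frequencies) and $(\hat\phi_k+\lambda)^{-u}\leq \hat\phi_k^{-u}$ (at low frequencies), and makes the book-keeping collapse to the four stated terms.
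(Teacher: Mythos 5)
Your route is sound and would prove the proposition, but it is genuinely different from the paper's. The paper performs a full double dyadic decomposition of \emph{both} factors, writing $\eta_{\lambda,u}(L_\phi)f=\sum_j(f_{j,\lambda,u}-f_{j-1,\lambda,u})$ with blocks of degree $2^j$, kills the blocks with $2^{j+\ell}\leq 2^L\leq s$ by quadrature exactness, and then estimates every surviving block pair via the Marcinkiewicz--Zygmund inequalities and the $L^2$-decay of the blocks. Because the criterion $2^{j+\ell}\leq s$ is conservative (the product of blocks of degrees $2^j$ and $2^\ell$ actually has degree $2^j+2^\ell$), a family of near-diagonal low-frequency pairs with $j,\ell\leq L$ but $j+\ell>L$ survives; summing these gives the term $I_1$, which is precisely the source of the fourth summand $s^{-(1-u-v)\gamma}$ and of the $\log s$ case at $u+v=1$. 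Your single cut at degree $s_1=\lfloor s/2\rfloor$ makes the low--low product exactly integrable and the cross terms orthogonal in $L^2$, so this family never appears: carried through, your estimates yield $\lambda^{-u}s^{-\gamma}+\lambda^{-v}s^{-\gamma}+\lambda^{-u-v}s^{-2\gamma}$ with no fourth term and no case split, which is a (slightly) sharper bound that implies the stated one a fortiori.

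Two points need correction, neither fatal. First, your explanation of the case split is inconsistent with your own method: in the dyadic treatment of the discrete tail sums, the per-band factor from Marcinkiewicz--Zygmund is $(2^j)^{d/2}$ against an $L^2$-band norm of order $\lambda^{-u}(2^js)^{-\gamma}$ (using $\lambda\geq s^{-2\gamma}\geq(2^js)^{-2\gamma}$), so the series $\sum_j 2^{j(d/2-\gamma)}$ converges geometrically for every $u,v$ because $\gamma>d/2$; it never becomes harmonic, and the discrete tail--tail term is simply $O(\lambda^{-u-v}s^{-2\gamma})$. Second, your treatment of the mixed terms as stated is incomplete: bounding $\sum_i w_{i,s}F_{s_1}(x_i)G^{s_1}(x_i)$ by Cauchy--Schwarz requires the \emph{discrete} weighted $\ell^2$-norm of the non-polynomial tail $G^{s_1}$, which the continuous $L^2$-tail bound does not control directly; you must run the same band-by-band Marcinkiewicz--Zygmund argument there that you describe for the tail--tail term (it yields the same order $\lambda^{-v}s^{-\gamma}\|g\|_\phi$). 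Also note that $\|F_{s_1}\|_{L^2}\lesssim\|f\|_\phi$ only holds for $u\leq 1/2$; for $u>1/2$ one gets $\lambda^{1/2-u}\|f\|_\phi$, which is still absorbed into $\lambda^{-u}s^{-\gamma}$ since $u+v\leq1$ forces $v\leq1/2$ in that case.
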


The proof of Proposition \ref{Proposition:3} is a little bit standard, requiring the usage of the aforementioned spectrum analysis and spherical polynomials approximation \cite{Dai2006,Dai2006a}. We move it to Appendix for the sake of brevity. If we set $u=v=0$ and $g(x)\equiv1$, Proposition \ref{Proposition:3} is the classical spherical positive quadrature rule for Sobolev space established in \cite{Brauchart2007}.

\subsection{Operator representation and operator differences}
Let $\phi$ be a positive definite function.
Define  $S_Df=(f(x_1),\dots,f(x_{|D|}))^T$ and
\begin{equation}\label{def.adjoint-operator}
    S^T_{D,W_s}{\bf c}= \sum_{i=1}^{|D|}w_{i,s}c_i\phi_{x_i}.
\end{equation}
Write
\begin{equation}\label{def.empirical-operator}
     L_{\phi,D,W_s}f:=S^T_{D,W_s}S_Df=\sum_{i=1}^{|D|}w_{i,s}f(x_i)\phi_{x_i}.
\end{equation}
Then it is easy to check that $L_{\phi,D,W_s}:\mathcal N_\phi\rightarrow\mathcal N_\phi$ is a positive operator of finite rank. The following proposition  presents the operator representation of $f_{D,\lambda,W_s}$.
\begin{proposition}\label{Proposition:operator-KRR}
Let $f_{D,W_s,\lambda}$ be defined by (\ref{WRLS}) with $D_j$ being replaced by $D$. Then, $L_{\phi,D,W_s}:\mathcal N_\phi\rightarrow\mathcal N_\phi$ is a positive operator and
\begin{equation}\label{operator-KRR}
      f_{D,W_s,\lambda}=(L_{\phi,D,W_s}+\lambda I)^{-1}S^T_{D,W_s}y_D.
\end{equation}
\end{proposition}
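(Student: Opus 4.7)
The plan proceeds in two parts: first establish that $L_{\phi,D,W_s}$ is a self-adjoint positive operator on $\mathcal N_\phi$ (so that $L_{\phi,D,W_s}+\lambda I$ is boundedly invertible), and then derive (\ref{operator-KRR}) from the first-order optimality condition for the strictly convex WRLS objective.

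For the first part, the key tool is the reproducing property $f(x_i)=\langle f,\phi_{x_i}\rangle_\phi$ together with definition (\ref{def.empirical-operator}). A direct computation shows that for any $f,g\in\mathcal N_\phi$,
$$
\langle L_{\phi,D,W_s}f,g\rangle_\phi
=\sum_{i=1}^{|D|}w_{i,s}f(x_i)\langle\phi_{x_i},g\rangle_\phi
=\sum_{i=1}^{|D|}w_{i,s}f(x_i)g(x_i),
$$
which is symmetric in $f,g$ and nonnegative for $g=f$ because $w_{i,s}\geq 0$. Hence $L_{\phi,D,W_s}$ is self-adjoint and positive semi-definite, and of finite rank (being supported on $\mathrm{span}\{\phi_{x_i}\}$). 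In particular $L_{\phi,D,W_s}+\lambda I\geq\lambda I$ is strictly positive, so its inverse is well defined and bounded by $1/\lambda$.

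For the second part, I would compute the Fr\'echet derivative of the WRLS objective along an arbitrary direction $h\in\mathcal N_\phi$. Using $f(x_i)=\langle f,\phi_{x_i}\rangle_\phi$, the first variation at $f$ equals
$$
2\sum_{i=1}^{|D|}w_{i,s}\bigl(f(x_i)-y_i\bigr)\langle h,\phi_{x_i}\rangle_\phi
+2\lambda\langle f,h\rangle_\phi
=2\bigl\langle L_{\phi,D,W_s}f-S^T_{D,W_s}y_D+\lambda f,\,h\bigr\rangle_\phi,
$$
where I invoke (\ref{def.empirical-operator}) and (\ref{def.adjoint-operator}) to collect the two sums as $L_{\phi,D,W_s}f$ and $S^T_{D,W_s}y_D$, respectively. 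Setting this to zero for every $h\in\mathcal N_\phi$ yields $(L_{\phi,D,W_s}+\lambda I)f_{D,W_s,\lambda}=S^T_{D,W_s}y_D$, and strict convexity of the WRLS functional (ensured by $\lambda>0$) guarantees that this stationary point is the unique global minimizer. Applying the resolvent $(L_{\phi,D,W_s}+\lambda I)^{-1}$ to both sides gives (\ref{operator-KRR}).

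No substantive obstacle is anticipated. The only point requiring care is the identification of each term in the first variation as an element of $\mathcal N_\phi$ via the reproducing property; everything else is a routine manipulation of bounded operators on a Hilbert space.
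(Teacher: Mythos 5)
Your proposal is correct and follows essentially the same route as the paper: the first-order optimality (functional derivative) condition $\sum_i w_{i,s}(f(x_i)-y_i)\phi_{x_i}+\lambda f=0$, rewritten via (\ref{def.empirical-operator}) and (\ref{def.adjoint-operator}) as $(L_{\phi,D,W_s}+\lambda I)f_{D,W_s,\lambda}=S^T_{D,W_s}y_D$, followed by inversion. Your explicit verification of self-adjointness and positivity of $L_{\phi,D,W_s}$ via the reproducing property, and your appeal to strict convexity for uniqueness of the minimizer, merely fill in details the paper leaves as ``easy to check.''
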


Proposition \ref{Proposition:operator-KRR} is a standard result in kernel-based learning \cite{Smale2005,Smale2007}. We provide its proof in Appendix for the sake of completeness.
We then derive tight bounds for differences between the operator $\mathcal L_{\phi}$ and its empirical counterpart $L_{\phi,D,W_s}$, which is the core in our analysis. Our first result concerns an upper bound of $\|\mathcal L_{\phi}-L_{\phi,D,W_s}\|$, where $\|A\|$ denotes the spectral norm of the operator $A$.

\begin{proposition}\label{Proposition:operator-difference}
Let $\mathcal Q_{\Lambda,s}:=\{(w_{i,s},  x_i): w_{i,s}\geq 0
\hbox{~and~}   x_i\in \Lambda\}$  be  a positive
 quadrature rule   on $\mathbb S^d$ with degree $s\in\mathbb N$.
If $\hat \phi_k\sim k^{-2\gamma}$  with $\gamma> d/2$,
then for any $0\leq v< 1$ and $0<\lambda<1$, there holds
$$
  \|(\mathcal L_\phi+\lambda I )^{-v}(L_{\phi,D,W_s}-\mathcal L_\phi)\|\leq \tilde{c}(\lambda^{-v}s^{-\gamma}+ s^{-(1-v)\gamma}).
$$
\end{proposition}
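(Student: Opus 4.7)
My strategy is to convert the operator-norm estimate into a quadrature-error bound on $\mathcal N_\phi$ inner products and then invoke Proposition \ref{Proposition:3} with $u=0$. Because the spectral representation (\ref{eigen-pairs-Native}) together with the definition of $\langle\cdot,\cdot\rangle_\phi$ yields $\langle \mathcal L_\phi f, g\rangle_\phi = \sum_{k,\ell}\hat f_{k,\ell}\hat g_{k,\ell}$, the operator $\mathcal L_\phi$ is self-adjoint on $\mathcal N_\phi$, and hence so is $T := (\mathcal L_\phi + \lambda I)^{-v}$ defined by spectral calculus. Using $\|A\| = \sup_{\|f\|_\phi = \|g\|_\phi = 1} |\langle Af, g\rangle_\phi|$, I would transfer $T$ onto the second argument and set $h := Tg = \eta_{\lambda,v}(L_\phi) g$, the last equality coming from (\ref{operator-equivalent}). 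Invoking the reproducing property $\langle \phi_{x_i}, h\rangle_\phi = h(x_i)$ on the empirical term and the identity $\langle \mathcal L_\phi f, h\rangle_\phi = \int_{\mathbb S^d} f(x)h(x)\,d\omega(x)$ (a Parseval computation analogous to the previous one) on the integral term, one obtains
$$\langle T(L_{\phi,D,W_s} - \mathcal L_\phi)f,\, g\rangle_\phi \;=\; \sum_{i} w_{i,s}\, f(x_i) h(x_i) \,-\, \int_{\mathbb S^d} f(x) h(x)\,d\omega(x),$$
which is precisely a quadrature error for the product $fh$.

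Next, assuming $\lambda\geq s^{-2\gamma}$, I would write $f = \eta_{\lambda,0}(L_\phi)f$ and apply Proposition \ref{Proposition:3} with $u=0$ and the same $v<1$, producing
$$\Bigl|\sum_{i}w_{i,s}f(x_i)h(x_i)-\int_{\mathbb S^d}fh\,d\omega\Bigr|\leq \tilde c\|f\|_\phi\|g\|_\phi\bigl(s^{-\gamma}+\lambda^{-v}s^{-\gamma}+\lambda^{-v}s^{-2\gamma}+s^{-(1-v)\gamma}\bigr).$$
The assumption $\lambda<1$ forces $\lambda^{-v}\geq 1$ and, for $s\geq 1$, $s^{-2\gamma}\leq s^{-\gamma}$, so the first three terms collapse into $\lambda^{-v}s^{-\gamma}$ up to a universal constant. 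Taking the supremum over unit-norm $f,g$ yields the announced estimate in this regime.

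For the residual regime $\lambda<s^{-2\gamma}$, the hypothesis of Proposition \ref{Proposition:3} is not met, so I would fall back on $\|T\|\leq \lambda^{-v}$ combined with the $v=0$ specialisation of the present claim: applying Proposition \ref{Proposition:3} with $u=v=0$ gives $\|L_{\phi,D,W_s}-\mathcal L_\phi\|\leq Cs^{-\gamma}$ without any restriction on $\lambda$, whence $\|T(L_{\phi,D,W_s}-\mathcal L_\phi)\|\leq C\lambda^{-v}s^{-\gamma}$, again within the target bound. The principal obstacle is not computational but the careful identification of the two Hilbert-space structures involved: one must verify that $\eta_{\lambda,v}(\mathcal L_\phi)$ acting on $\mathcal N_\phi$ coincides with $\eta_{\lambda,v}(L_\phi)$ on the spherical-harmonic Fourier side, so that Proposition \ref{Proposition:3}, phrased in terms of $L_\phi$, can be invoked on an expression that originates as a statement about $\mathcal L_\phi$. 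Identity (\ref{operator-equivalent}) is precisely the bridge that legitimises this passage.
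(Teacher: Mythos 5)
Your proposal is correct and follows essentially the same route as the paper: express the operator norm as a supremum of $\langle (L_{\phi,D,W_s}-\mathcal L_\phi)f,(\mathcal L_\phi+\lambda I)^{-v}g\rangle_\phi$, convert it via the reproducing property into a quadrature error for the product $f\cdot(\mathcal L_\phi+\lambda I)^{-v}g$, and invoke Proposition \ref{Proposition:3} with $u=0$. Your separate treatment of the regime $\lambda<s^{-2\gamma}$ is a welcome extra precaution --- the paper's proof applies Proposition \ref{Proposition:3} without comment on its hypothesis $\lambda\geq s^{-2\gamma}$, which is only guaranteed in the downstream applications where $s\geq\lambda^{-1/\gamma}$.
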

\begin{proof}
Due to the definition  of operator norm and $(\mathcal L_\phi+\lambda I )^{-v} g\in \mathcal N_\phi$, we have
\begin{eqnarray*}
  &&\|(\mathcal L_\phi+\lambda I)^{-v}(L_{\phi,D,W_s}-\mathcal L_\phi)\|=\sup_{\|f\|_\phi\leq 1}\|(\mathcal L_\phi+\lambda I)^{-v}(L_{\phi,D,W_s}-\mathcal L_\phi)f\|_\phi\\
  &=&
  \sup_{\|f\|_\phi\leq 1}\sup_{\|g\|_\phi\leq 1}
  \langle (\mathcal L_\phi+\lambda I)^{-v}(L_{\phi,D,W_s}-\mathcal L_\phi)f,g\rangle_\phi\\
  &=&
  \sup_{\|f\|_\phi\leq 1}\sup_{\|g\|_\phi\leq 1}\langle (L_{\phi,D,W_s}-\mathcal L_\phi)f,
  (\mathcal L_\phi+\lambda I)^{-v}g\rangle_\phi\\
  &=&
   \sup_{\|g\|_\phi\leq 1,\|f\|_\phi\leq 1}
   \left|\left\langle\int_{\mathbb S^d}f(x')\phi_{x'}d\omega(x')-\sum_{i=1}^{|D|}w_{i,s}f(x_i)\phi_{x_i}, (\mathcal L_\phi+\lambda  I )^{-v}g\right\rangle_\phi\right|\\
   &=&
   \sup_{\|g\|_\phi\leq 1,\|f\|_\phi\leq 1} \left|\int_{\mathbb S^d}f(x')\langle\phi_{x'},(\mathcal L_\phi+\lambda I)^{-v} g\rangle_\phi d\omega(x')-\sum_{i=1}^{|D|}w_{i,s}f(x_i)\langle\phi_{x_i}, (\mathcal L_\phi+\lambda I)^{-v}g\rangle_\phi\right|\\
   &=&
   \sup_{\|g\|_\phi\leq 1,\|f\|_\phi\leq 1}\left|\int_{\mathbb S^d}f(x')(\mathcal L_\phi+\lambda I )^{-v}g(x')d\omega(x')-\sum_{i=1}^{|D|}w_{i,s}f(x_i)(\mathcal L_\phi+\lambda I)^{-v}g(x_i)\right|.
\end{eqnarray*}
Then, it  follows from Proposition \ref{Proposition:3} with $u=0$  that
\begin{eqnarray*}
   \|(\mathcal L_\phi+\lambda I )^{-v}(L_{\phi,D,W_s}-\mathcal L_\phi)\|\leq \tilde{c}(\lambda^{-v}s^{-\gamma}+ s^{-(1-v)\gamma}).
\end{eqnarray*}
This completes the proof of Proposition \ref{Proposition:operator-difference}.
\end{proof}

Our next tool  concerns   bounds of operator products.

\begin{proposition}\label{Proposition:Product}
Let $0\leq v\leq 1/2$, $0<\lambda<1$ and $\mathcal Q_{\Lambda,s}:=\{(w_{i,s},  x_i): w_{i,s}\geq 0
\hbox{~and~}   x_i\in \Lambda\}$  be  a positive
 quadrature rule   on $\mathbb S^d$ with degree $s\in\mathbb N$.
If $\hat \phi_k\sim k^{-2\gamma}$ with $\gamma> d/2$,
then
\begin{equation}\label{product-1}
   \|(\mathcal L_\phi+\lambda I)^{-1}(L_{\phi,D,W_s}+\lambda I)\|
   \leq
   \tilde{c}(\lambda^{-1}s^{-\gamma}+\lambda^{-1+v}s^{-(1-v)\gamma})+1
\end{equation}
and
\begin{equation}\label{product-2}
     \|(L_{\phi,D,W_s}+\lambda I)^{-1}(\mathcal L_{\phi}+\lambda I)\|
    \leq 2\tilde{c}^2  \lambda^{-2+2v} s^{-2(1-v)\gamma}+2\tilde{c}^2\lambda^{-2}s^{-2\gamma}+
    \tilde{c}\lambda^{-1+v}s^{-(1-v)\gamma}+\tilde{c}\lambda^{-1}s^{-\gamma}+1.
\end{equation}
\end{proposition}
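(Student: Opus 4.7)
The plan is to rewrite each operator product as the identity plus an error term and then estimate the error norm by combining the resolvent identity with Proposition \ref{Proposition:operator-difference}, which already controls norms of the form $\|(\mathcal L_\phi+\lambda I)^{-v}(L_{\phi,D,W_s}-\mathcal L_\phi)\|$.

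For (\ref{product-1}) I would start from the algebraic identity
\[(\mathcal L_\phi+\lambda I)^{-1}(L_{\phi,D,W_s}+\lambda I)=I+(\mathcal L_\phi+\lambda I)^{-1}(L_{\phi,D,W_s}-\mathcal L_\phi),\]
so the triangle inequality reduces the task to controlling $\|(\mathcal L_\phi+\lambda I)^{-1}(L_{\phi,D,W_s}-\mathcal L_\phi)\|$. I would split $(\mathcal L_\phi+\lambda I)^{-1}=(\mathcal L_\phi+\lambda I)^{-(1-v)}(\mathcal L_\phi+\lambda I)^{-v}$ by spectral calculus; the first factor has norm at most $\lambda^{-(1-v)}$ since $\mathcal L_\phi$ is positive and $0\leq 1-v\leq 1$, while Proposition \ref{Proposition:operator-difference} bounds the second by $\tilde c(\lambda^{-v}s^{-\gamma}+s^{-(1-v)\gamma})$. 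Multiplying and adding $1$ gives exactly (\ref{product-1}).

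For (\ref{product-2}) the analogous opening identity yields $I+(L_{\phi,D,W_s}+\lambda I)^{-1}(\mathcal L_\phi-L_{\phi,D,W_s})$, but the factor $(L_{\phi,D,W_s}+\lambda I)^{-1}$ is not directly accessible to Proposition \ref{Proposition:operator-difference}. Writing $L=L_{\phi,D,W_s}$, $\mathcal L=\mathcal L_\phi$, $\Delta=\mathcal L-L$, I would apply the second resolvent identity $(L+\lambda I)^{-1}=(\mathcal L+\lambda I)^{-1}+(L+\lambda I)^{-1}\Delta(\mathcal L+\lambda I)^{-1}$ and multiply on the right by $\Delta$ to obtain
\[(L+\lambda I)^{-1}\Delta=(\mathcal L+\lambda I)^{-1}\Delta+(L+\lambda I)^{-1}\Delta(\mathcal L+\lambda I)^{-1}\Delta.\]
The first term on the right is bounded exactly as in (\ref{product-1}), producing the linear contribution $\tilde c(\lambda^{-1}s^{-\gamma}+\lambda^{-1+v}s^{-(1-v)\gamma})$ in the final bound.

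The second-order term is the main obstacle. A naive split such as $\|(L+\lambda I)^{-1}\|\cdot\|\Delta\|\cdot\|(\mathcal L+\lambda I)^{-1}\Delta\|$ would produce cross-terms of the form $\lambda^{-2+v}s^{-(2-v)\gamma}$ rather than the clean $\lambda^{-2+2v}s^{-2(1-v)\gamma}$ that appears in the statement. The key observation is that the hypothesis $v\leq 1/2$ makes $1-2v\geq 0$, which permits the symmetric three-factor decomposition
\[(\mathcal L+\lambda I)^{-1}=(\mathcal L+\lambda I)^{-v}(\mathcal L+\lambda I)^{-(1-2v)}(\mathcal L+\lambda I)^{-v}\]
via spectral calculus. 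Inserting this and using submultiplicativity gives
\[\|(L+\lambda I)^{-1}\Delta(\mathcal L+\lambda I)^{-1}\Delta\|\leq\lambda^{-1}\cdot\|\Delta(\mathcal L+\lambda I)^{-v}\|\cdot\lambda^{-(1-2v)}\cdot\|(\mathcal L+\lambda I)^{-v}\Delta\|.\]
By self-adjointness both $\Delta$-containing factors equal $\|(\mathcal L+\lambda I)^{-v}\Delta\|$, which Proposition \ref{Proposition:operator-difference} bounds by $\tilde c(\lambda^{-v}s^{-\gamma}+s^{-(1-v)\gamma})$. The inequality $(a+b)^2\leq 2(a^2+b^2)$ then turns the squared bound into $2\tilde c^2(\lambda^{-2v}s^{-2\gamma}+s^{-2(1-v)\gamma})$, and multiplying by $\lambda^{-1}\cdot\lambda^{-(1-2v)}=\lambda^{-2+2v}$ produces exactly the quadratic contribution $2\tilde c^2(\lambda^{-2}s^{-2\gamma}+\lambda^{-2+2v}s^{-2(1-v)\gamma})$ in (\ref{product-2}). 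Combining the linear and quadratic pieces and adding $1$ for the identity completes the argument.
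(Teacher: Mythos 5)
Your proof is correct and follows essentially the same route as the paper: the first-order identity $A^{-1}B=A^{-1}(B-A)+I$ combined with Proposition \ref{Proposition:operator-difference} for (\ref{product-1}), and the second-order decomposition $A^{-1}B=A^{-1}(B-A)B^{-1}(B-A)+B^{-1}(B-A)+I$ (which you rederive via the resolvent identity rather than quoting it from the literature) for (\ref{product-2}). Your explicit symmetric split $(\mathcal L_\phi+\lambda I)^{-1}=(\mathcal L_\phi+\lambda I)^{-v}(\mathcal L_\phi+\lambda I)^{-(1-2v)}(\mathcal L_\phi+\lambda I)^{-v}$ together with self-adjointness of $\mathcal L_\phi-L_{\phi,D,W_s}$ is exactly the step the paper leaves implicit when it invokes $v\leq 1/2$, and it reproduces the stated constants precisely.
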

\begin{proof}
For positive operators $A,B$, since
$$
       A^{-1}B=(A^{-1}-B^{-1})B+I=A^{-1}(B-A)+I,
$$
we have
$$
     (\mathcal L_\phi+\lambda I)^{-1}(L_{\phi,D,W_s}+\lambda I)
      =
     (\mathcal L_\phi+\lambda I)^{-1}(L_{\phi,D,W_s}-\mathcal L_\phi)+I.
$$
Then it follows Proposition \ref{Proposition:operator-difference} that
\begin{eqnarray*}
   &&\|(\mathcal L_\phi+\lambda I)^{-1}(L_{\phi,D,W_s}+\lambda I)\|
   \leq
   \|(\mathcal L_\phi+\lambda I)^{-1}(L_{\phi,D,W_s}-\mathcal L_\phi)\|+1\\
   &\leq&
   \|(\mathcal L_\phi+\lambda I)^{-1+v}\| \|(\mathcal L_\phi+\lambda I)^{-v}(L_{\phi,D,W_s}-\mathcal L_\phi)\|+1
   \leq
   \tilde{c}\lambda^{-1+v}(\lambda^{-v}s^{-\gamma}+ s^{-(1-v)\gamma})+1.
\end{eqnarray*}
This proves (\ref{product-1}). Concerning (\ref{product-2}), we use the second order decomposition developed in \cite{Lin2017}, i.e.,
$$
       A^{-1}B= A^{-1}(B-A)B^{-1}(B-A)+B^{-1}(B-A)+I
$$
and obtain
\begin{eqnarray*}
     (L_{\phi,D,W_s}+\lambda I)^{-1}(\mathcal L_{\phi}+\lambda I)
      &=&
     (L_{\phi,D,W_s}+\lambda I)^{-1}(\mathcal L_\phi-L_{\phi,D,W_s})(\mathcal L_{\phi}+\lambda I)^{-1} (\mathcal L_\phi-L_{\phi,D,W_s})\\
     &+&
     (\mathcal L_{\phi}+\lambda I)^{-1} (\mathcal L_\phi-L_{\phi,D,W_s})
     +I.
\end{eqnarray*}
Therefore, if $v\leq 1/2$, we have from (\ref{product-1}) and Proposition \ref{Proposition:operator-difference} that
$$
    \|(L_{\phi,D,W_s}+\lambda I)^{-1}(\mathcal L_{\phi}+\lambda I)\|
    \leq 2\tilde{c}^2  \lambda^{-2+2v} s^{-2(1-v)\gamma}+2\tilde{c}^2\lambda^{-2}s^{-2\gamma}+\tilde{c}\lambda^{-1+v}s^{-(1-v)\gamma}+\tilde{c}\lambda^{-1}s^{-\gamma}+1.
$$
This completes the proof of Proposition \ref{Proposition:Product}.
\end{proof}

Our next bound is  on the difference between $L_{\phi,D,W_s}f_\rho$ and $S_{D,W_s}^Ty_D$.
%
%
%
%
%

 \begin{proposition}\label{Proposition:value-difference-random}
Let $0<\delta<1$,    $\mathcal Q_{\Lambda,s}:=\{(w_{i,s},  x_i): w_{i,s}\geq 0
\hbox{~and~}   x_i\in \Lambda\}$  be  a quadratic rule on the sphere with $0\leq w_{i,s}\leq c_1|D|^{-1}$. If  $\hat{\phi}_k\sim k^{-2\gamma}$ with $\gamma>d/2$ and $y_i=f^*(x_i)+\varepsilon_i$ with $\varepsilon_i$ i.i.d. random noise satisfying $E[\varepsilon_i]=0$ and $|\varepsilon_i|\leq M$ for some $M>0$, then with confidence $1-\delta$, there holds
$$
    \left\|(\mathcal L_\phi+\lambda I)^{-1/2}(L_{\phi,D,W_s}f^*-S_{D,W_s}^Ty_D)\right\|_\phi
    \leq
    \tilde{c}'M \lambda^{-\frac{d}{4\gamma}} |D|^{-1/2}\log\frac3\delta,
$$
where $ \tilde{c}'$ is a constant depending only on $d$ and $c_1$.
\end{proposition}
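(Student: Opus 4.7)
The starting point is the observation that the difference we must control has the simple form
\[
 L_{\phi,D,W_s}f^*-S_{D,W_s}^Ty_D
 =\sum_{i=1}^{|D|}w_{i,s}f^*(x_i)\phi_{x_i}-\sum_{i=1}^{|D|}w_{i,s}y_i\phi_{x_i}
 =-\sum_{i=1}^{|D|}w_{i,s}\varepsilon_i\,\phi_{x_i}.
\]
Since the $\varepsilon_i$ are independent, mean-zero, and bounded, the vectors $\zeta_i:=(\mathcal L_\phi+\lambda I)^{-1/2}(w_{i,s}\varepsilon_i\phi_{x_i})\in\mathcal N_\phi$ are independent mean-zero random elements whose sum (up to sign) is exactly what the proposition bounds. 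The natural tool is therefore a Pinelis/Yurinsky-type Bernstein inequality in the Hilbert space $\mathcal N_\phi$: for independent, centered, $\mathcal N_\phi$-valued random variables with $\|\zeta_i\|_\phi\le B$ a.s.\ and $\sum_i E\|\zeta_i\|_\phi^2\le V$, one has $\|\sum_i\zeta_i\|_\phi\lesssim B\log(2/\delta)+\sqrt{V\log(2/\delta)}$ with probability at least $1-\delta$. So the plan reduces to estimating $B$ and $V$.

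The decisive quantity for both is $\|(\mathcal L_\phi+\lambda I)^{-1/2}\phi_x\|_\phi$, an effective-dimension type estimate. Starting from $\phi_x=\sum_{k,\ell}\hat{\phi}_kY_{k,\ell}(x)Y_{k,\ell}$ (via the addition formula \eqref{jiafadingli}) and using that $\mathcal L_\phi Y_{k,\ell}=\hat{\phi}_kY_{k,\ell}$ on $\mathcal N_\phi$ together with the $\mathcal N_\phi$-inner product formula $\langle f,g\rangle_\phi=\sum_{k,\ell}\hat{\phi}_k^{-1}\hat{f}_{k,\ell}\hat{g}_{k,\ell}$, one computes
\[
 \|(\mathcal L_\phi+\lambda I)^{-1/2}\phi_x\|_\phi^2
 =\langle(\mathcal L_\phi+\lambda I)^{-1}\phi_x,\phi_x\rangle_\phi
 =\sum_{k=0}^\infty\frac{\hat{\phi}_k}{\hat{\phi}_k+\lambda}\sum_{\ell=1}^{d_k^d}|Y_{k,\ell}(x)|^2
 =\sum_{k=0}^\infty\frac{\hat{\phi}_k}{\hat{\phi}_k+\lambda}\cdot\frac{d_k^d}{\Omega_d}.
\]
Splitting this sum at $k^\star\sim\lambda^{-1/(2\gamma)}$ and using $\hat{\phi}_k\sim k^{-2\gamma}$, $d_k^d\sim k^{d-1}$ (the low-$k$ block contributes $\lesssim\sum_{k\le k^\star}k^{d-1}\sim\lambda^{-d/(2\gamma)}$ and the tail contributes $\lesssim\lambda^{-1}\sum_{k>k^\star}k^{d-1-2\gamma}\sim\lambda^{-d/(2\gamma)}$ since $2\gamma>d$) gives the uniform bound $\|(\mathcal L_\phi+\lambda I)^{-1/2}\phi_x\|_\phi\le c\,\lambda^{-d/(4\gamma)}$ for all $x\in\mathbb S^d$.

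Combining this with $w_{i,s}\le c_1|D|^{-1}$, $|\varepsilon_i|\le M$, and the identity $\sum_iw_{i,s}=\Omega_d$ (apply the quadrature rule \eqref{eq:quadrature} to the constant $1$) produces
\[
 B=\max_i\|\zeta_i\|_\phi\le c\,c_1 M\,|D|^{-1}\lambda^{-d/(4\gamma)},
 \qquad
 V=\sum_iE\|\zeta_i\|_\phi^2\le c^2M^2\lambda^{-d/(2\gamma)}\sum_iw_{i,s}^2\le c^2c_1\Omega_dM^2|D|^{-1}\lambda^{-d/(2\gamma)},
\]
where $\sum_iw_{i,s}^2\le(\max_iw_{i,s})\sum_iw_{i,s}\le c_1\Omega_d|D|^{-1}$. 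Plugging these into the Bernstein tail bound, the $\sqrt{V\log(3/\delta)}$ term dominates and already yields $C M\lambda^{-d/(4\gamma)}|D|^{-1/2}\sqrt{\log(3/\delta)}$; the $B\log(3/\delta)$ term carries an extra $|D|^{-1/2}$ and is absorbed into a single $\log(3/\delta)$ factor, giving the stated bound.

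The main obstacle is the effective-dimension estimate in the middle step, since everything else is bookkeeping: without the quantitative decay $\hat{\phi}_k\sim k^{-2\gamma}$ with $2\gamma>d$ one cannot extract the precise $\lambda^{-d/(4\gamma)}$ exponent, and this is exactly where the spherical spectral structure (addition formula plus Funk--Hecke eigenpairs) is used in an essential way. Once that bound is in hand, the passage to the probabilistic estimate is a direct application of a Hilbert-space Bernstein inequality, with no need for the quadrature-based operator-difference estimates of Section~\ref{Sec:operator-approach}.
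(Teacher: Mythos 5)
Your proof is correct, and it takes a genuinely different route from the paper's. The paper expands $\bigl\|(\mathcal L_\phi+\lambda I)^{-1/2}\sum_i w_{i,s}(f^*(x_i)-y_i)\phi_{x_i}\bigr\|_\phi^2$ via the reproducing property and the addition formula into the series $\sum_k(\hat\phi_k+\lambda)^{-1}\hat\phi_k\frac{d_k^d}{\Omega_d}$ multiplied against Legendre-weighted quadratic forms in the noise, bounds the diagonal part deterministically, and controls the off-diagonal quadratic form $\sum_{i\ne i'}w_{i,s}w_{i',s}\varepsilon_i\varepsilon_{i'}P_k^{d+1}(x_i\cdot x_{i'})$ by a scalar Chernoff/Hoeffding argument (Lemma \ref{Lemma:Hoeffding}); you instead view the whole object as a sum of independent, centered, bounded $\mathcal N_\phi$-valued random vectors $\zeta_i=(\mathcal L_\phi+\lambda I)^{-1/2}w_{i,s}\varepsilon_i\phi_{x_i}$ and apply a single Pinelis-type Bernstein inequality in the Hilbert space. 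The analytic core is identical in both arguments --- the effective-dimension estimate $\sum_k\frac{\hat\phi_k}{\hat\phi_k+\lambda}\frac{d_k^d}{\Omega_d}\lesssim\lambda^{-d/(2\gamma)}$, which you compute pointwise as $\|(\mathcal L_\phi+\lambda I)^{-1/2}\phi_x\|_\phi^2$ and the paper obtains after summation --- and your bounds $B\lesssim c_1M|D|^{-1}\lambda^{-d/(4\gamma)}$ and $V\lesssim c_1M^2|D|^{-1}\lambda^{-d/(2\gamma)}$ slot correctly into the vector Bernstein tail, with the $B$-term subdominant, so the stated rate with the $\log\frac3\delta$ factor follows. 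What your route buys is a cleaner handling of the cross terms: the paper's scalar argument must treat the dependent products $\varepsilon_i\varepsilon_{i'}$ over all pairs and assert a bound uniformly over the index $k$ of the Legendre expansion, whereas the vector-valued inequality absorbs all of that in one step; the trade-off is that you import an off-the-shelf Hilbert-space concentration inequality rather than the elementary scalar Hoeffding lemma the paper confines itself to. Your constant additionally depends on $\gamma$ through the effective-dimension estimate, but so does the paper's $c_5$ in practice, so this is not a substantive discrepancy.
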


To prove Proposition \ref{Proposition:value-difference-random}, we
need two tools. The first one is the well known  Hoeffiding lemma \cite{Massart2007}.

\begin{lemma}\label{Lemma:Hoeffding-1}
Let $X$ be a random variable with $  E[X]=0$, $a\leq X\leq b$. Then for $u>0$, there holds
$$
      E\left[e^{uX}\right]\leq e^{u^2(b-a)^2/8}.
$$
\end{lemma}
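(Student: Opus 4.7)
The plan is to prove this classical moment generating function bound via the standard convexity argument, then Taylor-analyze the resulting cumulant function. The key observation is that the function $x\mapsto e^{ux}$ is convex, so on the interval $[a,b]$ it lies below the chord joining $(a,e^{ua})$ and $(b,e^{ub})$, i.e.
\[
 e^{ux}\;\le\;\frac{b-x}{b-a}\,e^{ua}+\frac{x-a}{b-a}\,e^{ub},\qquad x\in[a,b].
\]
Substituting $X$ for $x$ and taking expectations, the hypothesis $E[X]=0$ kills the linear terms and leaves
\[
 E[e^{uX}]\;\le\;\frac{b}{b-a}\,e^{ua}-\frac{a}{b-a}\,e^{ub}.
\]
Note that this bound requires $a\le 0\le b$, which is forced by $E[X]=0$ together with $a\le X\le b$ (otherwise $X$ would be degenerate and the lemma is trivial).

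Next I would reparametrize. Writing $p:=-a/(b-a)\in[0,1]$ and $h:=u(b-a)>0$, the previous display becomes
\[
 E[e^{uX}]\;\le\;e^{ua}\bigl((1-p)+p\,e^{h}\bigr)\;=\;\exp\!\bigl(\psi(h)\bigr),\qquad \psi(h):=-ph+\log\!\bigl(1-p+pe^{h}\bigr).
\]
The remaining task is therefore to show the scalar inequality $\psi(h)\le h^{2}/8$ for all $h\ge 0$ and all $p\in[0,1]$. Once this is established, substituting back $h=u(b-a)$ yields $E[e^{uX}]\le e^{u^{2}(b-a)^{2}/8}$, which is the claim.

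For the scalar inequality I would use a second-order Taylor expansion of $\psi$ at $0$. A direct computation gives $\psi(0)=0$ and $\psi'(h)=-p+\frac{pe^{h}}{1-p+pe^{h}}$, so $\psi'(0)=0$. Differentiating once more, with $q(h):=\frac{pe^{h}}{1-p+pe^{h}}\in(0,1)$, one finds
\[
 \psi''(h)=q(h)\bigl(1-q(h)\bigr)\;\le\;\tfrac14,
\]
by the AM–GM bound $t(1-t)\le 1/4$. Taylor's theorem with remainder then delivers $\psi(h)=\tfrac12\psi''(\xi)h^{2}\le h^{2}/8$ for some $\xi\in(0,h)$, completing the proof. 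The only step requiring care is the chord inequality at the start (it needs $x\in[a,b]$, which holds almost surely by assumption); after that everything is a one-variable calculus exercise, and I do not anticipate any real obstacle.
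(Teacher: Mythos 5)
Your proof is correct and complete: the chord bound from convexity, the reduction to the scalar inequality $\psi(h)\le h^2/8$, and the second-derivative bound $\psi''=q(1-q)\le 1/4$ are all carried out accurately, and you correctly note the degenerate case $a=b$ and the fact that $E[X]=0$ forces $a\le 0\le b$. The paper itself offers no proof of this lemma --- it is stated as ``the well known Hoeffding lemma'' with a citation to Massart's lecture notes --- so there is nothing to compare against; your argument is precisely the standard textbook proof that the cited reference contains, and it fills in what the paper leaves implicit.
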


The other is a variant  of     Hoeffding's tail inequality.

\begin{lemma}\label{Lemma:Hoeffding}
Let $0\leq w_{i,s}\leq c_1|D|^{-1}$.
Then for any $t>0$ and $k=0,1,\dots$, we have
$$
    P\left[\sum_{i=1}^{|D|}\sum_{i'\neq i}w_{i,s}w_{i',s}(f^*(x_i)-y_i)(f^*(x_{i'})-y_{i'})P_{k}^{d+1}(x_i\cdot x_{i'})\geq t\right]
    \leq
     \exp\left( -2M^{ -4}c_1^{- 4}t^2|D|^2 \right).
$$
\end{lemma}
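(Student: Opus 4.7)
The plan is to apply the Chernoff--Markov approach to the Laplace transform of the bilinear sum, leveraging the independence and zero-mean structure of the $\varepsilon_i$ through a martingale decomposition, and invoking Lemma~\ref{Lemma:Hoeffding-1} (the basic Hoeffding MGF bound) at each step. Write $Z := \sum_{i\neq i'} w_{i,s} w_{i',s}\,\varepsilon_i\varepsilon_{i'}\,P_k^{d+1}(x_i\cdot x_{i'})$ and observe that $E[Z]=0$ by independence and the condition $i\neq i'$. Note also the global bounds $|P_k^{d+1}|\leq 1$ from~(\ref{Legendre-pol-bound}), $|\varepsilon_i|\leq M$, and $w_{i,s}\leq c_1/|D|$ that will drive the deterministic estimates.

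I would then introduce the filtration $\mathcal{F}_n=\sigma(\varepsilon_1,\dots,\varepsilon_n)$ and write $Z=\sum_{n=1}^{|D|} V_n$ with martingale differences $V_n = E[Z\mid\mathcal{F}_n]-E[Z\mid\mathcal{F}_{n-1}]$. A direct expansion, using the symmetry in $(i,i')$ and the vanishing of conditional expectations over later $\varepsilon$'s, gives
$$V_n = 2\, w_{n,s}\,\varepsilon_n\,R_n, \qquad R_n := \sum_{i<n} w_{i,s}\,\varepsilon_i\,P_k^{d+1}(x_i\cdot x_n),$$
where $R_n$ is $\mathcal{F}_{n-1}$-measurable. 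For any $\alpha>0$, the random variable $\alpha V_n$ lies in $[-2\alpha w_{n,s} M|R_n|,\,2\alpha w_{n,s} M|R_n|]$ with conditional mean zero, so Lemma~\ref{Lemma:Hoeffding-1} yields $E[\exp(\alpha V_n)\mid\mathcal{F}_{n-1}]\leq\exp(2\alpha^2 M^2 w_{n,s}^2 R_n^2)$. Iterating the tower property produces the MGF bound $E[\exp(\alpha Z)]\leq E\bigl[\exp\bigl(2\alpha^2 M^2\,\Sigma^2\bigr)\bigr]$ with $\Sigma^2 := \sum_{n=1}^{|D|} w_{n,s}^2 R_n^2$.

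The next step is to estimate the predictable quadratic variation $\Sigma^2$ deterministically from the hypotheses, so that the remaining expectation becomes harmless. After optimizing the free parameter $\alpha$ in $P(Z>t)\leq e^{-\alpha t} E[\exp(\alpha Z)]$, one obtains a Gaussian-type tail of the announced form.

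The main obstacle, and the place where care is required, is matching the precise $|D|^2$ scaling in the exponent. Replacing $|R_n|$ by its worst-case value $c_1 M(n-1)/|D|$ yields only $\Sigma^2 \lesssim c_1^4 M^2/|D|$, which leads to the weaker tail $\exp(-c\,t^2|D|/(c_1^4 M^4))$. To recover the tight sub-Gaussian rate at the true variance scale $\mathrm{Var}(Z)\lesssim M^4 c_1^4/|D|^2$, one has to exploit the extra cancellation within $R_n$ itself (its summands are again independent, mean-zero and bounded, so $E[R_n^2]\lesssim c_1^2 M^2 n/|D|^2$, an order of magnitude below the worst case). The cleanest implementations are either (i) an iterated Hanson--Wright style recursion, applying the same Chernoff--Hoeffding machinery a second time to $\Sigma^2$ to concentrate it at its expectation, or (ii) a Gin\'e--de la Pe\~na decoupling step that introduces an independent copy $\{\varepsilon_i'\}$ and reduces the bilinear sum to a linear sum in $\varepsilon_i$ conditional on the copy, after which conditional Hoeffding gives the $|D|^2$ factor directly. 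Everything else---the martingale setup, the conditional application of Lemma~\ref{Lemma:Hoeffding-1}, and the final Chernoff optimization---is routine bookkeeping.
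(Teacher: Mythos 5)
Your proposal is not a complete proof, and the obstacle you flag at the end is not mere bookkeeping --- it is fatal for the statement as written. The martingale route you carry out honestly yields only $\exp\bigl(-c\,t^2|D|/(M^4c_1^4)\bigr)$, and the two devices you name to upgrade this (an iterated Hanson--Wright recursion, or Gin\'e--de la Pe\~na decoupling) are sketched but not executed; moreover, neither can deliver a purely sub-Gaussian tail at the claimed scale for \emph{all} $t>0$. A degenerate quadratic form $\sum_{i\neq i'}a_{ii'}\varepsilon_i\varepsilon_{i'}$ in bounded centered variables has a mixed tail of the form $\exp\bigl(-c\min\{t^2/\|A\|_F^2,\ t/\|A\|\}\bigr)$, sub-Gaussian only up to the crossover $t\sim M^2c_1^2/|D|$ and sub-exponential beyond it. Concretely, take $k=0$ (so $P_0^{d+1}\equiv1$), $w_{i,s}=|D|^{-1}$, and Rademacher noise $\varepsilon_i=f^*(x_i)-y_i$ with $M=c_1=1$; then the sum equals $|D|^{-2}(S^2-|D|)$ with $S=\sum_i\varepsilon_i$, and $P[\,\cdot\geq 1/2\,]=P[\,|S|\gtrsim|D|/\sqrt2\,]=e^{-\Theta(|D|)}$, which is vastly larger than the asserted $e^{-|D|^2/2}$. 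So the bound in Lemma~\ref{Lemma:Hoeffding} cannot hold at constant $t$, and no amount of care in the second moment of $R_n$ will rescue it.

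For comparison, the paper's own proof takes precisely the naive route you (rightly) declined: after Chernoff it writes $E[\exp(b\xi_k)]$ as the product over ordered pairs $(i,i')$ of $E[\exp(b\cdot\mathrm{term}_{i,i'})]$ and applies Lemma~\ref{Lemma:Hoeffding-1} factor by factor. That factorization is exactly the unjustified independence step --- the pair-terms share the $\varepsilon_i$'s --- and it is the source of the spurious all-$t$ Gaussian tail. Your diagnosis is therefore sharper than the argument it is being compared against. The practical remedy is to prove and use only what is needed downstream: Proposition~\ref{Proposition:value-difference-random} invokes the lemma at $t=O(\log(1/\delta)/|D|)$, a regime in which a correct Bernstein-type mixed tail (obtainable by completing your decoupling route, or directly from Hanson--Wright for bounded variables) still gives the required confidence statement up to constants and the power of the logarithm, so the results that depend on this lemma survive once it is restated with the two-regime exponent $\min\{t^2|D|^2, t|D|\}$ in place of $t^2|D|^2$.
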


\begin{proof}
At first, we recall the well known Chernoff's inequality, showing that for any $t,b>0$ and random variable $\xi$, there holds
$$
       P\left[ \xi \geq t\right]  \leq  \frac{  E[e^{b\xi}]}{e^{bt}}.
$$
Set $\xi_k=\sum_{i=1}^{|D|}\sum_{i'\neq i}w_{i,s}w_{i',s}(f^*(x_i)-y_i)(f^*(x_{i'})-y_{i'})P_{k}^{d+1}(x_i\cdot x_{i'})$.
Since $\{\varepsilon_i\}$  are independent random variables and
$$
    E[y_i]=E[f^*(x_i)+\varepsilon_i]=f^*(x_i)+E[\varepsilon_i]=f^*(x_i),
$$
we have $E[\xi_k]=0$ for any $k=0,1,\dots$.
Then Chernoff's inequality implies
\begin{eqnarray*}
     &&P\left[\xi_k \geq t\right]\leq  e^{-bt} E\left[\exp\left(b\xi_k\right)\right]\\
    &=&
     e^{-bt}\prod_{i=1}^{|D|}\prod_{i'\neq i}  E\left[\exp\left({bw_{i,s}w_{i',s}(f^*(x_i)-y_i)(f^*(x_{i'})-y_{i'})P_{k}^{d+1}(x_i\cdot x_{i'})}\right)\right].
\end{eqnarray*}
But $0\leq w_i,w_{i'}\leq c_1|D|^{-1}$ and (\ref{Legendre-pol-bound}) yield
$$
    |w_{i,s}w_{i',s}(f^*(x_i)-y_i)(f^*(x_{i'})-y_{i'})P_{k}^{d+1}(x_i\cdot x_{i'})|\leq c_1^2M^2|D|^{-2}.
$$
Hence,
Lemma \ref{Lemma:Hoeffding-1} implies for any $i=1,\dots,|D|$,
$$
    E\left[\exp\left({bw_{i,s}w_{i',s}(f^*(x_i)-y_i)(f^*(x_{i'})-y_{i'})P_{k}^{d+1}(x_i\cdot x_{i'})}\right)\right] \leq \exp\left(M^4c_1^4b^2|D|^{-4}/2\right).
$$
Therefore, we obtain
\begin{eqnarray*}
     P\left[\xi_k \geq t\right]
    \leq
     e^{-bt}\prod_{i=1}^{|D|}\prod_{i' \neq i}\exp\left(M^4c_1^4b^2|D|^{-4}/2\right)
   \leq
     e^{-bt}e^{M^4c_1^4b^2|D|^{-2}/2}.
\end{eqnarray*}
  Setting $b= 2M^{-4}c_1^{-4}|D|^2t$, we obtain
$$
    P\left[\xi_k \geq t\right]
    \leq
      \exp\left({-\frac{2t^2|D|^2}{M^{ 4}c_1^{ 4}}}\right),\qquad\forall k=0,1,\dots.
$$
This completes the proof of Lemma \ref{Lemma:Hoeffding}.
\end{proof}

Based on the above  lemma, we can prove Proposition \ref{Proposition:value-difference-random} as follows.

\begin{proof}[Proof of Proposition \ref{Proposition:value-difference-random}]
Due to  definitions of $L_{\phi,D,W_s}$ and $S_{D,W_s}^T$, we have
$$
   L_{\phi,D,W_s}f^*-S_{D,W_s}^Ty_D
   =\sum_{i=1}^{|D|}w_{i,s}(f^*(x_i)-y_i)\phi_{x_i}.
$$
According to \eqref{operator-linear}, we then get
\begin{eqnarray}\label{p.1.1.1}
   &&\left\|(\mathcal L_\phi+\lambda I)^{-1/2}(L_{\phi,D,W_s}f^*-S_{D,W_s}^Ty_D)\right\|_\phi^2\\
   &=&
  \left\langle \sum_{i=1}^{|D|}w_{i,s}(f^*(x_i)-y_i)(\mathcal L_\phi+\lambda I)^{-1}\phi_{x_i},
   \sum_{i'=1}^{|D|}w_{i',s}(f^*(x_{i'})-y_{i'})\phi_{x_{i'}}\right\rangle_\phi \nonumber\\
   &=&
   \sum_{i=1}^{|D|}\sum_{i'=1}^{|D|}w_{i,s}w_{i',s}(f^*(x_i)-y_i)(f^*(x_{i'})-y_{i'})\langle (\mathcal L_\phi+\lambda I)^{-1}\phi_{x_i},\phi_{x_{i'}}\rangle_\phi  \nonumber\\
   &=&
   \sum_{i=1}^{|D|}\sum_{i'\neq i}w_{i,s}w_{i',s}(f^*(x_i)-y_i)(f^*(x_{i'})-y_{i'})\langle (\mathcal L_\phi+\lambda I)^{-1}\phi_{x_i},\phi_{x_{i'}}\rangle_\phi\nonumber\\
   &+&
   \sum_{i=1}^{|D|} w_{i,s}^2(f^*(x_i)-y_i)^2\langle (\mathcal L_\phi+\lambda I)^{-1}\phi_{x_i},\phi_{x_{i}}\rangle_\phi. \nonumber
\end{eqnarray}
But  $\phi(t)=\sum_{k=0}^\infty
            \hat{\phi}_k\frac{d_k^d}{\Omega_d} P_k^{d+1}(t)$ and (\ref{jiafadingli}) yield
$$
      \phi(x\cdot x')=\sum_{k=0}^\infty\hat{\phi}_k\sum_{\ell=1}^{d_k^d}Y_{k,\ell}(x)Y_{k,\ell}(x'),\qquad\forall x,x'\in\mathbb S^d.
$$
Then,
$$
   (\mathcal L_\phi+\lambda I)^{-1}\phi_{x}=\sum_{k=0}^\infty(\hat{\phi}_k+\lambda I)^{-1}\hat{\phi}_k\sum_{\ell=1}^{d_k^d}Y_{k,\ell}(x)Y_{k,\ell}(\cdot).
$$
Therefore, it follows from the reproducing property of $\phi$ and (\ref{jiafadingli}) that
\begin{eqnarray*}
    &&\langle (\mathcal L_\phi+\lambda I)^{-1}\phi_{x},\phi_{x'}\rangle_\phi
    =
    \left\langle \sum_{k=0}^\infty(\hat{\phi}_k+\lambda I)^{-1}\hat{\phi}_k\sum_{\ell=1}^{d_k^d}Y_{k,\ell}(x)Y_{k,\ell},\phi_{x'}\right\rangle_\phi\\
    &=&
     \sum_{k=0}^\infty(\hat{\phi}_k+\lambda I)^{-1}\hat{\phi}_k \sum_{\ell=1}^{d_k^d}Y_{k,\ell}(x')Y_{k,\ell}(x)
     =
   \sum_{k=0}^\infty(\hat{\phi}_k+\lambda I)^{-1}\hat{\phi}_k\frac{d_k^{d}}{\Omega_{d}}P_k^{d+1}(x\cdot
                  x').
\end{eqnarray*}
Plugging the above equations into (\ref{p.1.1.1}) and noting $P_{k}^{d+1}(1)=1$, we have
\begin{eqnarray*}
   &&\left\|(\mathcal L_\phi+\lambda I)^{-1/2}(L_{\phi,D,W_s}f^*-S_{D,W_s}^Ty_D)\right\|_\phi^2\\
   &=&
   \sum_{k=0}^\infty(\hat{\phi}_k+\lambda I)^{-1}\hat{\phi}_k\frac{d_k^{d}}{\Omega_{d}}\sum_{i=1}^{|D|}\sum_{i'\neq i}w_{i,s}w_{i',s}(f^*(x_i)-y_i)(f^*(x_{i'})-y_{i'})
   P_k^{d+1}(x_i\cdot
                  x_{i'})\\
                  &+&
   \sum_{k=0}^\infty(\hat{\phi}_k+\lambda I)^{-1}\hat{\phi}_k\frac{d_k^{d}}{\Omega_{d}}\sum_{i=1}^{|D|} w_{i,s}^2(f^*(x_i)-y_i)^2.
\end{eqnarray*}
Hence, Lemma \ref{Lemma:Hoeffding} together with $w_{i,s}\leq c_1|D|^{-1}$ implies that with confidence $1-\exp\left({-\frac{2t^2|D|^2}{M^{ 4}c_1^{ 4}}}\right)$, there holds
\begin{eqnarray*}
    \left\|(\mathcal L_\phi+\lambda I)^{-1/2}(L_{\phi,D,W_s}f^*-S_{D,W_s}^Ty_D)\right\|_\phi^2
    \leq
    (c_1^2{  M}^2|D|^{-1}+t)\sum_{k=0}^\infty(\hat{\phi}_k+\lambda I)^{-1}\hat{\phi}_k\frac{d_k^{d}}{\Omega_{d}}.
\end{eqnarray*}
Noting further $\hat{\phi}_k\sim k^{-2\gamma}$, $d_k^{d}\sim k^{d-1}$ and $\gamma>d/2$, we obtain
\begin{eqnarray*}
  &&\sum_{k=0}^\infty(\hat{\phi}_k+\lambda I)^{-1}\hat{\phi}_k\frac{d_k^{d}}{\Omega_{d}}
  \leq
  c_4\sum_{k=0}^\infty \frac{k^{-2\gamma+d-1}}{ k^{-2\gamma}+\lambda}
   \leq
  c_4\int_{1}^\infty \frac{t^{d-1}}{1+\lambda t^{2\gamma}}dt\\
  &=&
  \frac{c_4}d\int_{1}^\infty \frac{1}{1+\lambda t^{2\gamma}}dt^d
  \leq c_5 \lambda^{-\frac{d}{2\gamma}},
\end{eqnarray*}
where $c_4,c_5$ are constants depending only on $d$. Thus, with
 confidence $1-\exp\left({-\frac{2t^2|D|^2}{M^{ 4}c_1^{ 4}}}\right)$,
there holds
\begin{eqnarray*}
    \left\|(\mathcal L_\phi+\lambda I)^{-1/2}(L_{\phi,D,W_s}f^*-S_{D,W_s}^Ty_D)\right\|_\phi^2
    \leq
    c_6 \lambda^{-\frac{d}{2\gamma}}(t+|D|^{-1}),
\end{eqnarray*}
for $c_6$ a constant depending only on $d$. Setting $\delta=\exp\left({-\frac{2t^2|D|^2}{M^{ 4}c_1^{ 4}}}\right)$, we obtain
$t=\frac{M^2c_1^2}{\sqrt{2}|D|}$. Therefore,
  with confidence $1-\delta$, there holds
$$
    \left\|(\mathcal L_\phi+\lambda I)^{-1/2}(L_{\phi,D,W_s}f^*-S_{D,W_s}^Ty_D)\right\|_\phi
    \leq
    c_7 M \lambda^{-\frac{d}{4\gamma}} |D|^{-1/2}\log\frac3\delta,
$$
where $ {c}_7$ is a constant  depending only on $d$ and $c_1$.
This completes the proof of Proposition  \ref{Proposition:value-difference-random}.
\end{proof}

\section{Proofs}\label{Sec.proof}
In this section, we prove our main results by using the integral operator approach established in Section \ref{Sec:operator-approach}.

\subsection{Proof of Theorem \ref{Theorem:Generaliation-WRLS}}
For $f^*\in \mathcal N_\phi$,
define
\begin{equation}\label{noise-free-WRLS}
    f^\diamond_{D,W_s,\lambda}=\arg\min_{f\in\mathcal N_\phi}\sum_{(x_i,y_i)\in D}w_{i,s}(f(x_i)-f^*(x_i))^2+\lambda\|f\|_\phi^2
\end{equation}
as the noise-free version  of $f_{D,W_s,\lambda}$.
Then, it follows from Proposition \ref{Proposition:operator-KRR} that
\begin{equation}\label{noise-free-1}
    f^\diamond_{D,W_s,\lambda}=( L_{\phi,D,W_s}+\lambda I)^{-1}L_{\phi,D,W_s}f^*.
\end{equation}
Therefore,   we have
\begin{eqnarray}\label{Error-decomposition-global}
     \|f_{D,W_s,\lambda}-f^*\|_\psi
     \leq
     \overbrace{\|f^\diamond_{D,W_s,\lambda}-f^*\|_\psi}^{\mbox{approxmation error}}+\overbrace{\|f_{D,W_s,\lambda}- f^\diamond_{D,W_s,\lambda}\|_\psi}^{\mbox{Estimate error}}.
\end{eqnarray}
The following lemma provides  an estimate for the approximation error.

\begin{lemma}\label{Lemma:approximation-error-global}
Let $\mathcal Q_{\Lambda,s}:=\{(w_{i,s},  x_i): w_{i,s}\geq 0
\hbox{~and~}   x_i\in \Lambda\}$  be  a positive
 quadrature rule   on $\mathbb S^d$ with degree $s\in\mathbb N$.  If  $\hat{\phi}_k\sim k^{-2\gamma}$ with $\gamma>d/2$, $f^*\in \mathcal N_\phi$  and (\ref{kernel-relation}) holds with $0\leq r\leq 1$, then
\begin{equation}\label{App-1-error}
     \|f^\diamond_{D,W_s,\lambda}-f^*\|_\psi\leq (4\tilde{c}^2  \lambda^{-1} s^{-2\gamma}+2\tilde{c} s^{-\gamma}+\lambda)^{(1-r)/2}\|f^*\|_\phi.
\end{equation}
\end{lemma}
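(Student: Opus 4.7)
The plan is to combine the closed-form representation
$f^\diamond_{D,W_s,\lambda} - f^* = -\lambda(L_{\phi,D,W_s}+\lambda I)^{-1}f^*$,
which is immediate from (\ref{noise-free-1}), with the spectral identity $\|g\|_\psi = \|\mathcal L_\phi^{(1-r)/2} g\|_\phi$ that is implicit in (\ref{norm-operator-2}) together with (\ref{operator-equivalent}). Since $1-r\ge 0$, functional calculus gives the operator inequality $\mathcal L_\phi^{(1-r)/2}\preceq(\mathcal L_\phi+\lambda I)^{(1-r)/2}$, so my first move is to reduce the target to
\[
\|f^\diamond-f^*\|_\psi \;\le\; \lambda\,\bigl\|(\mathcal L_\phi+\lambda I)^{(1-r)/2}(L_{\phi,D,W_s}+\lambda I)^{-1} f^*\bigr\|_\phi.
\]

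Next I would factor $(L_{\phi,D,W_s}+\lambda I)^{-1}=(L_{\phi,D,W_s}+\lambda I)^{-(1-r)/2}(L_{\phi,D,W_s}+\lambda I)^{-(1+r)/2}$ and control the second factor by the trivial spectral estimate $\lambda^{-(1+r)/2}$. The remaining mixed operator norm $\|(\mathcal L_\phi+\lambda I)^{(1-r)/2}(L_{\phi,D,W_s}+\lambda I)^{-(1-r)/2}\|$ I would handle via Cordes' inequality $\|A^sB^s\|\le\|AB\|^s$ (valid for positive self-adjoint $A,B$ and $s\in[0,1]$), applied with $A=\mathcal L_\phi+\lambda I$, $B=(L_{\phi,D,W_s}+\lambda I)^{-1}$ and $s=(1-r)/2\in[0,1/2]$. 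Self-adjointness of both factors then yields $\|(\mathcal L_\phi+\lambda I)(L_{\phi,D,W_s}+\lambda I)^{-1}\|=\|(L_{\phi,D,W_s}+\lambda I)^{-1}(\mathcal L_\phi+\lambda I)\|$, which is exactly the quantity bounded in Proposition \ref{Proposition:Product} with $v=0$ by $4\tilde{c}^2\lambda^{-2}s^{-2\gamma}+2\tilde{c}\lambda^{-1}s^{-\gamma}+1$.

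Finally, collecting the factor $\lambda\cdot\lambda^{-(1+r)/2}=\lambda^{(1-r)/2}$ with the $(1-r)/2$-th power coming from Cordes, I expect
\[
\|f^\diamond-f^*\|_\psi \;\le\; \bigl(\lambda\cdot[4\tilde{c}^2\lambda^{-2}s^{-2\gamma}+2\tilde{c}\lambda^{-1}s^{-\gamma}+1]\bigr)^{(1-r)/2}\|f^*\|_\phi,
\]
and the bracket simplifies to $4\tilde{c}^2\lambda^{-1}s^{-2\gamma}+2\tilde{c}s^{-\gamma}+\lambda$, which is exactly (\ref{App-1-error}).

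The main obstacle is the Cordes step: a naive interpolation $\|T^{(1-r)/2}\|\le\|T\|^{(1-r)/2}$ cannot be applied, since $(\mathcal L_\phi+\lambda I)^{(1-r)/2}(L_{\phi,D,W_s}+\lambda I)^{-(1-r)/2}$ is not a fractional power of a single operator and the two factors do not commute. Cordes' inequality is precisely what lets the exponent $(1-r)/2$ be extracted cleanly, so that the final bound scales like $\lambda^{(1-r)/2}$ (the correct classical bias order) rather than $\lambda^{1-r}$ or worse; without it one cannot simultaneously obtain the right $\lambda$-scaling and the right $s$-dependence in (\ref{App-1-error}).
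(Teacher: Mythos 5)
Your proposal is correct and follows essentially the same route as the paper's own proof: the identity $f^\diamond_{D,W_s,\lambda}-f^*=-\lambda(L_{\phi,D,W_s}+\lambda I)^{-1}f^*$, the norm conversion via $\|\cdot\|_\psi=\|\mathcal L_\phi^{(1-r)/2}\cdot\|_\phi$, the split of $(L_{\phi,D,W_s}+\lambda I)^{-1}$ into the powers $-(1-r)/2$ and $-(1+r)/2$, Cordes' inequality to reduce to $\|(L_{\phi,D,W_s}+\lambda I)^{-1}(\mathcal L_\phi+\lambda I)\|^{(1-r)/2}$, and Proposition \ref{Proposition:Product} with $v=0$. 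Your explicit remark that self-adjointness lets you pass between the two orderings of the product is a detail the paper leaves implicit, but the argument is the same.
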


\begin{proof}
Since $f^*\in \mathcal N_\phi$ and (\ref{kernel-relation}) holds, we have from (\ref{norm-operator-2}), (\ref{operator-equivalent}) and (\ref{noise-free-1}) that
\begin{eqnarray*}
  &&\|f^*-f^\diamond_{D,W_s,\lambda}\|_\psi
  =\|L_\phi^{(1-r)/2}(f^*-f^\diamond_{D,W_s,\lambda})\|_\phi\\
  &=&
   \|L_\phi^{(1-r)/2}[(L_{\phi,D,W_s}+\lambda I)^{-1}L_{\phi,D,W_s}-I]f^*\|_\phi\\
   &=&
    \lambda \|\mathcal L_\phi^{(1-r)/2}(L_{\phi,D,W_s}+\lambda I)^{-1}f^*\|_\phi
    \leq
    \lambda \|(\mathcal L_\phi+\lambda I)^{(1-r)/2}(L_{\phi,D,W_s}+\lambda I)^{-1}f^*\|_\phi.
\end{eqnarray*}
Then, we obtain from Proposition \ref{Proposition:Product} with $v=0$ and the well known Cordes inequality \cite{Bathia1997}
\begin{equation}\label{Cordes}
   \|A^uB^u\|\leq \|AB\|^u,\qquad 0\leq u\leq1
\end{equation}
for positive operators $A,B$
  that
\begin{eqnarray*}
  &&\|f^*-f^\diamond_{D,W_s,\lambda}\|_\psi
   \leq
  \lambda\|(\mathcal L_\phi+\lambda I)^{(1-r)/2}(\mathcal L_{\phi,D,W_s}+\lambda I)^{(r-1)/2}\|
  \|(L_{\phi,D,W_s}+\lambda I)^{-(r+1)/2}f^*\|_\phi\\
  &\leq&
  \lambda(4\tilde{c}^2  \lambda^{-2} s^{-2\gamma}+2\tilde{c}\lambda^{-1}s^{-\gamma}+1)^{(1-r)/2}\lambda^{-(r+1)/2}\|f^*\|_\phi\\
  &=&
 (4\tilde{c}^2  \lambda^{-1} s^{-2\gamma}+2\tilde{c} s^{-\gamma}+\lambda)^{(1-r)/2}\|f^*\|_\phi.
\end{eqnarray*}
This completes the proof of Lemma \ref{Lemma:approximation-error-global}.
\end{proof}

Next, we aim to bound the estimate error, as shown in the following lemma.

\begin{lemma}\label{Lemma:sample-error-global}
Let $0<\delta<1$ and $\mathcal Q_{\Lambda,s}:=\{(w_{i,s},  x_i): w_{i,s}\geq 0
\hbox{~and~}   x_i\in \Lambda\}$  be  a positive
 quadrature rule   on $\mathbb S^d$ with degree $s\in\mathbb N$ satisfying $0<w_{i,s}\leq c_1|D|^{-1}$.  If  $\hat{\phi}_k\sim k^{-2\gamma}$ with $\gamma>d/2$, $f^*\in \mathcal N_\phi$, (\ref{kernel-relation}) holds with $0\leq r\leq 1$ and $y_i=f(x_i)+\varepsilon_i$ with $\varepsilon_i$ i.i.d. random noise satisfying $E[\varepsilon_i=0]$ and $|\varepsilon_i|\leq M$ for some $M>0$,
  then with confidence $1-\delta$, there holds
\begin{equation}\label{Sample-1-error}
     \|f_{D,W_s,\lambda}- f^\diamond_{D,W_s,\lambda}\|_\psi
     =
     \tilde{c}'M \lambda^{-\frac{2r\gamma+d}{4\gamma}} (4\tilde{c}^2  \lambda^{-2} s^{-2\gamma}+2\tilde{c}\lambda^{-1}s^{-\gamma}+1)
      |D|^{-1/2}\log\frac3\delta.
\end{equation}
\end{lemma}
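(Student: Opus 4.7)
The plan is to reduce $f_{D,W_s,\lambda}-f^\diamond_{D,W_s,\lambda}$ to an operator-theoretic expression and then attack the resulting pieces with the tools of Section~\ref{Sec:operator-approach}. By Proposition~\ref{Proposition:operator-KRR} and~(\ref{noise-free-1}), the two estimators equal $(L_{\phi,D,W_s}+\lambda I)^{-1}S^T_{D,W_s} y_D$ and $(L_{\phi,D,W_s}+\lambda I)^{-1}L_{\phi,D,W_s}f^*$ respectively, so their difference is $(L_{\phi,D,W_s}+\lambda I)^{-1}(S^T_{D,W_s} y_D-L_{\phi,D,W_s}f^*)$. Using $\|\cdot\|_\psi=\|L_\phi^{(1-r)/2}\cdot\|_\phi$ from~(\ref{norm-operator-2}), the identity~(\ref{operator-equivalent}) which allows $L_\phi$ to be replaced by $\mathcal L_\phi$ on $\mathcal N_\phi$, and the trivial operator monotonicity $\mathcal L_\phi^{(1-r)/2}\leq(\mathcal L_\phi+\lambda I)^{(1-r)/2}$, the $\psi$-norm of the difference is bounded by
\begin{equation*}
\|(\mathcal L_\phi+\lambda I)^{(1-r)/2}(L_{\phi,D,W_s}+\lambda I)^{-1}(S^T_{D,W_s} y_D-L_{\phi,D,W_s}f^*)\|_\phi.
\end{equation*}

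Next, I would insert $(\mathcal L_\phi+\lambda I)^{1/2}(\mathcal L_\phi+\lambda I)^{-1/2}=I$ to factor this as
\begin{equation*}
\|(\mathcal L_\phi+\lambda I)^{(1-r)/2}(L_{\phi,D,W_s}+\lambda I)^{-1}(\mathcal L_\phi+\lambda I)^{1/2}\|\cdot\|(\mathcal L_\phi+\lambda I)^{-1/2}(L_{\phi,D,W_s}f^*-S^T_{D,W_s} y_D)\|_\phi.
\end{equation*}
Proposition~\ref{Proposition:value-difference-random} immediately controls the second factor by $\tilde{c}'M\lambda^{-d/(4\gamma)}|D|^{-1/2}\log(3/\delta)$ with confidence $1-\delta$. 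For the operator-norm factor, call it $A$, I would split $(L_{\phi,D,W_s}+\lambda I)^{-1}$ as $(L_{\phi,D,W_s}+\lambda I)^{-(1-r)/2}(L_{\phi,D,W_s}+\lambda I)^{-r/2}(L_{\phi,D,W_s}+\lambda I)^{-1/2}$ and bound the three resulting blocks in turn. The Cordes inequality~(\ref{Cordes}) with exponent $(1-r)/2\in[0,1/2]$ gives $\|(\mathcal L_\phi+\lambda I)^{(1-r)/2}(L_{\phi,D,W_s}+\lambda I)^{-(1-r)/2}\|\leq\|(\mathcal L_\phi+\lambda I)(L_{\phi,D,W_s}+\lambda I)^{-1}\|^{(1-r)/2}$; the middle block is at most $\lambda^{-r/2}$ since $L_{\phi,D,W_s}$ is positive; and the rightmost block satisfies $\|(L_{\phi,D,W_s}+\lambda I)^{-1/2}(\mathcal L_\phi+\lambda I)^{1/2}\|^2=\|(L_{\phi,D,W_s}+\lambda I)^{-1}(\mathcal L_\phi+\lambda I)\|$ by the standard identity $\|T^*T\|=\|TT^*\|$ for self-adjoint resolvents. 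Both of the remaining operator norms coincide (using $\|XY\|=\|YX\|$ for self-adjoint $X,Y$) and are controlled by Proposition~\ref{Proposition:Product} equation~(\ref{product-2}) with $v=0$ by $4\tilde c^2\lambda^{-2}s^{-2\gamma}+2\tilde c\lambda^{-1}s^{-\gamma}+1$.

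Combining the three blocks yields $A\leq\lambda^{-r/2}(4\tilde c^2\lambda^{-2}s^{-2\gamma}+2\tilde c\lambda^{-1}s^{-\gamma}+1)^{(2-r)/2}$. Since the bracketed quantity is at least $1$ and the exponent $(2-r)/2$ lies in $[1/2,1]$, the power can be enlarged to $1$; multiplying by the noise bound together with $\lambda^{-r/2-d/(4\gamma)}=\lambda^{-(2r\gamma+d)/(4\gamma)}$ then produces exactly the claimed estimate~(\ref{Sample-1-error}). The main obstacle is the combinatorial bookkeeping of operator powers: the splitting of $(L_{\phi,D,W_s}+\lambda I)^{-1}$ has to simultaneously free a clean $(\mathcal L_\phi+\lambda I)^{-1/2}$ on the right to feed Proposition~\ref{Proposition:value-difference-random}, keep every Cordes exponent within $[0,1]$, and prevent the polynomial $4\tilde c^2\lambda^{-2}s^{-2\gamma}+2\tilde c\lambda^{-1}s^{-\gamma}+1$ from appearing to a power larger than $1$; balancing these three constraints, together with the self-adjoint commutation argument used to identify $\|A^{1/2}B^{1/2}\|^2$ with $\|AB\|$, is what forces the precise middle factor $\lambda^{-r/2}$ and hence the exponent $(2r\gamma+d)/(4\gamma)$ appearing in the final bound.
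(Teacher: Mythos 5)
Your proposal is correct and takes essentially the same route as the paper's proof: represent the difference as $(L_{\phi,D,W_s}+\lambda I)^{-1}(S^T_{D,W_s}y_D-L_{\phi,D,W_s}f^*)$, convert the $\psi$-norm via $L_\phi^{(1-r)/2}$, peel off $\lambda^{-r/2}$, control the resolvent sandwich by the Cordes inequality \eqref{Cordes} together with Proposition \ref{Proposition:Product} (\eqref{product-2} with $v=0$), and finish with Proposition \ref{Proposition:value-difference-random}. Your three-block factorization is only a slightly more elaborate bookkeeping of the same operator powers (the paper extracts $\lambda^{-r/2}$ from $(\mathcal L_\phi+\lambda I)^{-r/2}$ rather than from the empirical resolvent), and you correctly write the residual with $L_{\phi,D,W_s}f^*$ where the paper's displayed computation contains a typo ($\mathcal L_\phi f^*$).
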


\begin{proof}
Since $f^*\in \mathcal N_\phi$ and (\ref{kernel-relation}) holds, we have from (\ref{operator-KRR}) and  (\ref{noise-free-1}) that
\begin{eqnarray*}
  &&\|f_{D,W_s,\lambda}- f^\diamond_{D,W_s,\lambda}\|_\psi
  =\|L_\phi^{(1-r)/2}(f_{D,W_s,\lambda}- f^\diamond_{D,W_s,\lambda})\|_\phi\\
  &=&
   \|L_\phi^{(1-r)/2}(L_{\phi,D,W_s}+\lambda I)^{-1}(S^T_{D,W_s}y_D-\mathcal L_\phi f^*)\|_\phi\\
   &\leq& \lambda^{-r/2}
    \|(\mathcal L_\phi+\lambda I)^{1/2}(L_{\phi,D,W_s}+\lambda I)^{-1}(S^T_{D,W_s}y_D-\mathcal L_\phi f^*)\|_\phi\\
    &\leq&
    \lambda^{-r/2}\|(\mathcal L_\phi+\lambda I)^{1/2}(L_{\phi,D,W_s}+\lambda I)^{-1/2}\|^2
    \|(\mathcal L_\phi+\lambda I)^{-1/2}(S^T_{D,W_s}y_D-\mathcal L_\phi f^*)\|_\phi.
\end{eqnarray*}
Therefore, Proposition  \ref{Proposition:Product} with $v=0$, Proposition \ref{Proposition:value-difference-random} and (\ref{Cordes}) yield that with confidence $1-\delta$, there holds
\begin{eqnarray*}
     &&\|f_{D,W_s,\lambda}- f^\diamond_{D,W_s,\lambda}\|_\psi
     =
     \lambda^{-r/2} (4\tilde{c}^2  \lambda^{-2} s^{-2\gamma}+2\tilde{c}\lambda^{-1}s^{-\gamma}+1)
     \tilde{c}' M \lambda^{-\frac{d}{4\gamma}} |D|^{-1/2}\log\frac3\delta.
\end{eqnarray*}
This completes the proof of Lemma \ref{Lemma:sample-error-global}.
\end{proof}

Now we are in a position to prove Theorem \ref{Theorem:Generaliation-WRLS}.

\begin{proof}[Proof of Theorem \ref{Theorem:Generaliation-WRLS}]
Plugging (\ref{Sample-1-error}) and (\ref{App-1-error}) into (\ref{Error-decomposition-global}), we obtain that with confidence $1-\delta$, there holds
\begin{eqnarray}\label{global-error-1}
      \|f_{D,W_s,\lambda}-f^*\|_\psi
     &\leq&
     (4\tilde{c}^2  \lambda^{-1} s^{-2\gamma}+2\tilde{c} s^{-\gamma}+\lambda)^{(1-r)/2}\|f^*\|_\phi \nonumber\\
     &+&
     \tilde{c}'M \lambda^{-\frac{2r\gamma+d}{4r}} (4\tilde{c}^2  \lambda^{-2} s^{-2\gamma}+2\tilde{c}\lambda^{-1}s^{-\gamma}+1)
      |D|^{-1/2}\log\frac3\delta.
\end{eqnarray}
Then for   $s\geq \lambda^{-1/\gamma}$, the above estimate yields
$$
   \|f_{D,W_s,\lambda}-f^*\|_\psi
   \leq
  c_8\left(\lambda^{\frac{1-r}2}+\lambda^{-\frac{2r\gamma+d}{4\gamma}}|D|^{-1/2}\right)\log\frac3\delta,
$$
where $c_8:=(1+4\tilde{c}^2+2\tilde{c})\max\{\|f^*\|_\phi,\tilde{c}'M\}.$ Noting further $\lambda\sim|D|^{-\frac{2\gamma}{2\gamma +d}}$, we get that with confidence $1-\delta$, there holds
$$
   \|f_{D,W_s,\lambda}-f^*\|_\psi\leq C|D|^{-\frac{(1-r)\gamma}{2\gamma+d}}\log\frac3\delta,
$$
where $C$ is a constant depending only on $d$, $M$ and $\|f^*\|_\phi$. The proof of Theorem \ref{Theorem:Generaliation-WRLS} is completed.
\end{proof}

\subsection{Proof of Theorem \ref{Theorem:Generaliation-DWRLS}}

To prove Theorem \ref{Theorem:Generaliation-DWRLS}, we need the following error decomposition strategy.

\begin{lemma}\label{Lemma:Error-decomposition}
Let $\overline{f}_{D,W_{\vec{s}},\vec{\lambda}}$ be defined by (\ref{DWRLS}). Then, we have
\begin{eqnarray}\label{error-dec-distributed}
        &&  E[\|\overline{f}_{D,W_{\vec{s}},\vec{\lambda}}-f^*\|_{\psi}^2] \nonumber\\
        & \leq&
         \sum_{j=1}^m\frac{|D_j|^2}{|D|^2}  E\left[\|f_{D_j,W_{j,s_j},\lambda_j}
         -f^*\|_{\psi}^2\right]
        +\sum_{j=1}^m\frac{|D_j|}{|D|}
        \left\|f^\diamond_{D_j,W_{j,s_j},\lambda_j}-f^*\right\|_{\psi}^2,
\end{eqnarray}
where $f^{\diamond}_{D_j,W_{j,s_j},\lambda_j}$ is the noise-free version of  $f_{D_j,W_{j,s_j},\lambda_j}$
defined by
\begin{equation}\label{noise-free-2}
    f^\diamond_{D_j,W_{j,s_j},\lambda_j}=( L_{\phi,D_j,W_{j,s_j}}+\lambda_j I)^{-1}L_{\phi,D_j,W_{j,s}}f^*.
\end{equation}
\end{lemma}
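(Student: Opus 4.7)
The plan is to exploit the fact that each local estimator $f_{D_j,W_{j,s_j},\lambda_j}$ depends linearly on the noise vector $(\varepsilon_{1,j},\dots,\varepsilon_{|D_j|,j})$, and that these noise vectors are mutually independent across different servers while the sampling points $\Lambda_j$ are deterministic. This will let me do a bias/variance style decomposition server by server and then use convexity to handle the global average.

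First, I would verify that the noise-free version $f^\diamond_{D_j,W_{j,s_j},\lambda_j}$ defined in (\ref{noise-free-2}) is exactly the conditional mean of $f_{D_j,W_{j,s_j},\lambda_j}$ given the fixed design: by (\ref{operator-KRR}) applied on the $j$-th server, $f_{D_j,W_{j,s_j},\lambda_j}=(L_{\phi,D_j,W_{j,s_j}}+\lambda_j I)^{-1}S^T_{D_j,W_{j,s_j}}y_{D_j}$, and since $E[y_{i,j}]=f^*(x_{i,j})$ with $x_{i,j}$ deterministic, linearity gives $E[f_{D_j,W_{j,s_j},\lambda_j}]=f^\diamond_{D_j,W_{j,s_j},\lambda_j}$. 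Set $\xi_j:=f_{D_j,W_{j,s_j},\lambda_j}-f^\diamond_{D_j,W_{j,s_j},\lambda_j}$ (a zero-mean random element of $\mathcal N_\psi$) and $b_j:=f^\diamond_{D_j,W_{j,s_j},\lambda_j}-f^*$ (deterministic).

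Next, I would write
\[
\overline{f}_{D,W_{\vec s},\vec\lambda}-f^*=\sum_{j=1}^m\frac{|D_j|}{|D|}\xi_j+\sum_{j=1}^m\frac{|D_j|}{|D|}b_j,
\]
and expand the squared $\psi$-norm. The cross terms between the two sums vanish in expectation because $E[\xi_j]=0$ and the $b_j$'s are deterministic. Since the noise across different servers is independent, $\xi_j$ and $\xi_{j'}$ are independent (and each mean-zero), so all off-diagonal cross terms in the variance part also vanish in expectation, yielding
\[
E\Bigl[\Bigl\|\sum_{j=1}^m\tfrac{|D_j|}{|D|}\xi_j\Bigr\|_\psi^2\Bigr]=\sum_{j=1}^m\frac{|D_j|^2}{|D|^2}E[\|\xi_j\|_\psi^2].
\]
For the deterministic bias part I would invoke convexity of $\|\cdot\|_\psi^2$ together with $\sum_j|D_j|/|D|=1$ to get $\bigl\|\sum_j\tfrac{|D_j|}{|D|}b_j\bigr\|_\psi^2\le\sum_j\tfrac{|D_j|}{|D|}\|b_j\|_\psi^2$.

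Finally, to match the exact form stated in the lemma, I would use the Pythagorean identity $E[\|f_{D_j,W_{j,s_j},\lambda_j}-f^*\|_\psi^2]=E[\|\xi_j\|_\psi^2]+\|b_j\|_\psi^2$, which holds because $\xi_j$ is mean-zero and $b_j$ is deterministic, so in particular $E[\|\xi_j\|_\psi^2]\le E[\|f_{D_j,W_{j,s_j},\lambda_j}-f^*\|_\psi^2]$. Substituting gives the first sum of the stated bound, and the convexity bound on $\|\sum_j \tfrac{|D_j|}{|D|}b_j\|_\psi^2$ gives the second. The only step that requires any care is justifying the vanishing of the cross terms in a Hilbert space valued expectation; I would handle that by expanding $\langle\xi_j,\xi_{j'}\rangle_\psi$ via the orthonormal expansion $\xi_j=\sum_{k,\ell}\widehat{(\xi_j)}_{k,\ell}Y_{k,\ell}$ underlying the $\mathcal N_\psi$ inner product, moving the expectation inside the (absolutely convergent) sum, and using independence of the Fourier coefficients across servers together with $E[\widehat{(\xi_j)}_{k,\ell}]=0$. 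No step is conceptually hard; the only obstacle is bookkeeping this Fubini/independence argument in the Hilbert-space setting $\mathcal N_\psi$.
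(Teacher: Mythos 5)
Your proposal is correct and follows essentially the same route as the paper: both arguments rest on the identification $E[f_{D_j,W_{j,s_j},\lambda_j}]=f^\diamond_{D_j,W_{j,s_j},\lambda_j}$, the vanishing of cross terms via independence of the noise across servers, and Jensen's inequality applied to the deterministic bias average; your bias--variance reorganization (centering each local estimator first) is just a cleaner bookkeeping of the same expansion the paper carries out directly on $\sum_j\frac{|D_j|}{|D|}(f_{D_j,W_{j,s_j},\lambda_j}-f^*)$. The Pythagorean step you use to absorb $E[\|\xi_j\|_\psi^2]$ into $E[\|f_{D_j,W_{j,s_j},\lambda_j}-f^*\|_\psi^2]$ corresponds exactly to the paper's discarding of the negative term $-\sum_j\frac{|D_j|^2}{|D|^2}\|E[f_{D_j,W_{j,s_j},\lambda_j}]-f^*\|_\psi^2$.
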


\begin{proof} Due to
$\sum_{j=1}^m\frac{|D_j|}{|D|}=1$, we have
\begin{align*}
       &\|\overline{f}_{D,W_{\vec{s}},\vec{\lambda}}-f^*\|_{\psi}^2
        =
       \left\|\sum_{j=1}^m\frac{|D_j|}{|D|}(f_{D_j,W_{j,s_j},\lambda_j}-f^*)\right\|_{\psi}^2\\
       &\hspace{-2mm}=
       \sum_{j=1}^m\frac{|D_j|^2}{|D|^2}\|f_{D_j,W_{j,s_j},\lambda_j}-f^*\|_{\psi}^2
       +
       \sum_{j=1}^m\frac{|D_j|}{|D|}\left\langle
       f_{D_j,W_{j,s_j},\lambda_j}-f^*,\sum_{k\neq
       j}\frac{|D_k|}{|D|}(f_{D_k,W_{k,s_k},\lambda_k}-f^*)\right\rangle_{\psi}.
\end{align*}
Taking expectations, we have
\begin{align*}
        &  E\left[\|\overline{f}_{D,W_{\vec{s}},\vec{\lambda}}-f^*\|_{\psi}^2\right]
        =
        \sum_{j=1}^m\frac{|D_j|^2}{|D|^2}  E\left[ \left\|f_{D_j,W_{j,s_j},\lambda_j}-f^*\right\|_{\psi}^2\right]\\
        &\quad+
        \sum_{j=1}^m\frac{|D_j|}{|D|}\left\langle
         E_{D_j}[f_{D_j,W_{j,s_j},\lambda_j}]-f^*,  E[\overline{f}_{D,W_{\vec{s}},\vec{\lambda}}]-f^*- \frac{|D_j|}{|D|}
       \left( E_{D_j}[f_{D_j,W_{j,s_j},\lambda_j}]-f^*\right)\right\rangle_\phi.
\end{align*}
But
$$
        \sum_{j=1}^m\frac{|D_j|}{|D|}\left\langle
         E_{D_j}[f_{D_j,W_{j,s_j},\lambda_j}]-f^*, E[\overline{f}_{D,W_{\vec{s}},\vec{\lambda}}]-f^*\right\rangle_{\psi}\\
      	 =
       \|E[\overline{f}_{D,W_{\vec{s}},\vec{\lambda}}]-f^*\|_{\psi}^2.
$$
Then,
\begin{align*}
        &E\left[\|\overline{f}_{D,W_{\vec{s}},\vec{\lambda}}-f^*\|_{\psi}^2\right]
          =
         \sum_{j=1}^m\frac{|D_j|^2}{|D|^2}  E\left[ \left\|f_{D_j,W_{j,s_j},\lambda_j}-f^*\right\|_{\psi}^2\right]\\
         &\qquad-\sum_{j=1}^m\frac{|D_j|^2}{|D|^2}\left\| E[f_{D_j,W_{j,s_j},\lambda_j}]-f^*\right\|_{\psi}^2 +
        \|E[\overline{f}_{D,W_{\vec{s}},\vec{\lambda}}]-f^*\|_{\psi}^2.
\end{align*}
Noting further from (\ref{noise-free-2}) that
$$
      E[\overline{f}_{D,W_{\vec{s}},\vec{\lambda}}]=\sum_{j=1}^m\frac{|D_j|}{|D|}f^\diamond_{D_j,W_{j,s_j},\lambda_j},
$$
we obtain from
$\sum_{j=1}^m\frac{|D_j|}{|D|}=1$ that
$$
    \|E[\overline{f}_{D,W_{\vec{s}},\vec{\lambda}}]-f^*\|_{\psi}^2
     =
       \left\|\sum_{j=1}^m\frac{|D_j|}{|D|}\left(f^\diamond_{D_j,W_{j,s_j},\lambda_j}-f^*\right)\right\|_{\psi}^2\\[1mm]
     \leq
       \sum_{j=1}^m\frac{|D_j|}{|D|}\left\|f^\diamond_{D_j,W_{j,s_j},\lambda_j}-f^*\right\|_{\psi}^2,
$$
which  proves the bound in \eqref{error-dec-distributed}. The proof of Lemma \ref{Lemma:Error-decomposition} is completed.
\end{proof}

For further analysis, the following lemma is   needed.

\begin{lemma}\label{Lemma:prob-to-exp}
Let   $\xi$ be a random variable with nonnegative values. If $\xi\leq \mathcal A\log^b\frac{c}{\delta}$ holds with confidence $1-\delta$  for some $\mathcal A,b,c>0$ and any $0<\delta<1$, then
$$
      E[\xi]\leq c\Gamma(b+1) \mathcal A,
$$
where $\Gamma(\cdot)$ is the Gamma function.
\end{lemma}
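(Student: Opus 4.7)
The plan is to exploit the standard layer cake representation $E[\xi]=\int_0^\infty P[\xi>t]\,dt$, which is available because $\xi\geq 0$. The task then reduces to converting the confidence-style bound given in the hypothesis into a usable tail inequality for $P[\xi>t]$ valid on all of $t\geq 0$.

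First I would invert the relation $t=\mathcal A\log^b(c/\delta)$ to obtain $\delta=c\exp\bigl(-(t/\mathcal A)^{1/b}\bigr)$. The hypothesis, applied at this $\delta$, yields the tail bound
\[
P[\xi>t]\;\leq\;c\exp\bigl(-(t/\mathcal A)^{1/b}\bigr)
\]
for every $t$ in the range where the implicit $\delta$ lies in $(0,1)$, i.e.\ for $t>\mathcal A\log^b c$ when $c>1$ (and automatically for all $t\geq 0$ when $c\leq 1$). For the remaining small values $0\leq t\leq \mathcal A\log^b c$ (only relevant when $c>1$), I would use the trivial inequality $P[\xi>t]\leq 1$, and note that in this regime $c\exp\bigl(-(t/\mathcal A)^{1/b}\bigr)\geq c\,e^{-\log c}=1$. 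Consequently the single bound $P[\xi>t]\leq c\exp\bigl(-(t/\mathcal A)^{1/b}\bigr)$ holds uniformly for all $t\geq 0$.

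Next I would substitute into the layer-cake formula and carry out the change of variables $u=(t/\mathcal A)^{1/b}$, so that $t=\mathcal A u^b$ and $dt=\mathcal A b\, u^{b-1}\,du$:
\[
E[\xi]\;\leq\;\int_0^\infty c\exp\bigl(-(t/\mathcal A)^{1/b}\bigr)\,dt
\;=\;c\,\mathcal A\,b\int_0^\infty u^{b-1}e^{-u}\,du
\;=\;c\,\mathcal A\,b\,\Gamma(b)\;=\;c\,\Gamma(b+1)\,\mathcal A,
\]
which is exactly the claimed bound.

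No genuine obstacle is expected; the only subtlety is the case $c>1$, where the pointwise tail estimate derived from the hypothesis is vacuous in the small-$t$ regime. Handling that by comparing to the trivial bound $P[\xi>t]\leq 1$, as outlined above, is the one careful step. Everything else is a routine Gamma-function computation.
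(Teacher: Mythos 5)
Your proposal is correct and follows essentially the same route as the paper: invert the confidence bound to get the tail estimate $P[\xi>t]\leq c\exp\bigl(-(t/\mathcal A)^{1/b}\bigr)$, apply the layer-cake formula, and evaluate the resulting integral as $c\,\mathcal A\,b\,\Gamma(b)=c\,\Gamma(b+1)\,\mathcal A$. You are in fact slightly more careful than the paper in explicitly checking that the tail bound remains valid (because it exceeds $1$) in the small-$t$ regime when $c>1$.
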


\begin{proof}
Since $\xi\leq \mathcal A\log^b\frac{c}{\delta}$ holds with confidence $1-\delta$, we have for any $t>0$,
$$
    P[\xi>t]\leq c\exp\{-\mathcal A^{-1/b}t^{1/b}\}.
$$
Using the probability to expectation formula
$$
               E[\xi] =\int_0^\infty P\left[\xi > t\right] d t
$$
  to the   random variable $\xi$ with nonnegative values, we obtain
\begin{eqnarray*}
     E[\xi]\leq  c\int_{0}^\infty\exp\{-\mathcal A^{-1/b}t^{1/b}\}dt
     \leq c\mathcal A\Gamma(b+1).
\end{eqnarray*}
This completes the proof of Lemma \ref{Lemma:prob-to-exp}.
\end{proof}

Based on the above lemmas, we can prove Theorem \ref{Theorem:Generaliation-DWRLS} as follows.

\begin{proof}[Proof of Theorem \ref{Theorem:Generaliation-DWRLS}]
For any $1\leq j\leq m$,  since $\hat{\phi}_k\sim k^{-2\gamma}$ with $\gamma>d/2$, $f^*\in \mathcal N_\phi$, (\ref{kernel-relation}) holds,
$\Lambda_j=\{x_{i,j}\}_{i=1}^{|D_j|}$ be  $\tau_j$-quasi uniform and  $\mathcal Q_{\Lambda_j,s_j}:=\{(w_{i,j,s_j},  x_{i,j}): w_{i,j,s}\geq 0
\hbox{~and~}   x_{i,j}\in \Lambda_j\}$  be a quadrature rule satisfying $0<w_{i,j,s_j}\leq c_1|D_j|^{-1}$, it follows from Lemma \ref{Lemma:approximation-error-global} with $\Lambda$ being replaced by $\Lambda_j$ that
\begin{equation}\label{App-2-error}
     \|f^\diamond_{D_j,W_{j,s_j},\lambda_j}-f^*\|_\psi\leq (4\tilde{c}^2  \lambda_j^{-1} s_j^{-2\gamma}+2\tilde{c} s_j^{-\gamma}+\lambda_j)^{(1-r)/2}\|f^*\|_\phi.
\end{equation}
Furthermore, we obtain from (\ref{global-error-1}) with $D$ being replaced by $D_j$ that with confidence $1-\delta$, there holds
\begin{eqnarray}\label{global-error-2}
      \|f_{D_j,W_{j,s_j},\lambda_j}-f^*\|_\psi
     &\leq&
     (4\tilde{c}^2  \lambda_j^{-1} s_j^{-2\gamma}+2\tilde{c} s_j^{-\gamma}+\lambda_j)^{(1-r)/2}\|f^*\|_\phi \nonumber\\
     &+&
     \tilde{c}'M \lambda_j^{-\frac{2r\gamma+d}{4\gamma}} (4\tilde{c}^2  \lambda_j^{-2} s_j^{-2\gamma}+2\tilde{c}\lambda_j^{-1}s_j^{-\gamma}+1)
      |D_j|^{-1/2}\log\frac3\delta.
\end{eqnarray}
Applying Lemma \ref{Lemma:prob-to-exp} with $\mathcal A=2(\tilde{c}')^2 M^2\lambda_j^{-\frac{2r\gamma+d}{2\gamma}} (4\tilde{c}^2  \lambda_j^{-2} s_j^{-2\gamma}+2\tilde{c}\lambda_j^{-1}s_j^{-\gamma}+1)^2
      |D_j|^{-1}$, $b=2$ and $c=3$, we get from (\ref{global-error-2}) that
\begin{eqnarray}\label{global-error-3}
   &&E[\|f_{D_j,W_{j,s_j},\lambda_j}-f^*\|^2_\psi]\leq 2(4\tilde{c}^2  \lambda_j^{-1} s_j^{-2\gamma}+2\tilde{c} s_j^{-\gamma}+\lambda_j)^{1-r}\|f^*\|^2_\phi \nonumber\\
   &+&
   12(\tilde{c}')^2M^2 \lambda_j^{-\frac{2r\gamma+d}{2\gamma}} (4\tilde{c}^2  \lambda_j^{-2} s_j^{-2\gamma}+2\tilde{c}\lambda_j^{-1}s_j^{-\gamma}+1)^2
      |D_j|^{-1}.
\end{eqnarray}
Plugging (\ref{global-error-3}) and (\ref{App-2-error}) into (\ref{error-dec-distributed}), we have
\begin{eqnarray*}
    &&  E[\|\overline{f}_{D,W_{\vec{s}},\vec{\lambda}}-f^*\|_{\psi}^2] \leq
    2\sum_{j=1}^m\left(\frac{|D_j|^2}{|D|^2}+ \frac{|D_j|}{|D|}\right) (4\tilde{c}^2  \lambda_j^{-1} s_j^{-2\gamma}+2\tilde{c} s_j^{-\gamma}+\lambda_j)^{1-r}\|f^*\|^2_\phi\\
    &+&
    12(\tilde{c}')^2M^2\sum_{j=1}^m\frac{|D_j|^2}{|D|^2}\lambda_j^{-\frac{2r\gamma+d}{2\gamma}} (4\tilde{c}^2  \lambda_j^{-2} s_j^{-2\gamma}+2\tilde{c}\lambda_j^{-1}s_j^{-\gamma}+1)^2
      |D_j|^{-1}.
\end{eqnarray*}
Since $s_j\geq\lambda_j^{-1/\gamma}$,  we have
$$
    E[\|\overline{f}_{D,W_{\vec{s}},\vec{\lambda}}-f^*\|_{\psi}^2]
    \leq
      {c}_9  \sum_{j=1}^m\frac{|D_j|}{|D|}\left(\lambda_j^{1-r}+\frac1{|D|} \lambda_j^{-\frac{2r\gamma+d}{2\gamma}}\right),
$$
where $ {c}_9$ is a constant depending only on $d$, $r$, $\gamma$, $M$ and $\|f^*\|_\phi$.
Noting further $\lambda_j\sim |D|^{-\frac{2\gamma}{2\gamma+d}}$, the above estimate yields
$$
   E[\|\overline{f}_{D,W_{\vec{s}},\vec{\lambda}}-f^*\|_{\psi}^2]
   \leq
   2\bar{c}_8 |D|^{-\frac{2(1-r)\gamma}{2\gamma+d}}.
$$
This completes the proof of Theorem \ref{Theorem:Generaliation-DWRLS}.
\end{proof}

\section{Simulations}\label{sec.Simulation}
In this section, we conduct several  numerical simulations to verify our theoretical statements and show the excellent performance of DWRLS. We compare the following three methods: distributed filtered hyperinterpolation (DFH) proposed in \cite{Lin2021}, WRLS, and DWRLS, where WRLS with training all samples in a batch mode is considered as a baseline. Two functions are used to generate samples for simulation. The first function is constructed via the  Wendland function  \cite{Chernih2014}
\begin{equation}
\tilde{\Psi}(u) = (1-u)_{+}^8(32u^3+25u^2+8u+1),
\end{equation}
where $u_+ = \max\{u, 0\}$, and it is defined by
\begin{equation}
f_1({x}) = \sum\limits_{i=1}^{10} \tilde{\Psi}(\|{x}-{z}_i\|_2),
\end{equation}
where ${z}_i$ ($i=1,\cdots,10$) are the center points of the regions of an equal area partitioned by Leopardi's recursive zonal sphere partitioning
procedure  \cite{Leopardi2006}.
The second function is the Franke function, modified by Renka \cite{Renka1988},
\begin{eqnarray}
f_2(x) &=& 0.75\exp(-(9x^{(1)}-2)^2/4-(9x^{(2)}-2)^2/4-(9x^{(3)}-2)^2/4) \nonumber\\
            &+&   0.75\exp(-(9x^{(1)}+1)^2/49-(9x^{(2)}+1)/10-(9x^{(3)}+1)/10) \nonumber\\
            &+&   0.5\exp(-(9x^{(1)}-7)^2/4-(9x^{(2)}-3)^2/4-(9x^{(3)}-5)^2/4) \nonumber\\
            &-&   0.2\exp(-(9x^{(1)}-4)^2-(9x^{(2)}-7)^2-(9x^{(3)}-5)^2),
\end{eqnarray}
where ${x}=(x^{(1)}, x^{(2)}, x^{(3)})^T$.

Before describing the simulations, we introduce the generation process of the simulation data as follows: First, Womersley's symmetric spherical $45$-designs \cite{Womersley2018} are used to generate
$1038$ points $\{\hat{x}_{i}\}_{i=1}^{1038}$ on the unit sphere. Second, the points $\{\hat{x}_{i}\}_{i=1}^{1038}$ are rotated by the rotation matrix
$$
A_j := \left(
  \begin{array}{ccc}
    \cos(j\pi/10) & -\sin(j\pi/10) & 0 \\
    \sin(j\pi/10) & \cos(j\pi/10) & 0 \\
    0 & 0 & 1 \\
  \end{array}
\right)
$$
to obtain   new points $\{{x}_{i,j}\}_{i=1}^{1038}$ for $j=1,\cdots,10$, i.e., ${x}_{i,j}=A_j\hat{x}_{i}$, and the points $\{x_{i,j}\}_{i=1,j=1}^{1038,10}$ are used as the inputs of
training samples; the corresponding outputs $\{\{y_{i,j}\}_{i=1}^{1038}\}_{j=1}^{10}$ are generated by
\begin{equation}
y_{i,j} = f_j(x_{i,j}) + \varepsilon_{i,j}, \quad i = 1, \cdots, 1038, \quad j = 1, \cdots, 10,
\end{equation}
where $\varepsilon_{i,j}$ is the independent Gaussian noise $\mathcal{N}(0, 0.1^2)$. In this way,  there are a total of $N=10380$ training samples.
Finally, the inputs $\{x_i'\}_{i=1}^{N'}$  of testing samples are $N'=10000$ generalized
spiral points on the unit sphere, and the corresponding outputs $\{y_i'\}_{i=1}^{N'}$ are generated by $y_i'=f_j(x_i')$.

Some implementation details of simulations are described as follows. For DFH, the training samples are equally distributed to $10$ local machines,
i.e., the samples $D_j:=\{(x_{i,j}, y_{i,j})\}_{i=1}^{1038}$ obtained by the $j$th rotation matrix are located on the $k$th local machine, and the
parameter $L$ related to the polynomial degree is selected from the set $\{2,4,\cdots,40\}$.
For DWRLS, the training samples are distributed to $m$ ($m\geq 10$) local machines in the following way: Let $\tau:=\mod(m,10)$. If $\tau=0$, i.e.,
$m$ can be divided by $10$, then the samples of each set $D_j$ are randomly and equally distributed to $m/10$ local machines. If $r>0$,
we randomly choose $r$ sets from $\{D_j\}_{j=1}^{10}$; the samples of each chosen set are equally distributed to $\lceil m/10 \rceil$
local machines; the samples of each set of the remaining $10-\tau$ sets are equally distributed to $\lfloor m/10 \rfloor$
local machines. In the execution of WRLS and DWRLS, we use the positive
definite function $\phi_2(x, x')=\tilde{\Psi}(\|x - x'\|_2)$ with the regularization parameter $\lambda$ being chosen from the set
$\{\frac{1}{2^q} | \frac{1}{2^q}>10^{-10}, q=0,1,2,\cdots\}$ for the approximation of function $f_1$.
For the approximation of function $f_2$, the positive definite function is defined as
$\phi_1(x, x')=\exp(-\frac{\|x - x'\|_2^2}{2\sigma^2})$ with the regularization parameter $\lambda$ being chosen from the set
$\{\frac{1}{3^q} | \frac{1}{3^q}>10^{-10}, q=0,1,2,\cdots\}$  and the width $\sigma$ being chosen from 10 values which are drawn in a logarithmic, equally spaced interval
$[0.1, 1]$. All parameters in the simulations are selected by grid search.

Our first aim of simulation is to  compare DWRLS with DFH. The result is shown in Figure \ref{RMSEcontrast}. It can be found in Figure \ref{RMSEcontrast} that for the same partitions of data, DWRLS is at least as good as DFH. This is not a surprising phenomenon since our theoretical assertions are made under the Sobolev error estimate while the result of DFH \cite{Lin2021} is only  derived in $L^2(\mathbb S^d)$, showing that our analysis is available to numerous types of data.

\begin{figure*}[t]
    \centering
    \includegraphics[width=6cm,height=4cm]{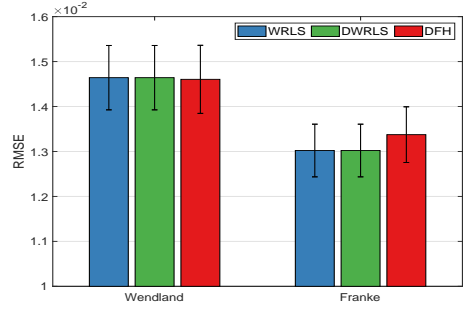}
    \caption{Comparison of testing RMSE when the number of local servers is set to 10}\label{RMSEcontrast}
\end{figure*}

Our second aim is to show the relation between the fitting performance of DWRLS and the number of local servers. The numerical results is shown in Figure \ref{RMSEvsm}. There are two interesting findings from  Figure \ref{RMSEvsm}: 1) The RMSE of DWRLS is somewhat stable with the number of severs $m$. Taking the Franke function for example, the RMSE changes from 0.013 to 0.020 when the number of servers increases from 1 to 100; 2) When $m$ is not so large, DWRLS performs similarly as WRLS, which verifies our theoretical assertions. Both findings show the excellent performance of DWRLS in fitting noisy spherical data.

\begin{figure*}[t]
    \centering
    \subfigcapskip=-2pt
    \subfigure[Wendland]{\includegraphics[width=5cm,height=4cm]{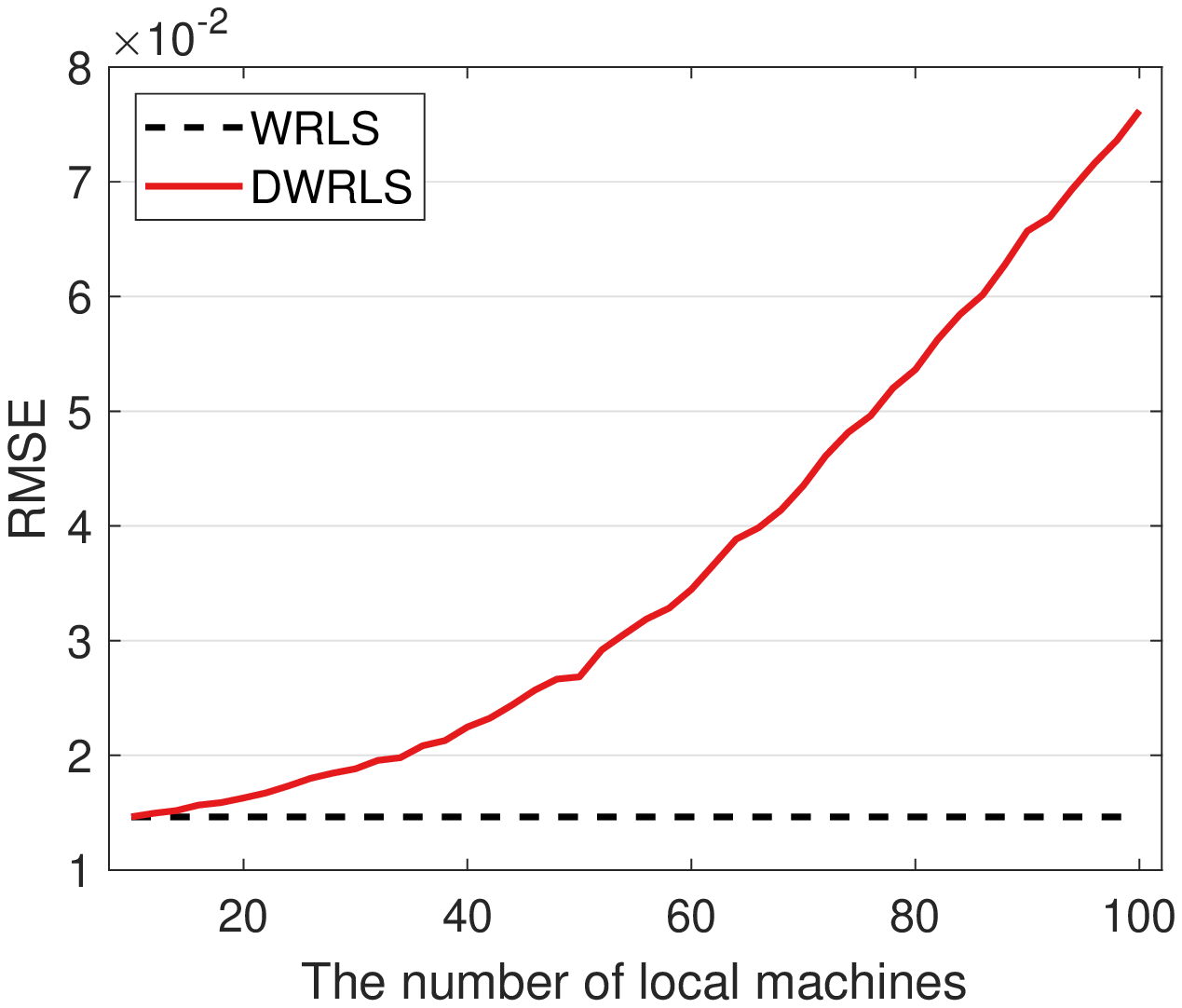}}\hspace{0.5in}
    \subfigure[Franke]{\includegraphics[width=5cm,height=4cm]{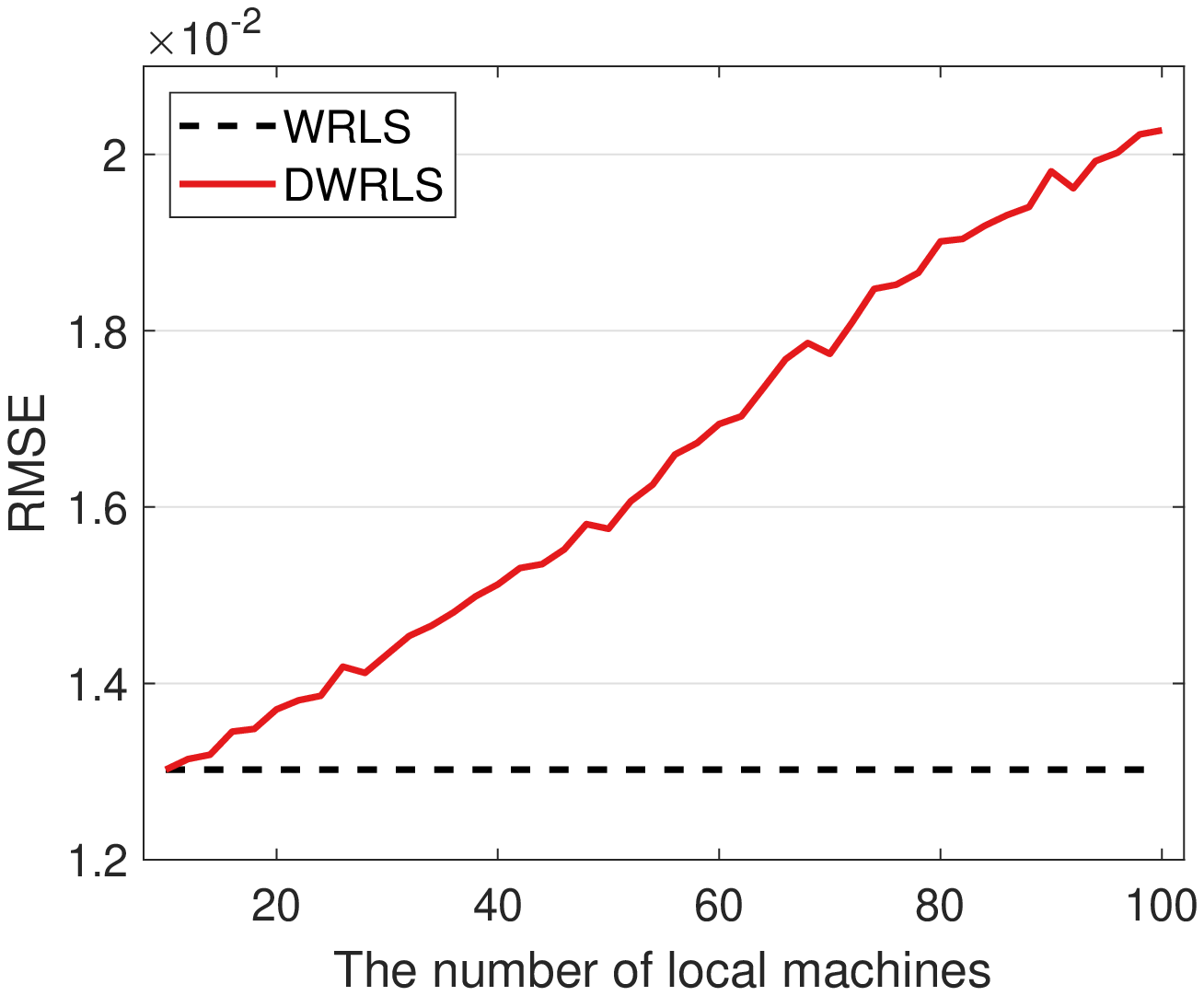}}
    \caption{Relation between the RMSE and the number of local servers for DWRLS}\label{RMSEvsm}
\end{figure*}

Our third purpose is to explicitly demonstrate the performance of DWLRS via plotting the ground truth functions, the fitted functions and the approximation error. The first rows of Figures \ref{Comparisonf1} and \ref{Comparisonf2} concern the ground truth functions and fitted functions, while the second rows exhibit the noise of ground truth and approximation errors. In this simulation, the number of local servers of DFH is always set to be 10 while that of DWRLS is set to be 10, 50, 100, respectively. From these two figures,  it can be found that the fitted functions are very similar as the ground truth functions for DWRLS with not so large $m$, i.e., $m=10,50$. Furthermore, the fitted RMSE is near to zero for DWRLS with $m=10,50$, though it is different from the ground truth noise. The main reason is that the regularization term $\lambda\|f\|_\phi^2$ in the definition of DWRLS provides a trade-off between the approximation error and estimate error in \eqref{Error-decomposition-global}. Under this circumstance, the fitted error can be much smaller than the ground truth error, provided the regularization parameter $\lambda$ is appropriately tuned. This also verifies the power of DWRLS in fitting noisy spherical data.

\begin{figure*}[t]
    \centering
    \subfigcapskip=-1pt
    \subfigure{\includegraphics[width=2.15cm,height=2.5cm]{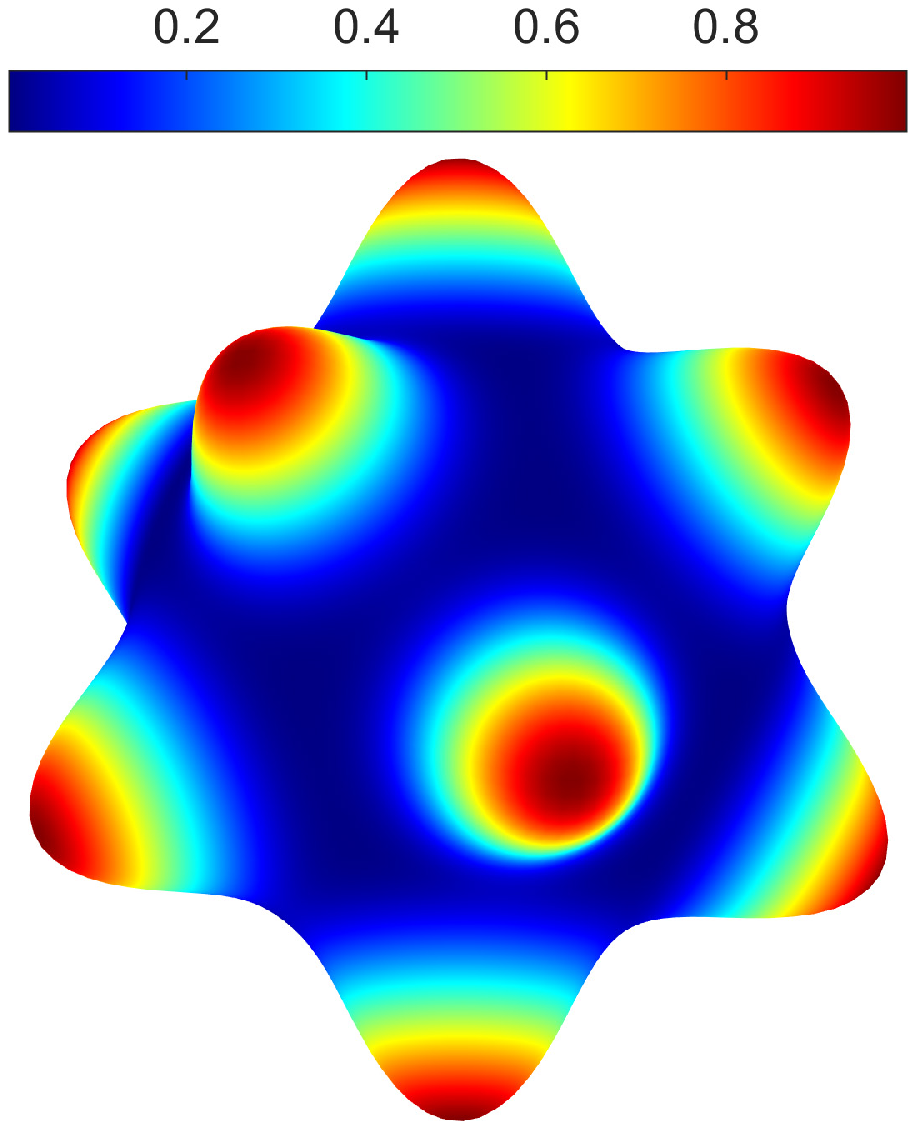}}
    \subfigure{\includegraphics[width=2.15cm,height=2.5cm]{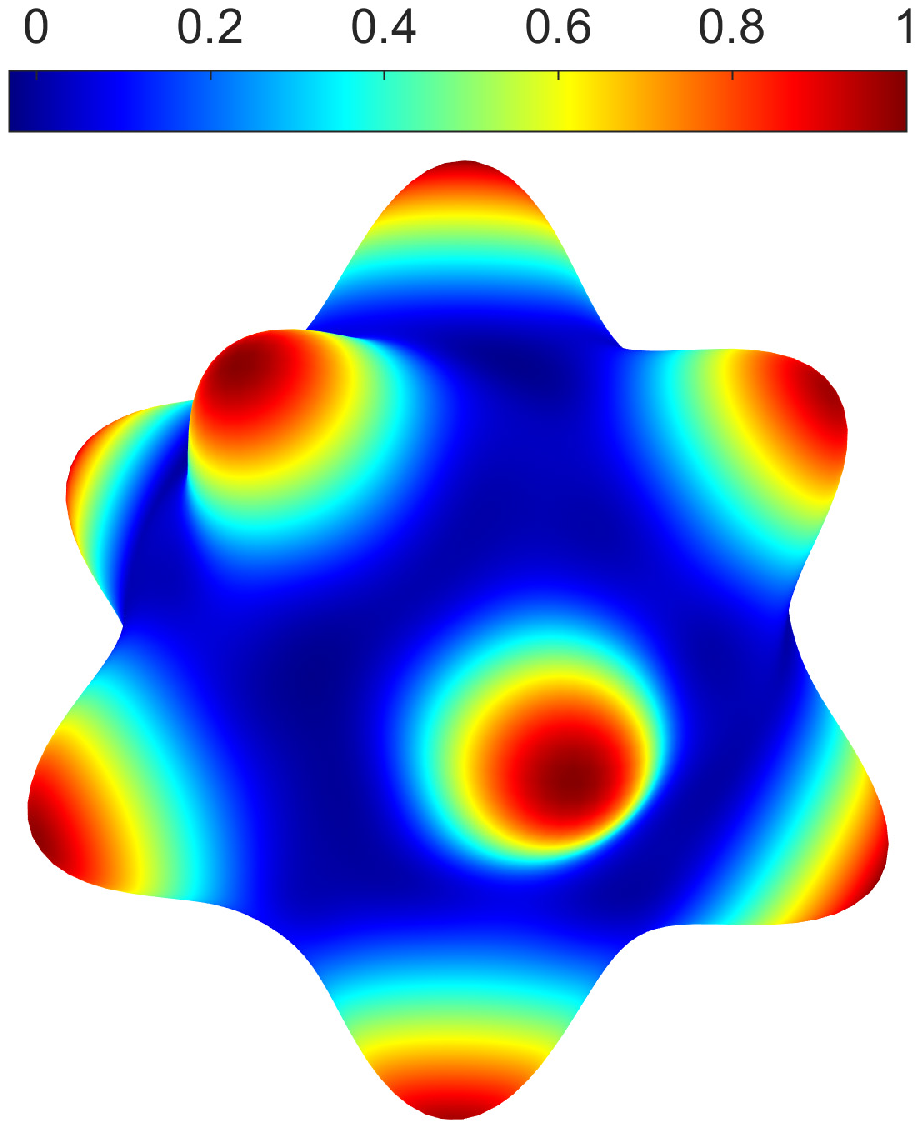}}
    \subfigure{\includegraphics[width=2.15cm,height=2.5cm]{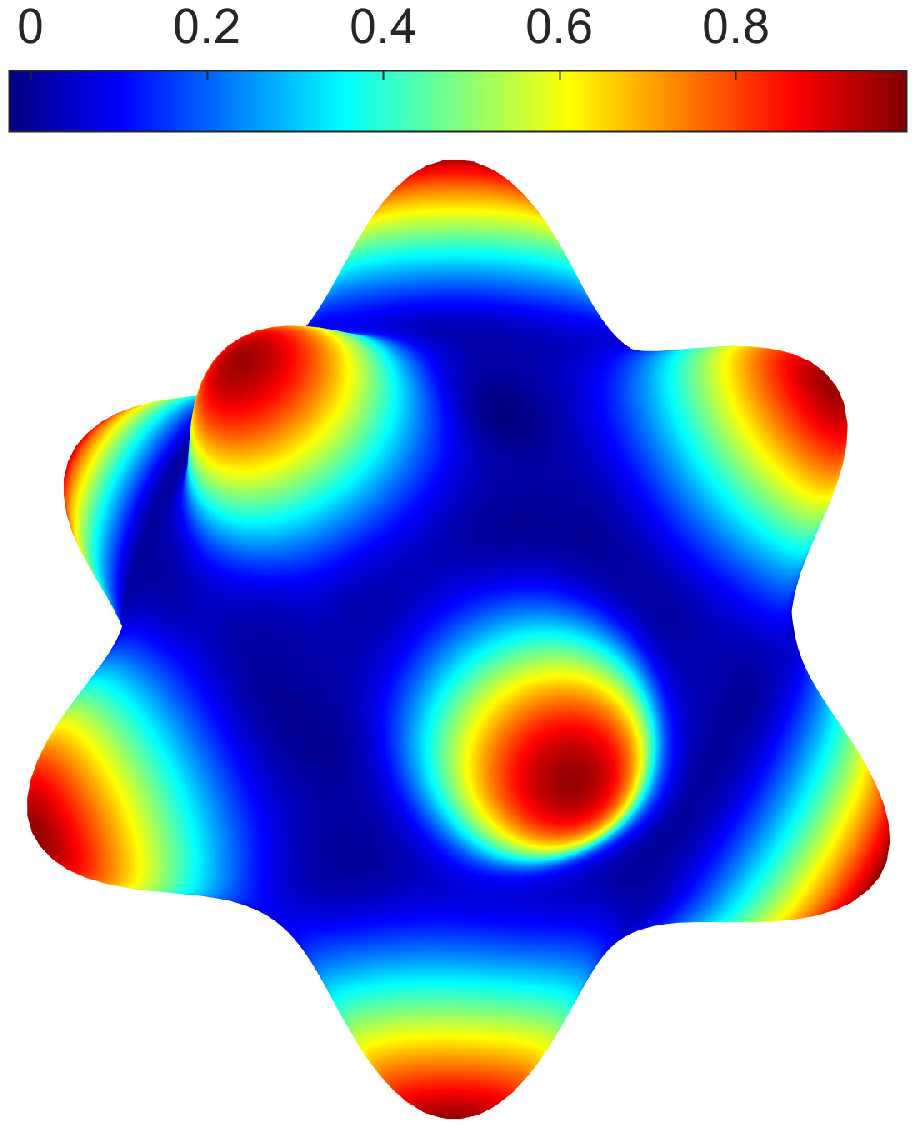}}
    \subfigure{\includegraphics[width=2.15cm,height=2.5cm]{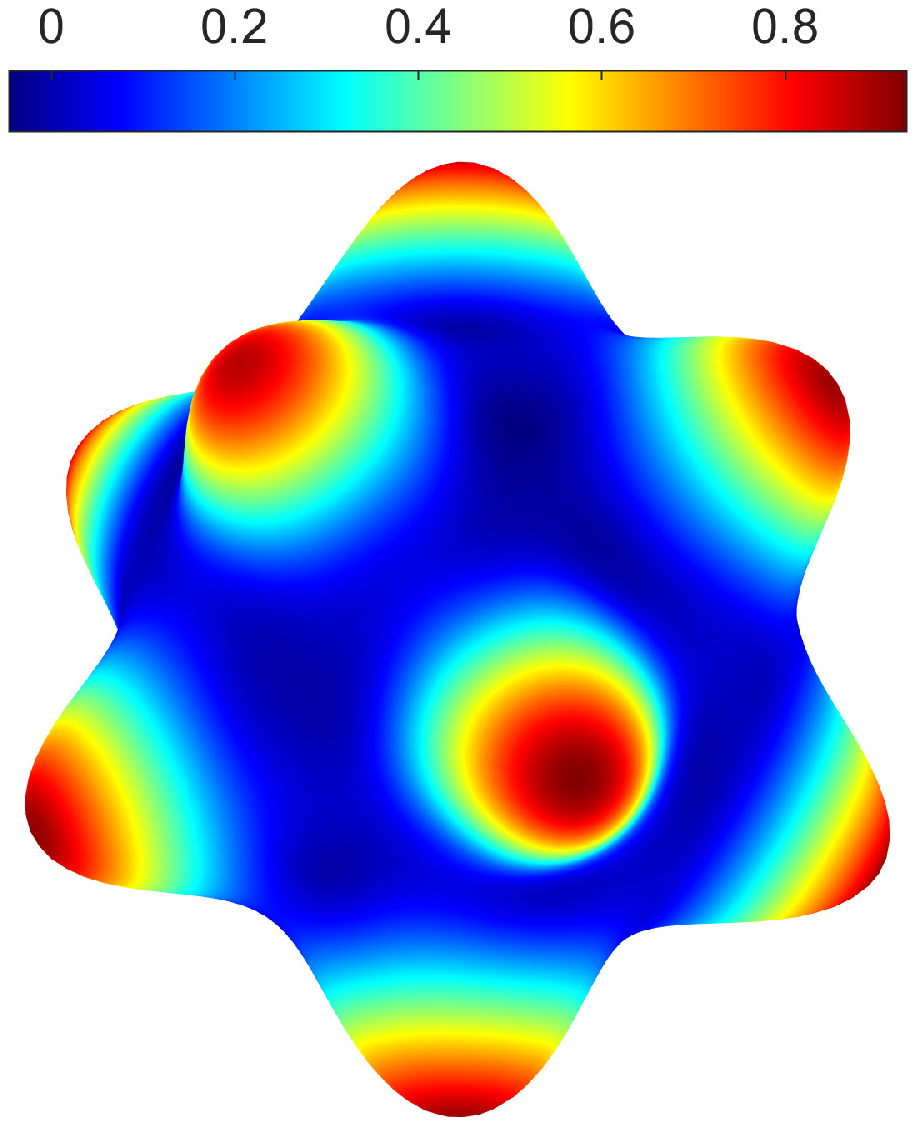}}
    \subfigure{\includegraphics[width=2.15cm,height=2.5cm]{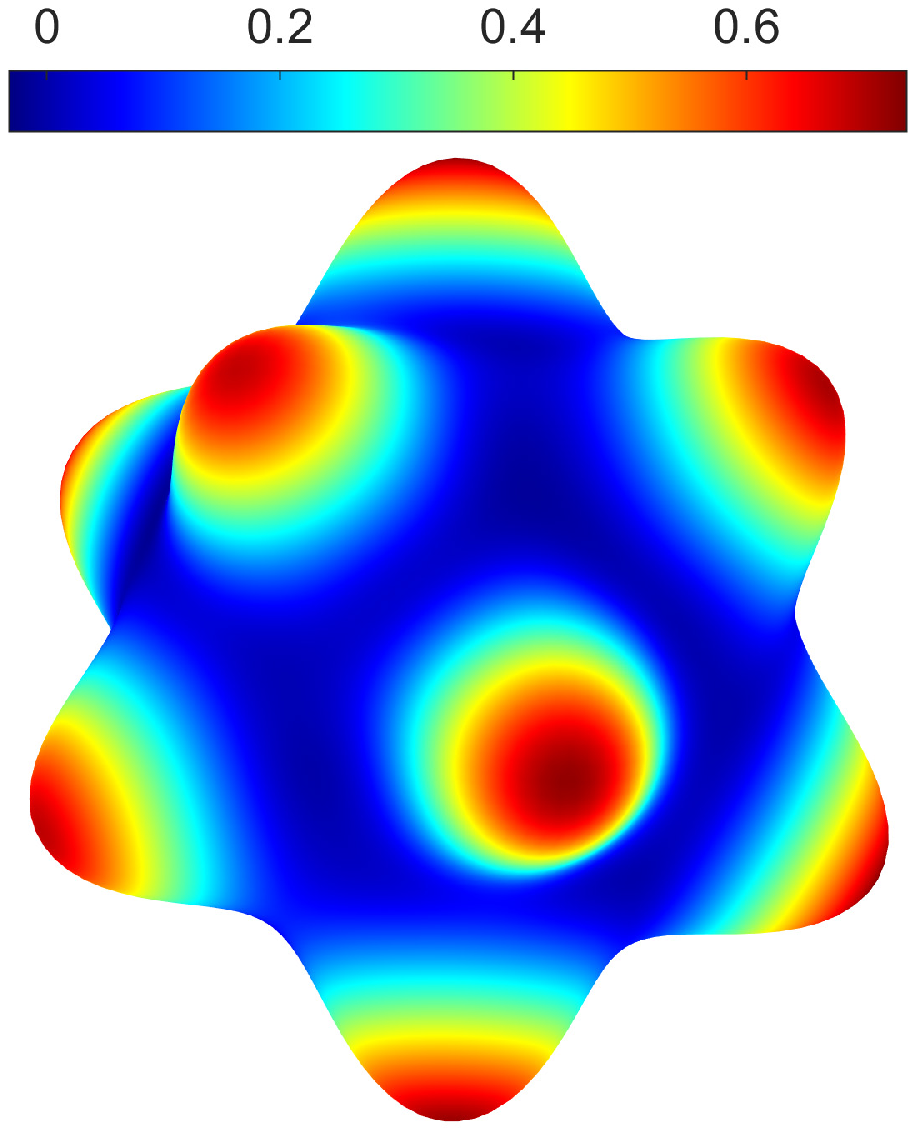}}\\
    \vspace{-0.08in}
    \setcounter{subfigure}{0}
    \subfigure[Groundtruth]{\includegraphics[width=2.15cm,height=2.5cm]{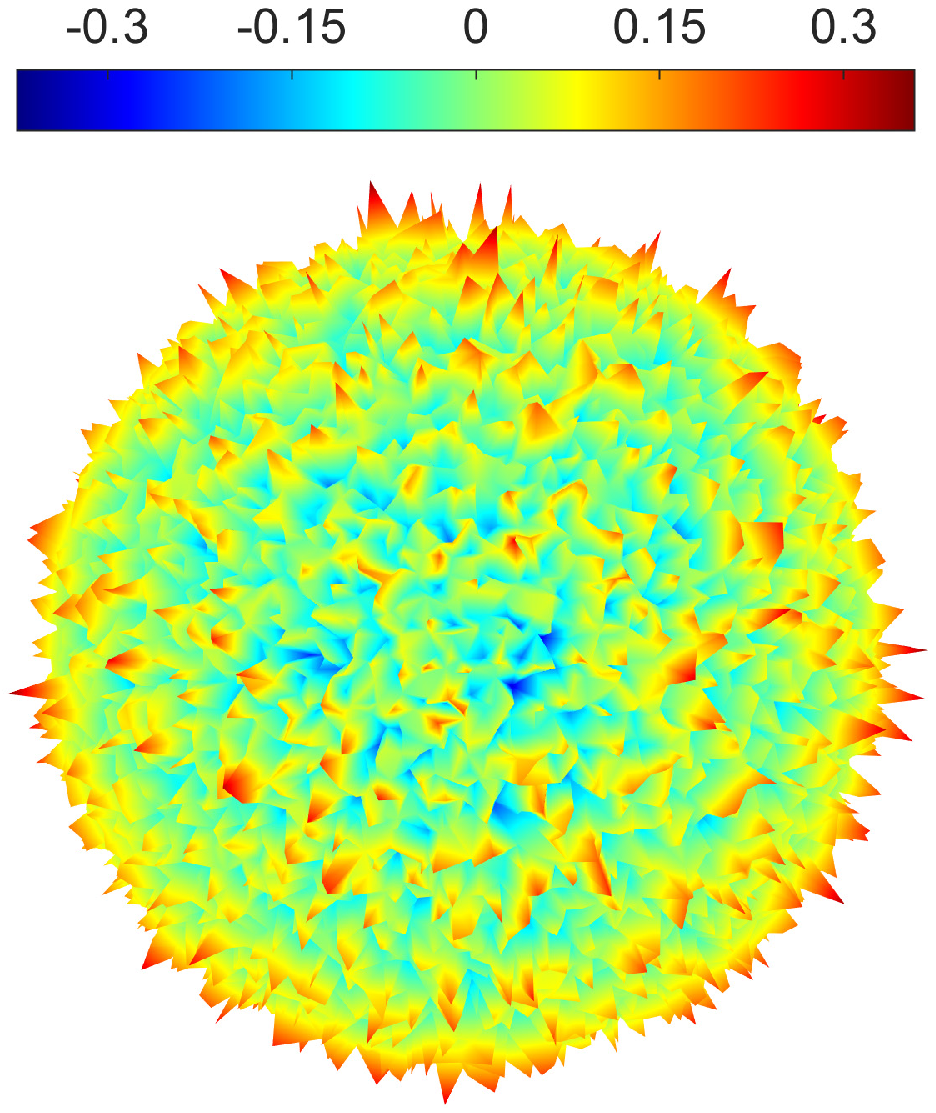}}
    \subfigure[DFH]{\includegraphics[width=2.15cm,height=2.5cm]{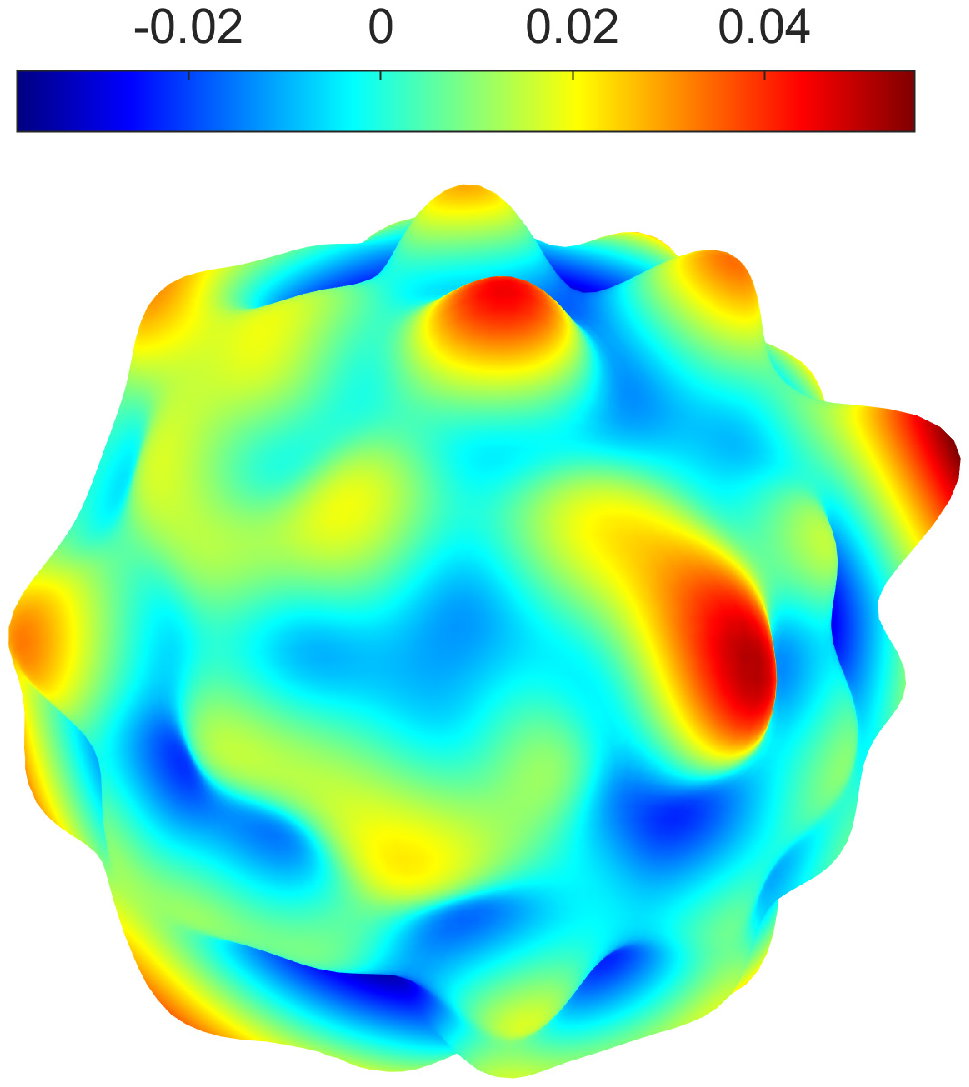}}
    \subfigure[DWRLS(10)]{\includegraphics[width=2.15cm,height=2.5cm]{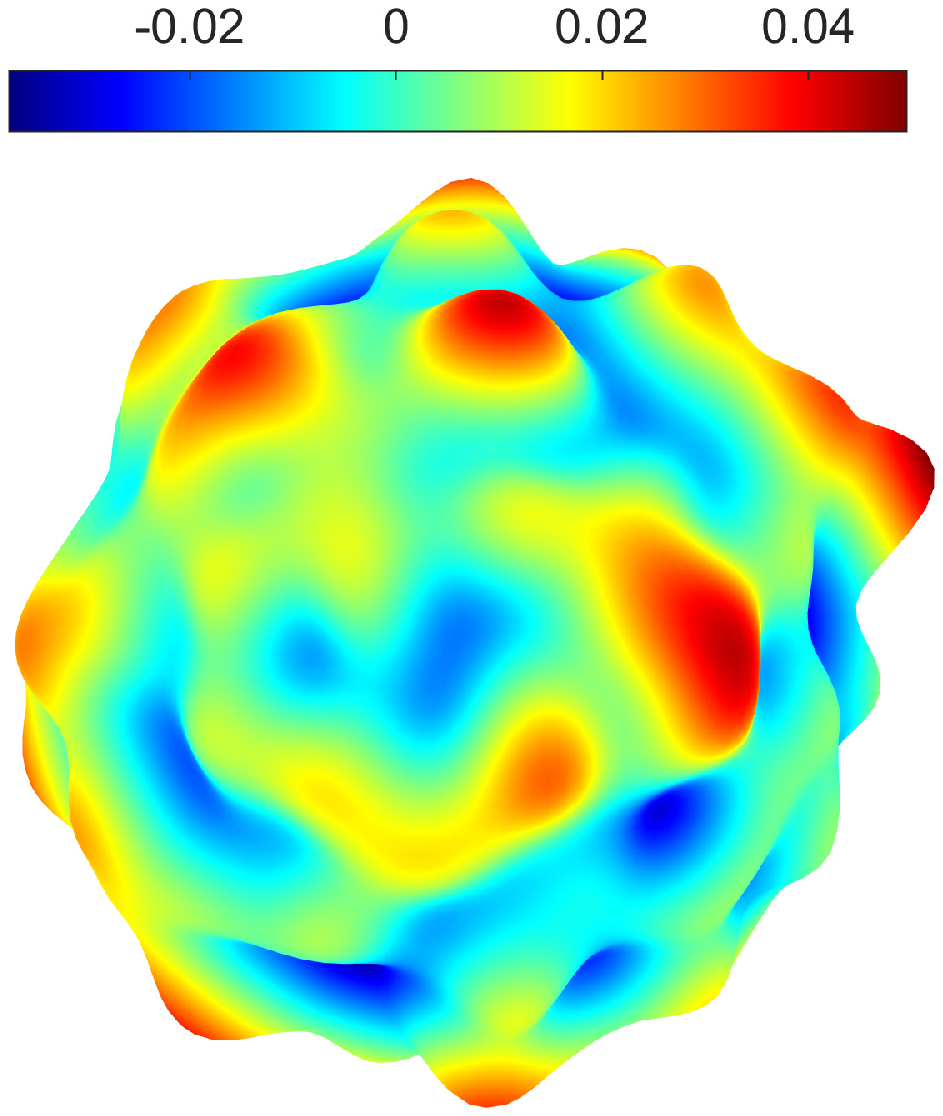}}
    \subfigure[DWRLS(50)]{\includegraphics[width=2.15cm,height=2.5cm]{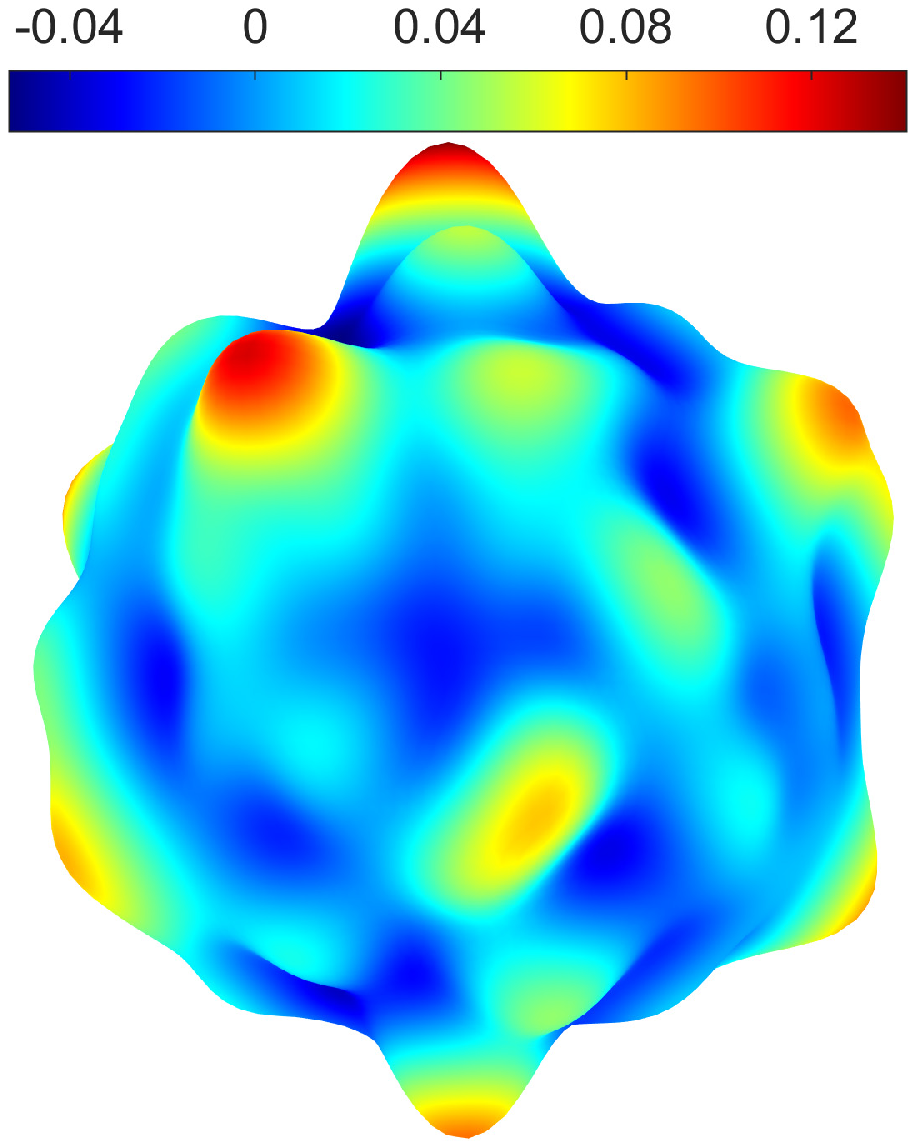}}
    \subfigure[DWRLS(100)]{\includegraphics[width=2.15cm,height=2.5cm]{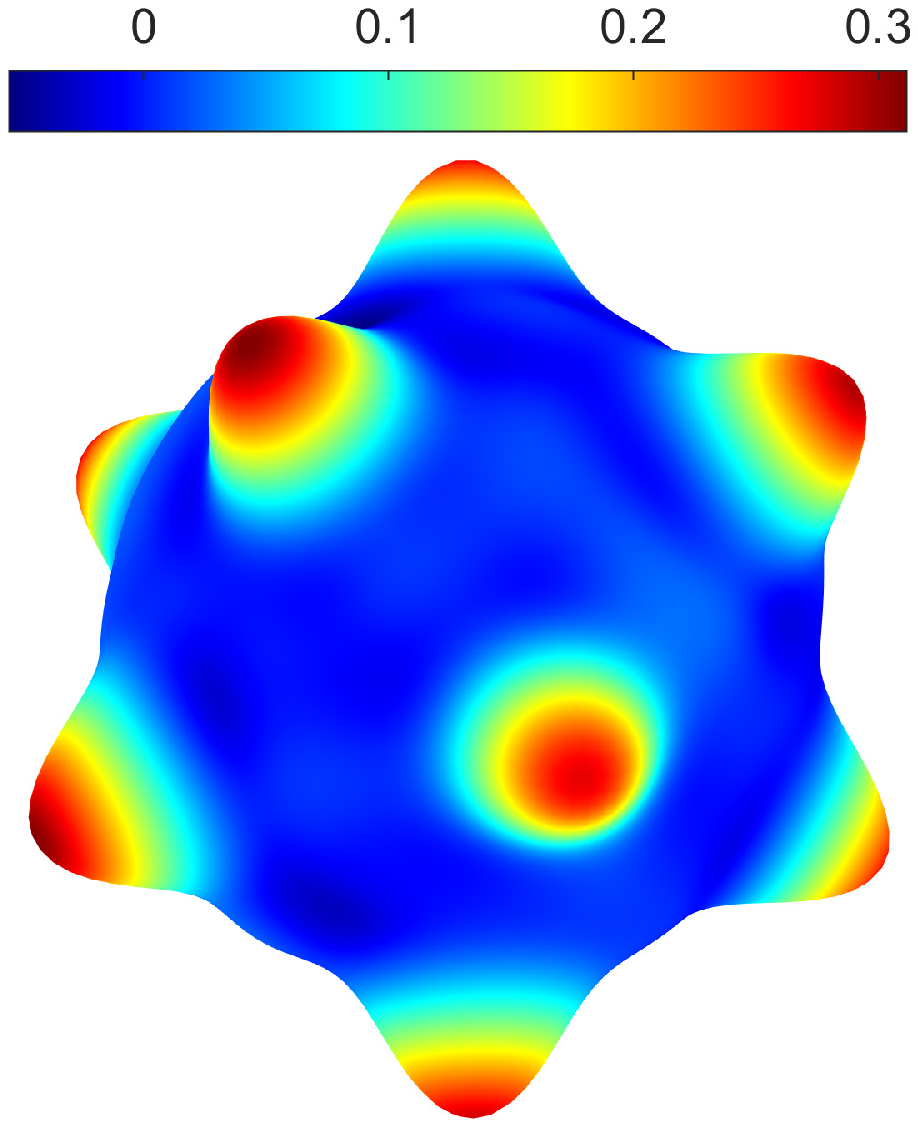}}
	\caption{Comparison on visualization of approximation results of $f_1$}\label{Comparisonf1}
\end{figure*}

\begin{figure*}[t]
    \centering
    \subfigcapskip=-1pt
    \subfigure{\includegraphics[width=2.15cm,height=2.5cm]{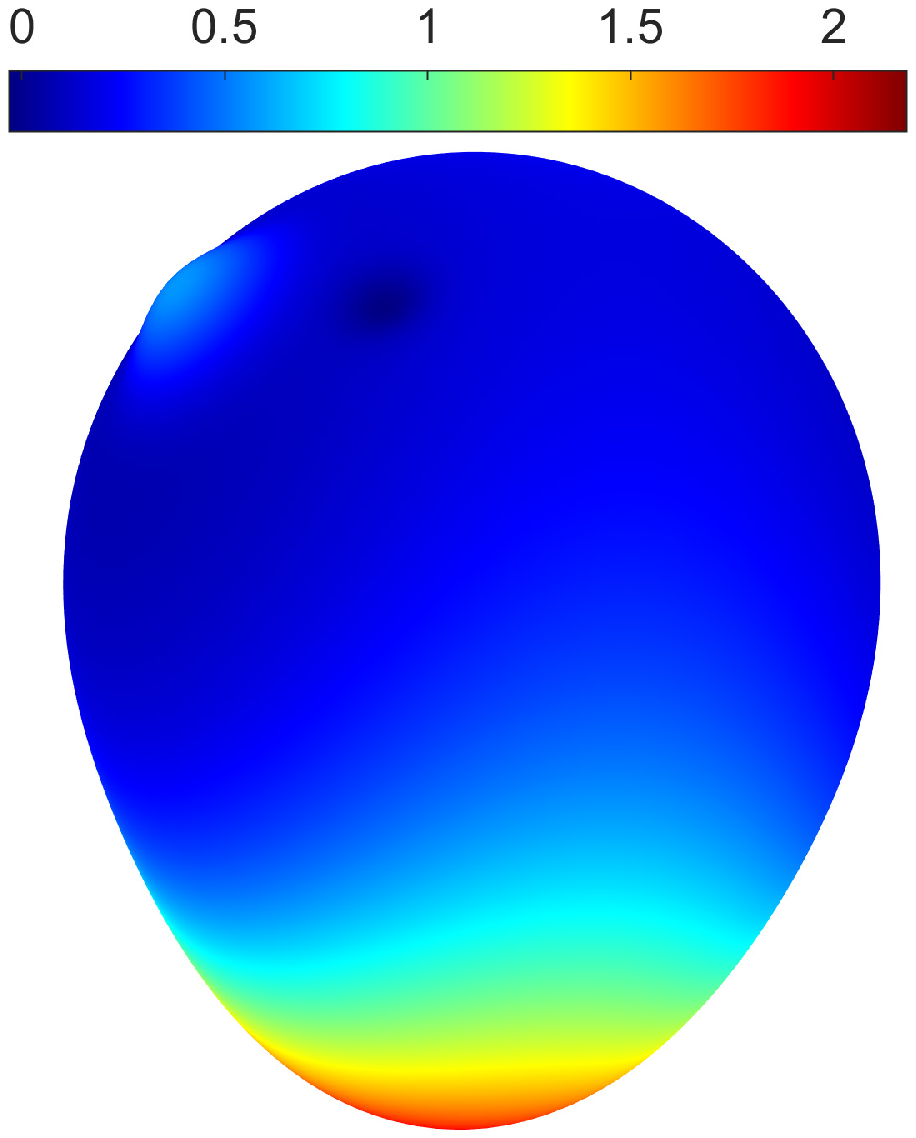}}
    \subfigure{\includegraphics[width=2.15cm,height=2.5cm]{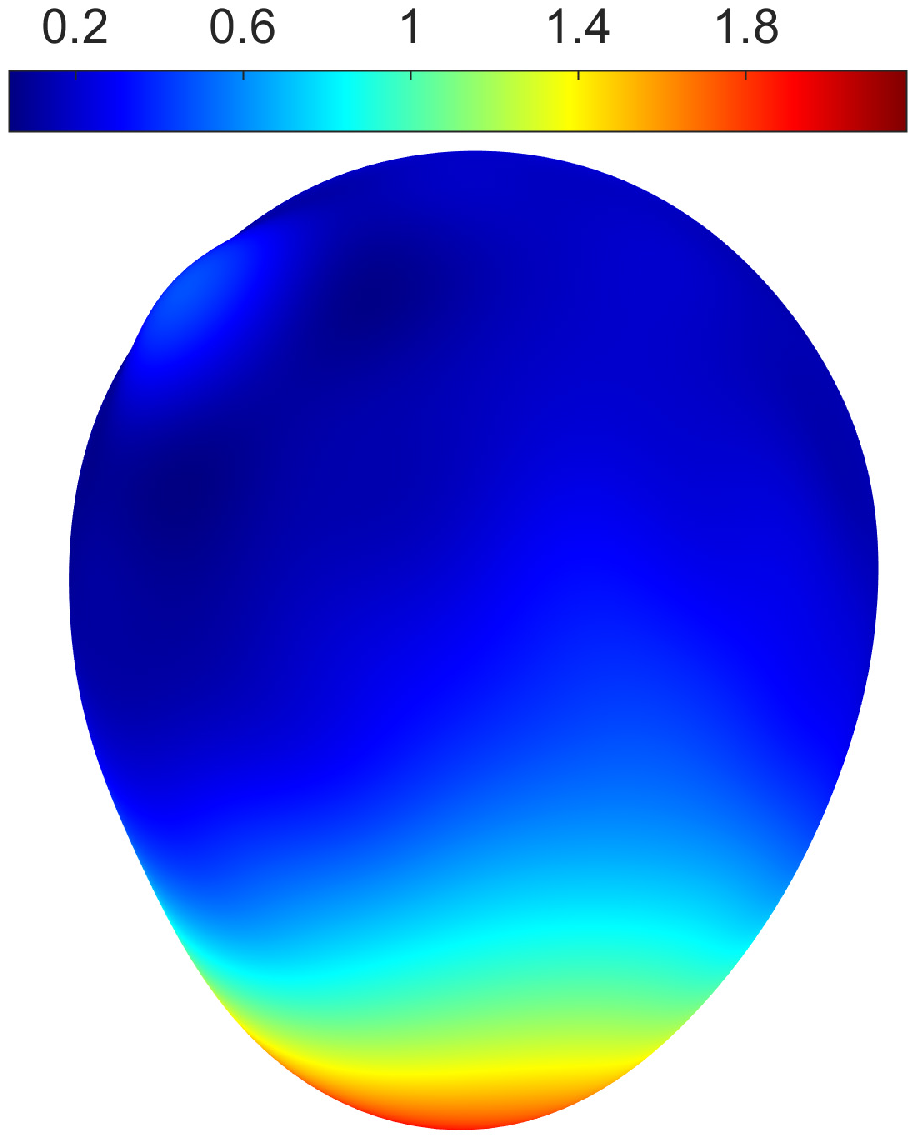}}
    \subfigure{\includegraphics[width=2.15cm,height=2.5cm]{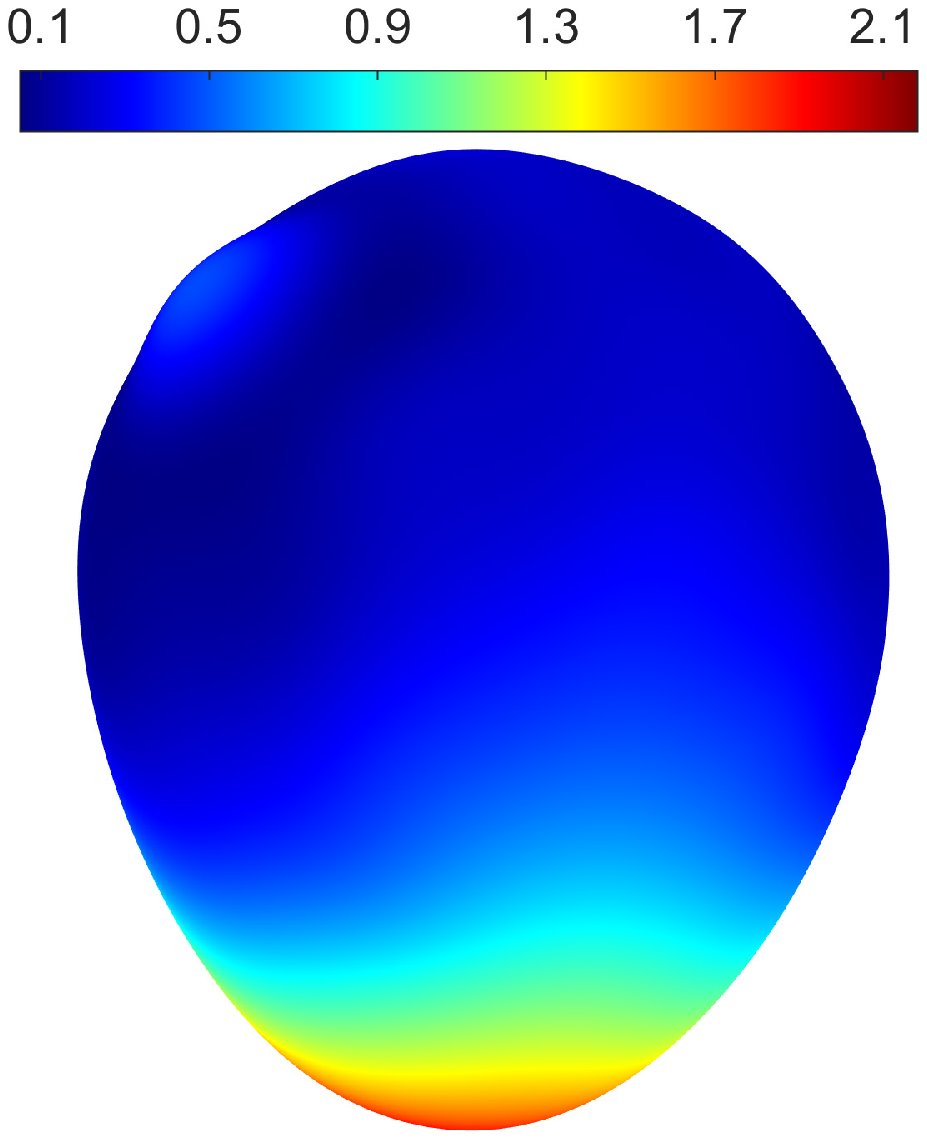}}
    \subfigure{\includegraphics[width=2.15cm,height=2.5cm]{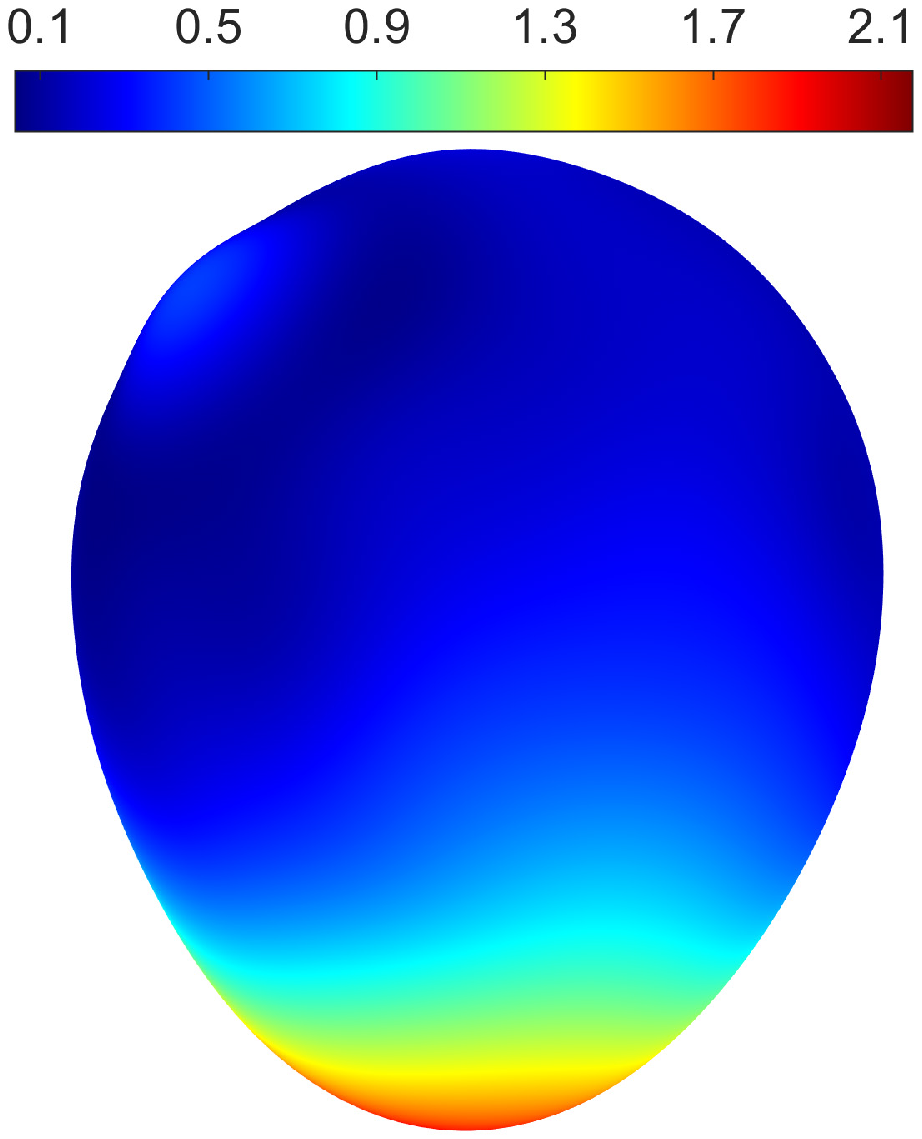}}
    \subfigure{\includegraphics[width=2.15cm,height=2.5cm]{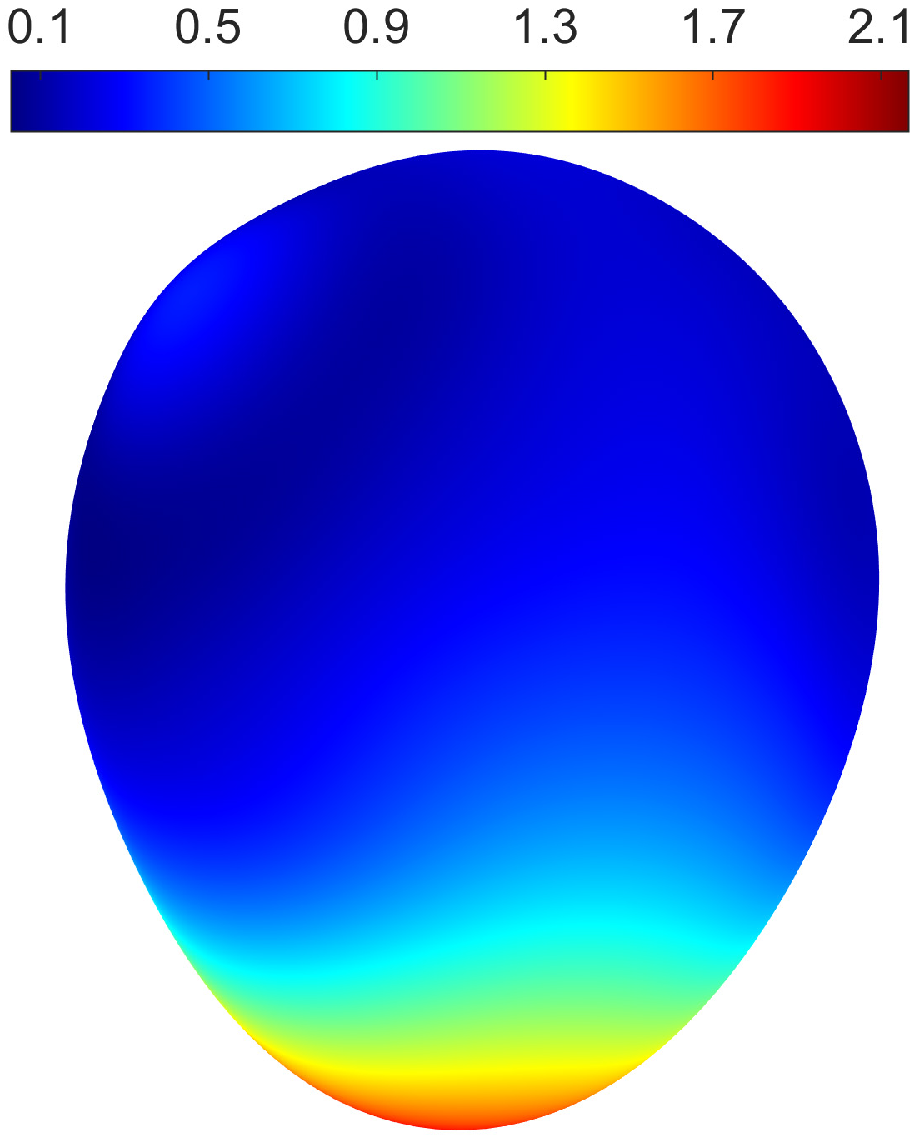}}\\
    \vspace{-0.08in}
    \setcounter{subfigure}{0}
    \subfigure[Groundtruth]{\includegraphics[width=2.15cm,height=2.5cm]{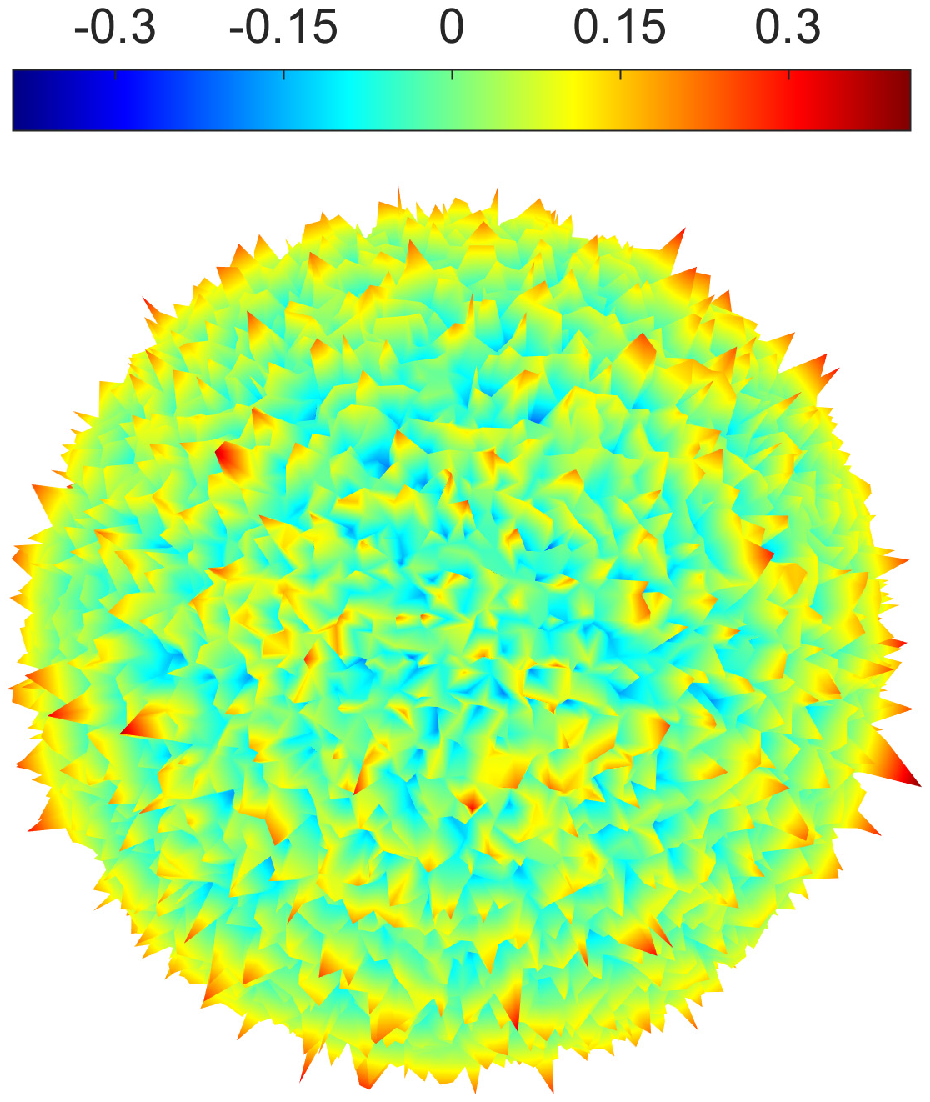}}
    \subfigure[DFH]{\includegraphics[width=2.15cm,height=2.5cm]{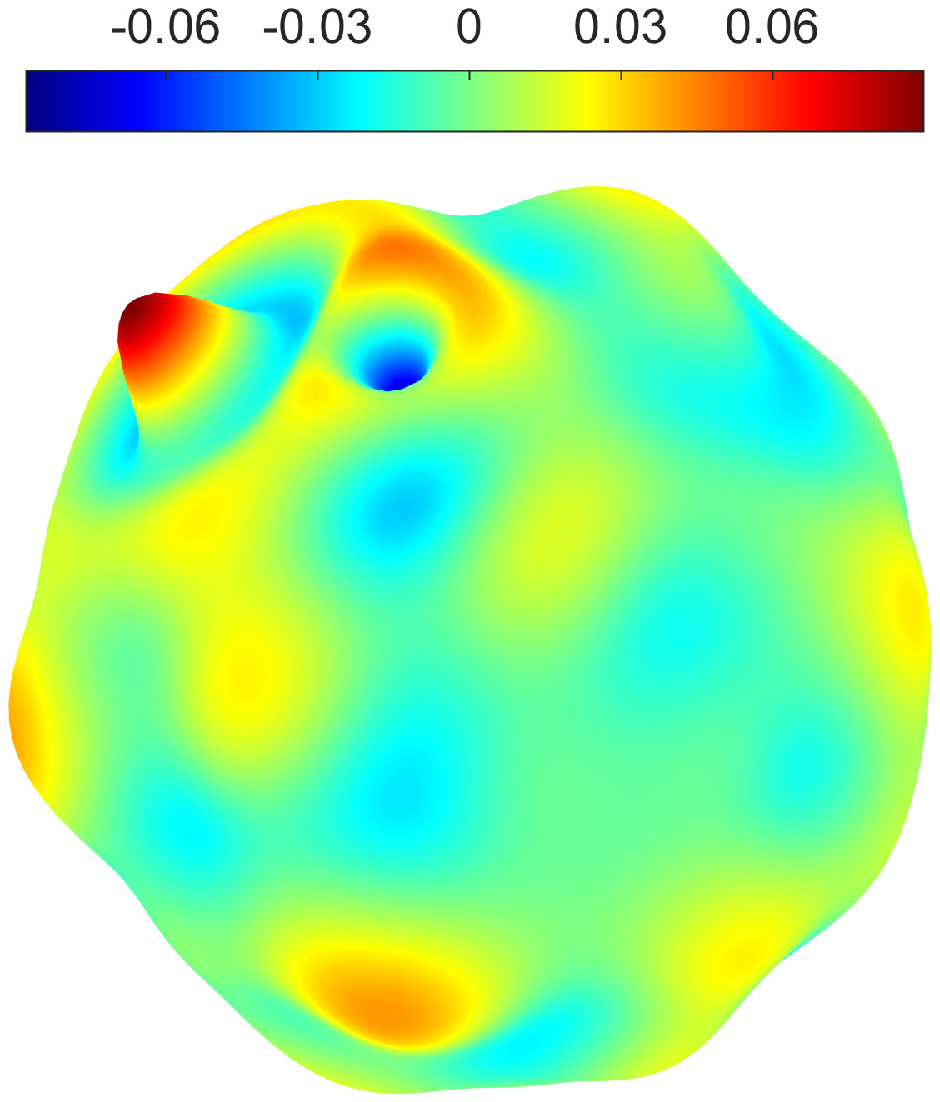}}
    \subfigure[DWRLS(10)]{\includegraphics[width=2.15cm,height=2.5cm]{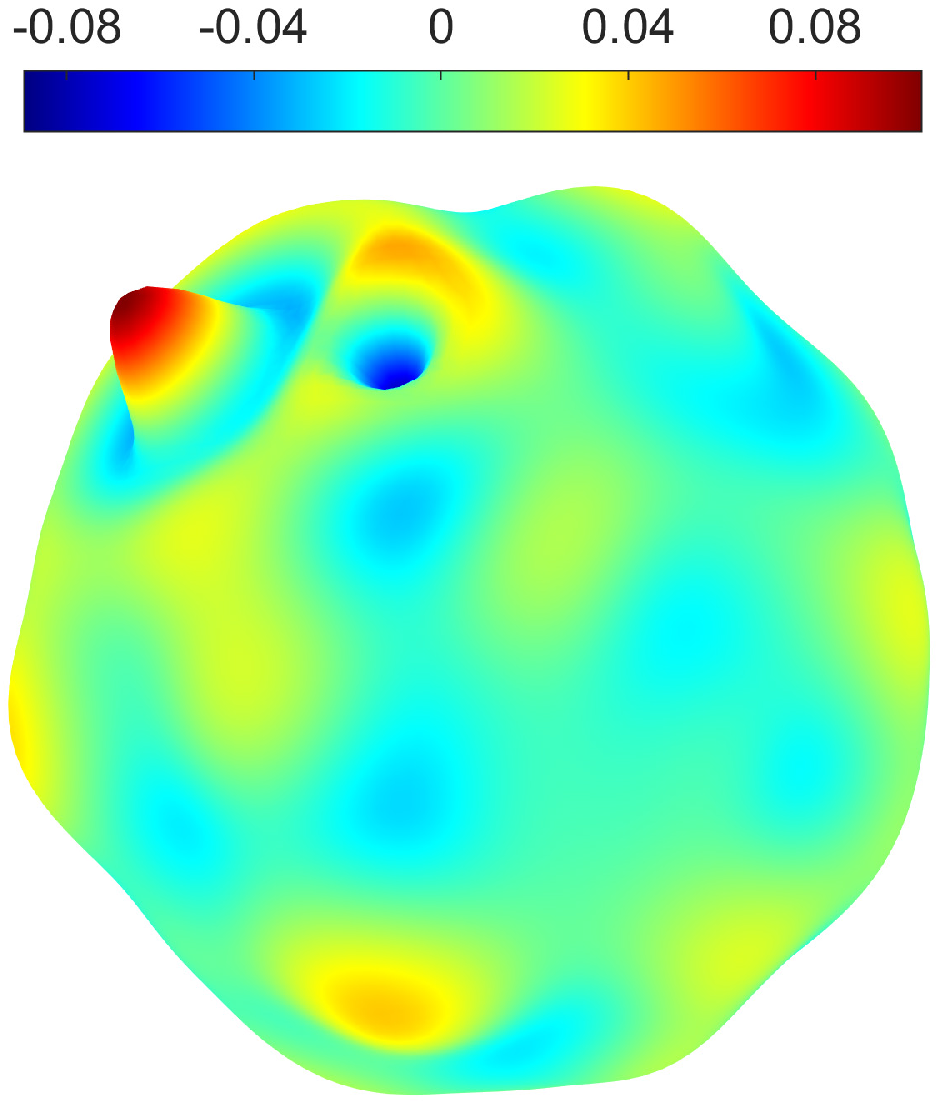}}
    \subfigure[DWRLS(50)]{\includegraphics[width=2.15cm,height=2.5cm]{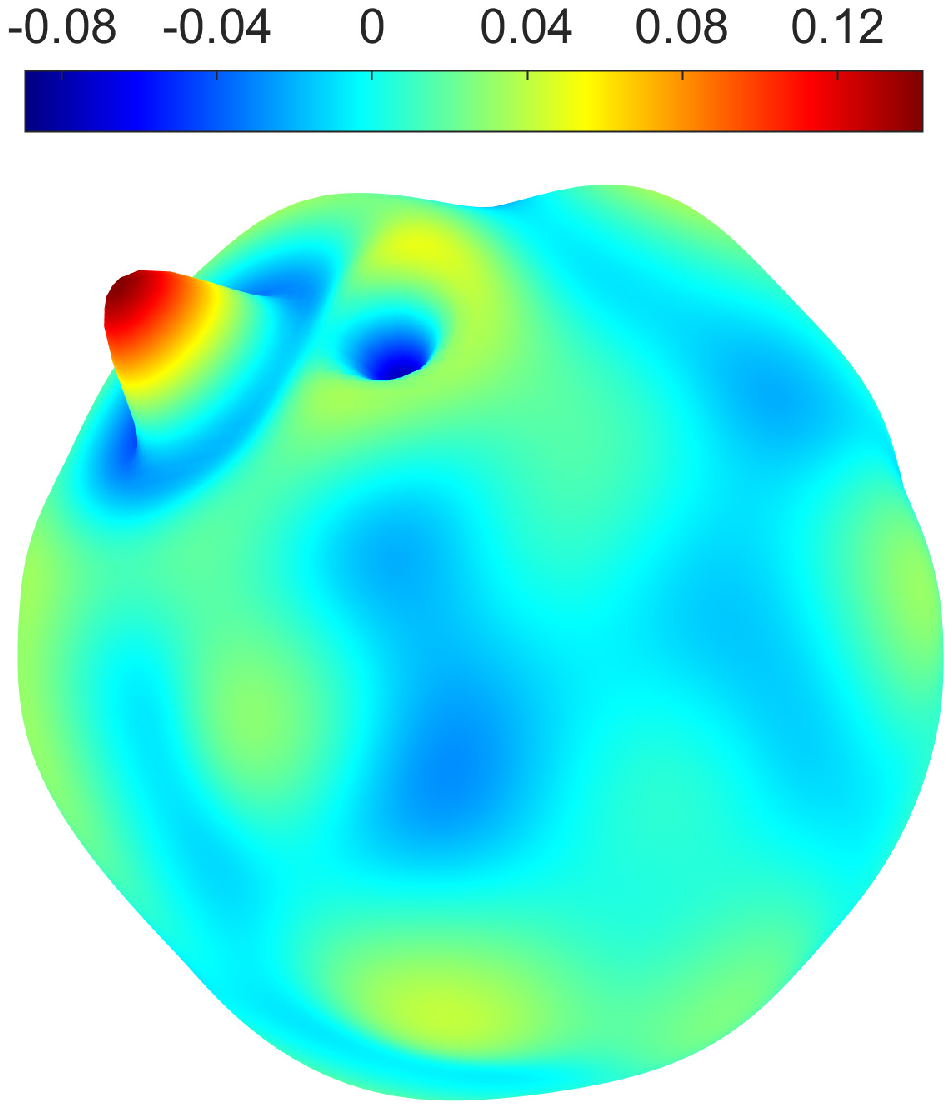}}
    \subfigure[DWRLS(100)]{\includegraphics[width=2.15cm,height=2.5cm]{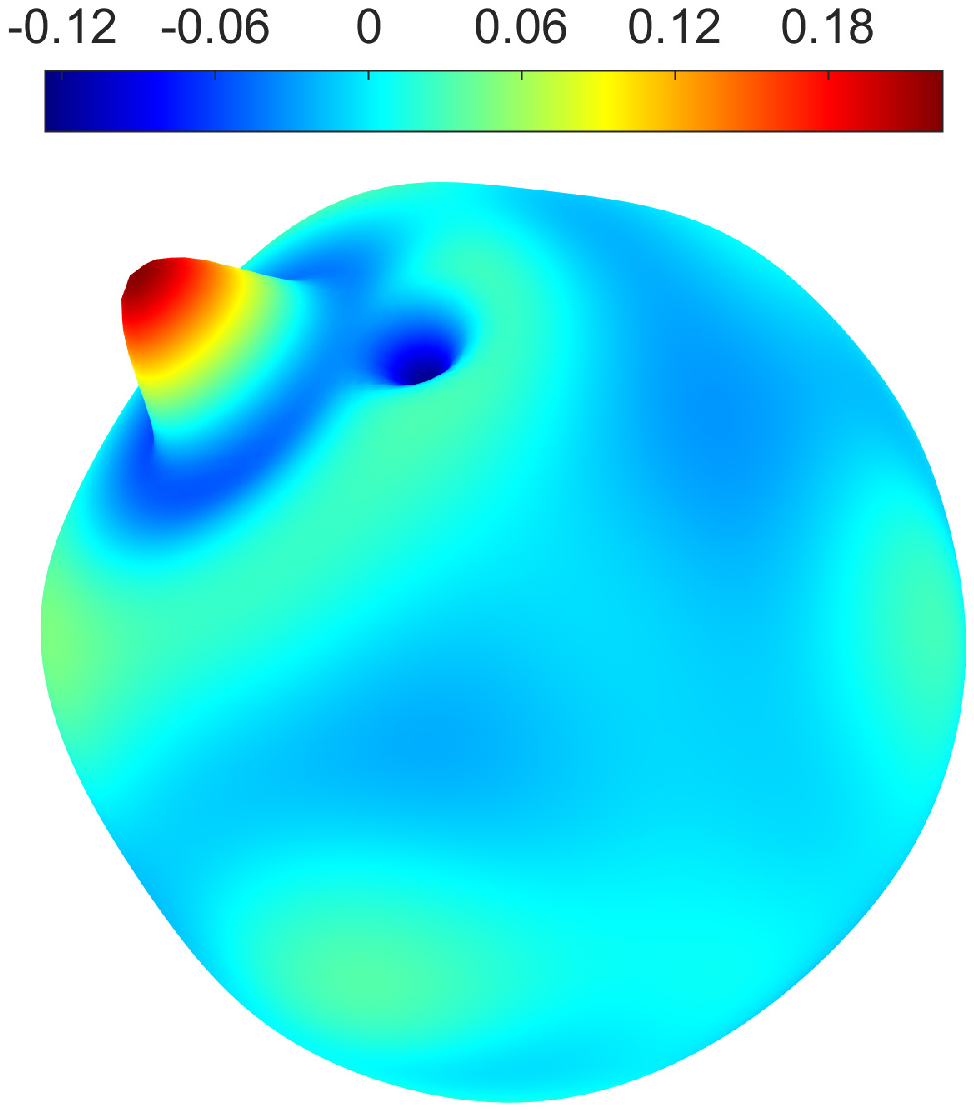}}
	\caption{Comparison on visualization of approximation results of $f_2$}\label{Comparisonf2}
\end{figure*}

\section*{Appendix: Proofs of Proposition \ref{Proposition:3} and Proposition \ref{Proposition:operator-KRR}}

To prove Proposition \ref{Proposition:3}, we need the following spherical Marcinkiewicz-Zygmund inequalities for spherical polynomials \cite{Dai2006,Dai2006a}.
\begin{lemma}\label{Lemma:1}
Let $\mathcal Q_{\Lambda,s}:=\{(w_{i,s},  x_i): w_{i,s}\geq 0
\hbox{~and~}   x_i\in \Lambda\}$  be  a positive
 quadrature rule   on $\mathbb S^d$ with degree $s\in\mathbb N$.  For   any  $P\in \Pi_{s'}^d$ with $s'\in\mathbb N$, there holds
\begin{eqnarray*}
   \sum_{x_i\in\Lambda}w_{i,s} |P(x_i)|^2  &\leq& \tilde{c}_1(s'/s)^{d} \|P\|^2_{L^2(\mathbb S^d)},\qquad s'>s,\\
 \tilde{c}_2\|P\|_{L^2(\mathbb S^d)}^2 &\leq& \sum_{x_i\in\Lambda}w_{i,s} |P(x_i)|^2 \leq \tilde{c}_3\|P\|_{L^2(\mathbb S^d)}^2,\qquad  s'\leq s,
\end{eqnarray*}
where $\tilde{c}_1,\tilde{c}_2,\tilde{c}_3$ are constants depending only on $d$.
\end{lemma}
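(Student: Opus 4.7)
The plan is to prove this classical Marcinkiewicz--Zygmund inequality using three ingredients on $\mathbb{S}^d$: (i) a Nikolskii-type pointwise estimate $|P(x)|^2 \leq c (s')^d \int_{B(x, 1/s')} |P(y)|^2 d\omega(y)$ for every $P \in \Pi_{s'}^d$, which bounds pointwise values by a local $L^2$ average; (ii) the weight bound $w_{i,s} \leq c_1 |D|^{-1}$ combined with the fact that the existence of a positive degree-$s$ quadrature on a quasi-uniform set forces $|D| \gtrsim s^d$, so that $w_{i,s} \lesssim s^{-d}$; and (iii) the exactness $\sum_i w_{i,s} Q(x_i) = \int_{\mathbb S^d} Q\, d\omega$ for every $Q \in \Pi_s^d$.

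For the upper bound when $s' > s$, I would apply (i) at each node and swap summation with integration:
\[
\sum_{x_i \in \Lambda} w_{i,s} |P(x_i)|^2 \leq c\, s^{-d} (s')^d \int_{\mathbb S^d} |P(y)|^2 \cdot \#\bigl\{i : y \in B(x_i, 1/s')\bigr\}\, d\omega(y).
\]
A standard packing argument on $\mathbb S^d$, using quasi-uniformity and the separation lower bound $q_\Lambda \gtrsim 1/s$, shows that the overlap multiplicity is bounded by a constant depending only on $d$ and the mesh ratio, yielding the desired $(s'/s)^d$ factor. For $s' \leq s$, I would split on whether $2s' \leq s$ or $s/2 < s' \leq s$: in the first subcase $|P|^2 \in \Pi_{2s'}^d \subseteq \Pi_s^d$ and (iii) yields the equality $\sum_i w_{i,s} |P(x_i)|^2 = \|P\|_{L^2(\mathbb S^d)}^2$, giving both the upper and lower bounds with constants equal to $1$; the upper bound in the subcase $s/2 < s' \leq s$ follows from the $s' > s$ argument after inflating $s'$ by a constant factor, which is absorbed into $\tilde{c}_3$.

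The main obstacle is the lower bound in the remaining regime $s/2 < s' \leq s$, where the quadrature is not exact on $|P|^2 \in \Pi_{2s'}^d$. Here I would approximate $|P|^2$ by a polynomial $Q \in \Pi_s^d$ via a de la Vall\'ee--Poussin operator that preserves $\int |P|^2 d\omega$, apply exactness to $Q$, and control the discrete residual $\sum_i w_{i,s}(|P(x_i)|^2 - Q(x_i))$ using $\sum_i w_{i,s} = \Omega_d$ together with uniform polynomial approximation bounds on $\mathbb S^d$. This residual estimate is tightly linked to the Christoffel-function equivalence $w_{i,s} \sim \lambda_s(x_i) \sim s^{-d}$ on quasi-uniform sets, which is the heart of the spherical Marcinkiewicz--Zygmund theory developed in \cite{Dai2006, Dai2006a} and is what one ultimately invokes to close the lower bound.
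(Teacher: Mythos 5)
The paper itself offers no proof of this lemma --- it is quoted from \cite{Dai2006,Dai2006a} as a known spherical Marcinkiewicz--Zygmund inequality --- so your proposal has to be measured against the literature rather than an in-paper argument. Your upper bounds are essentially sound. The cleanest packaging is to observe that for a positive rule exact on $\Pi_s^d$ the total weight $\sum_{x_i\in B(y,r)}w_{i,s}$ of any cap of radius $r\geq 1/s$ is $\lesssim r^d$ (test the rule against a nonnegative localized polynomial of degree $s$ that is bounded below on the cap and has integral $\lesssim r^d$); combined with your Nikolskii step this yields $(s'/s)^d$ for $s'>s$ and a constant for $s'\leq s$, without invoking quasi-uniformity or the bound $w_{i,s}\leq c_1|D|^{-1}$, neither of which is a hypothesis of the lemma. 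Your version, which multiplies the individual weight bound $w_{i,s}\lesssim s^{-d}$ by a node count in $B(x_i,1/s')$, only gives constant overlap when $q_\Lambda\gtrsim 1/s'$; when the nodes are much denser than $1/s'$ the multiplicity grows and you must fall back on $w_{i,s}\lesssim|D|^{-1}$ to compensate. That is repairable bookkeeping, not a real gap.

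The lower bound in the regime $s/2<s'\leq s$ is a genuine gap, and your proposed repair does not close it. The de la Vall\'ee--Poussin residual is not small: $|P|^2$ has degree up to $2s$, and its best uniform approximation from $\Pi_s^d$ is in general only $O(\||P|^2\|_\infty)=O((s')^d\|P\|_{L^2(\mathbb S^d)}^2)$ by Nikolskii, which swamps the $\|P\|_{L^2(\mathbb S^d)}^2$ margin you need. Worse, no argument from positivity and exactness alone can succeed here, even granting the Christoffel-function equivalence $w_{i,s}\sim s^{-d}$ and perfectly uniform nodes: on $\mathbb S^1$ the $s+1$ equally spaced points with equal weights $2\pi/(s+1)$ are exact on $\Pi_s^1$, yet for $s$ odd the polynomial $P(\theta)=\sin\bigl(\tfrac{s+1}{2}\theta\bigr)\in\Pi_{(s+1)/2}^1$ vanishes at every node while $\|P\|_{L^2}>0$, so the left-hand inequality fails for $s'=(s+1)/2\in(s/2,s]$. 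The missing hypothesis is the one \cite{Dai2006,Dai2006a} and \cite{Mhaskar2001,Brown2005} actually work under: the mesh norm satisfies $h_\Lambda\leq\delta/s$ for a small $\delta$ depending only on $d$, so the nodes are dense at scale $1/s'$ for every $s'\leq s$. This condition is built into the quadrature rules of Lemma \ref{Lemma:fixed cubature} but is not implied by the bare statement of the present lemma; under it, the lower bound is proved by the Bernstein-type perturbation argument of \cite{Mhaskar2001}, comparing $w_{i,s}|P(x_i)|^2$ with $\int_{R_i}|P|^2$ over a partition $\{R_i\}$ subordinate to $\Lambda$, rather than by integral-preserving polynomial approximation of $|P|^2$.
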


\begin{proof}[Proof of Proposition \ref{Proposition:3}]
For any $h\in\mathcal N_\phi$, we get $\eta_{\lambda,u}( L_\phi)h\in\mathcal N_\phi$.
Set  $h_{-1,\lambda,u}(x)=0$ and
$$
   h_{j,\lambda,u}(x):=\sum_{k=0}^{2^j}\eta_{\lambda,u}(\hat{\phi}_k)\sum_{\ell=1}^{Z(d,k)}\hat{h}_{k,\ell}Y_{k,\ell}(x).
$$
We have from $\eta_{\lambda,u}(t)=(\lambda+t)^{-u}$ and  $\hat \phi_k\sim k^{-2\gamma}$ with $\gamma> d/2$  that
\begin{equation}\label{convergence}
    \eta_{\lambda,u}( L_\phi)h=\sum_{j=0}^\infty h_{j,\lambda,u}(x)-h_{j-1,\lambda,u}(x)
\end{equation}
 and
\begin{eqnarray}\label{Jackson-for-h}
    \|h_{j,\lambda,u}-h_{j-1,\lambda,u}\|_{L^2(\mathbb S^d)}
    &\leq&
     \|h_{j,\lambda,u}-\eta_{\lambda,u}( L_\phi)h\|_{L^2(\mathbb S^d)}
      +
      \|h_{j,\lambda,u}-\eta_{\lambda,u}( L_\phi)h\|_{L^2(\mathbb S^d)}   \nonumber\\
      &\leq&
      2\left( \sum_{k=2^{j-1}+1}^\infty \frac{\hat{\phi}_k}{(\hat{\phi}_k+\lambda)^u}(\hat{\phi_k})^{-1}\sum_{\ell=1}^{Z(d,k)}(\hat{h}_{k,\ell})^2\right)^{1/2}\nonumber\\
      &\leq&
      \tilde{c}_4 \frac{2^{-\gamma j}}{(2^{-2\gamma j}+\lambda)^u}\|h\|_\phi,
\end{eqnarray}
where $\tilde{c}_4$ is a constant depending only on $\gamma$. For any $f,g\in\mathcal N_\phi$ and $u,v\in[0,1]$, we get from
  \eqref{convergence} that
\begin{eqnarray*}
    (\eta_{\lambda,u}( L_\phi)f)(x)(\eta_{\lambda,v}(L_\phi) g)(x)&=&
    \sum_{j+ \ell\leq L}(f_{j,\lambda,u}-f_{j-1,\lambda,u})(g_{\ell,\lambda,v}-g_{\ell-1,\lambda,v})\\
    &+&\sum_{j+ \ell> L}(f_{j,\lambda,u}-f_{j-1,\lambda,u})(g_{\ell,\lambda,v}-g_{\ell-1,\lambda,v}),
\end{eqnarray*}
where
  $L$ is the unique integer satisfying
\begin{equation}\label{Def.L}
      2^L\leq s<2^{L+1}
\end{equation}
and $f_{j,\lambda,u}$, $g_{\ell,\lambda,v}$ are similar as $h_{j,\lambda,u}$ with $h$ being replacing by $f$, $g$, respectively.
Denote
\begin{eqnarray*}
      \mathcal A_{j,\ell,\lambda,u,v}
      &:=&
      \left| \int_{\mathbb S^d}\left(f_{j,\lambda,u}(x)-f_{j-1,\lambda,u}(x)\right)\left(g_{\ell,\lambda,v}(x)-g_{\ell-1,\lambda,v}(x)\right) d \omega(x)\right.\\
      &-& \left.\sum_{x_i\in \Xi} w_{i,s}\left(f_{j,\lambda,u}(x_i)-f_{j-1,\lambda,u}(x_i)\right)\left(g_{\ell,\lambda,v}(x_i)-g_{\ell-1,\lambda,v}(x_i)\right)\right|.
\end{eqnarray*}
Since $\mathcal Q_{\Xi,s} =\{(w_{i,s},  x_i): w_{i,s}> 0
\hbox{~and~}   x_i\in \Xi\}$  is  a positive
 quadrature rule   on $\mathbb S^d$ with degree $s\in\mathbb N$, we have
 $$
 \mathcal A_{j,\ell,\lambda,u,v}=0,\qquad \forall 2^{j+ \ell}\leq 2^L\leq s.
$$
Hence,
\begin{eqnarray*}
  &&\left|\int_{\mathbb{S}^{d}} (\eta_{\lambda,u}(  L_\phi)f)(x)(\eta_{\lambda,v}(  L_\phi)g)  (x) d \omega(x)-\sum_{x_i\in\Xi} w_{i, s} (\eta_{\lambda,u}(  L_\phi)f)(x_{i})(\eta_{\lambda,v}(\mathcal L_\phi)g)(x_i)\right|\nonumber\\
  &\leq&  \sum_{j+ \ell >  L}  \mathcal A_{j,\ell,\lambda,u,v}
   =  \left(\sum_{j+ \ell> L, j, \ell\leq  L}+\sum_{  j> L, \ell\leq  L}+\sum_{ j\leq  L, \ell>  L}+\sum_{ j , \ell\geq  L}\right) \mathcal A_{j,\ell,\lambda,u,v} \nonumber\\
  &=:&I_1+I_2+I_3+I_4.
\end{eqnarray*}
But the H\"{o}lder inequality implies
\begin{eqnarray*}
    && \mathcal A_{j,\ell,\lambda,u,v} \leq \|f_{j,\lambda,u}-f_{j-1,\lambda,u}\|_{L^2(\mathbb S^d)} \|g_{\ell,\lambda,v}-g_{\ell-1,\lambda,v}\|_{L^2(\mathbb S^d)}\\
    &+&
    \left(\sum_{x_i\in \Lambda}w_{i,s}|(f_{j,\lambda,u}(x_i)-f_{j-1,\lambda,u}(x_i)|^2\right)^{1/2}\left(\sum_{x_i\in \Lambda}w_{i,s}|(g_{\ell,\lambda,v}(x_i)-g_{\ell-1,\lambda,v}(x_i)|^2\right)^{1/2}.
\end{eqnarray*}
Then we get from Lemma \ref{Lemma:1} and \eqref{Jackson-for-h} that
\begin{eqnarray*}
     &&\mathcal A_{j,\ell,\lambda,u,v}
     \leq
    \tilde{c}_1^2 \frac{2^{-\gamma (j+\ell)}}{ (2^{-2\gamma j}+\lambda)^u(2^{-2\gamma \ell}+\lambda)^v}  \|f\|_\phi\|g\|_\phi\\
     &+&
     \tilde{c}_2 \left\{\begin{array}{cc}
        \frac{2^{-\gamma (j+\ell)}}{(2^{-2\gamma j}+\lambda)^u(2^{-2\gamma \ell}+\lambda)^v} \|f\|_\phi\|g\|_\phi,& \mbox{if}\ j+ \ell > L, j\leq L, \ell\leq L,\\
     2^{(j-L)d/2}  \frac{2^{-\gamma (j+\ell)}}{(2^{-2\gamma j}+\lambda)^u(2^{-2\gamma \ell}+\lambda)^v} \|f\|_\phi\|g\|_\phi,& \mbox{if}\ j>L, \ell\leq L,\\
      2^{( \ell-L)d/2} \frac{2^{-\gamma (j+\ell)}}{(2^{-2\gamma j}+\lambda)^u(2^{-2\gamma \ell}+\lambda)^v} \|f\|_\phi\|g\|_\phi,&\mbox{if}\ j\leq L, \ell>L,\\
      2^{(j+ \ell-2L)d/2} \frac{2^{-\gamma (j+\ell)}}{(2^{-2\gamma j}+\lambda)^u(2^{-2\gamma \ell}+\lambda)^v}\|f\|_\phi\|g\|_\phi,&\mbox{if}\ j> L,\ell> L,
     \end{array}
     \right.
\end{eqnarray*}
where $\tilde{c}_1,\tilde{c}_2$ are constants depending only on $d,\gamma$. Since $u+v\leq 1$, we have for any $\lambda>0$ that
\begin{eqnarray*}
   &&I_1= \sum_{j+ \ell>L, j, \ell\leq L} \mathcal A_{j,\ell,u,v} \leq
       \tilde{c}_3
       \sum_{j+ \ell>L, j, \ell\leq L}2^{ -(j+\ell)(1-u-v)\gamma}\|f\|_\phi\|g\|_\phi\\
      &\leq& \tilde{c}_3  \|f\|_\phi\|g\|_\phi
      \sum_{k=L+1}^{2L}2^{ -k(1-u-v)}
       \leq
      \tilde{c}_{4,1}\|f\|_\phi\|g\|_\phi
      \left\{\begin{array}{cc}
        2^{-L(1-u-v)\gamma},   & u+v<1 \\
        L   & u+v=1,
      \end{array}
      \right.
\end{eqnarray*}
where $c_3$ and $\bar{c}_{4,1}$ are constants depending only on $u,v,\gamma,d$.
Furthermore, for any $u,v\in[0,1)$ satisfying $u+v\leq 1$,   and any $\lambda\geq 2^{-2\gamma L}$, we have   from $\gamma>d/2$ that
\begin{eqnarray*}
     I_2
           &\leq&
          \tilde{c}_{4,2}\|f\|_\phi\|g\|_\phi\lambda^{-u}2^{-\gamma L}, \\
           I_3
      & \leq&
         \tilde{c}_{4,3}\|f\|_\phi\|g\|_\phi\lambda^{-v}2^{-\gamma L},\\
         I_4 &\leq&
         \tilde{c}_{4,4}\|f\|_\phi\|g\|_\phi
         \lambda^{-(u+v)}2^{-2\gamma L},
\end{eqnarray*}
where $\tilde{c}_{4,j}$ for $j=2,3,4$ are constants depending only on $\gamma,d,u,v$.
   Combining all the above estimate, we obtain the desired estimate with $\tilde{c}:= \bar{c}_{3,1}+\bar{c}_{3,2}+\bar{c}_{3,3}+\bar{c}_{3,4}$.
This completes the proof of Proposition \ref{Proposition:3}.
\end{proof}

\begin{proof}[Proof of Proposition \ref{Proposition:operator-KRR}]
Since the functional derivative \cite{Smale2005} of (\ref{WRLS}) is
$$
     2\sum_{i=1}^{|D|} w_{i,s}(f(x_i)-y_i)\phi_{x_i}+2\lambda f,
$$
we get
$$
      \sum_{i=1}^{|D|} w_{i,s}(y_i-f_{D,W_s,\lambda}(x_i))\phi_{x_i}=\lambda f_{D,W_s,\lambda}.
$$
This together with (\ref{def.empirical-operator}) and (\ref{def.adjoint-operator}) implies
$$
         (L_{\phi,D,W_s}+\lambda I)f_{D,W_s,\lambda}=S_{D,W_s}^Ty_D
$$
Noting further $L_{\phi,D,W_s}$ is a positive operator, we then have
$$
      f_{D,W_s,\lambda}=(L_{\phi,D,W_s}+\lambda I)^{-1}S_{D,W_s}^Ty_D.
$$
This completes the proof of Proposition \ref{Proposition:operator-KRR}.
\end{proof}


\section*{Acknowledge} {The authors would like to thank two anonymous referees for their constructive suggestions and Professor Di Wang from X'an Jiaotong University  for his fruitful suggestions on the numerical simulations.  The research was supported partially by the  National Key R\&D Program of China (No.2020YFA0713900).
 The work of H. Feng is  supported partially  by the Research Grants Council of Hong Kong [Project  \# CityU 11306620].
The work  of S. B. Lin is supported partially by
   the National Natural Science Foundation of China
(Nos.62276209,61876133). The first version of the paper was written when D. X. Zhou worked at City University of Hong Kong,
   supported partially by the Research Grants Council of Hong Kong [Project \# CityU 11308020, N-CityU102/20, C1013-21GF], Hong Kong Institute for Data Science,  Germany/Hong Kong Joint Research Scheme  [Project No. G-CityU101/20],
 Hong Kong Institute for Data Science, Laboratory for AI-Powered Financial Technologies, and National Science Foundation of China [Project No. 12061160462].

\end{document}